%% file: arxiv.tex
\documentclass{article} % For LaTeX2e
\PassOptionsToPackage{table}{xcolor}
\usepackage{iclr2027_conference,times}

\input{math_commands.tex}

\usepackage{hyperref}
\usepackage{url}

\usepackage{booktabs}
\usepackage{amsthm}
\theoremstyle{definition}

\newtheorem{theorem}{Theorem}
\usepackage{amsmath}
\usepackage{tikz}
\usetikzlibrary{positioning}
\usetikzlibrary{patterns}

\usepackage[table]{xcolor}
\usepackage[inline, shortlabels]{enumitem}
\usepackage{placeins}
\definecolor{c1}{HTML}{E57373} % 0.0 - 0.2 (Red)
\definecolor{c2}{HTML}{FFB74D} % 0.2 - 0.4 (Orange)
\definecolor{c3}{HTML}{FFF176} % 0.4 - 0.6 (Yellow)
\definecolor{c4}{HTML}{AED581} % 0.6 - 0.8 (Light Green)
\definecolor{c5}{HTML}{4CAF50} % 0.8 - 1.0 (Green)
\usepackage{amssymb}

\usepackage{pgfplots}
\pgfplotsset{width=10cm,compat=1.9,each nth point=1}

\usepackage{titletoc}
\usepackage{wrapfig}
\usepackage{subcaption}
 \usepackage{floatflt}
 
\newcommand{\benchmark}[0]{WOMBAT}
\newcommand{\real}[0]{\mathbb{R}}

\usepackage{multirow}

\usepackage{pifont}
\newcommand\greencheckmark {\textcolor[RGB]{13, 120, 24}{\ding{52}}}

\newcommand\redcross {\textcolor[RGB]{173, 21, 14}{\ding{55}}}

\title{WOMBAT: Whitebox Oracle for Molecular Benchmarking and Attribution Testing}

\author{Dominik Matuszek\(^{1, 2, 3}\) \\ 
\And
Bartosz Zieliński\(^{1, 3}\)\\
\And
Tomasz Danel\(^{4, *}\) \\
\And
Dawid Rymarczyk\(^{1, 5, *}\) \\
}

\footnotetext[1]{Jagiellonian University, Faculty of Mathematics and Computer Science;\(^{2}\)Jagiellonian
University, Doctoral School of Exact and Natural Sciences;\(^{3}\)Jagiellonian
University, Jagiellonian Center for Artificial Intelligence; \(^{4}\)Jagiellonian University, Faculty of Chemistry; \(^{5}\)Ardigen SA; \(^{*}\)Equal advisorship; \\Correspondence to: \texttt{tomasz.danel@uj.edu.pl}, \texttt{dawid.rymarczyk@uj.edu.pl}}

\iclrfinalcopy % Uncomment for camera-ready version, but NOT for submission.

\begin{document}

\maketitle
\lhead{Preprint.}

\begin{abstract}
    When a graph neural network (GNN) explainer produces an unexpected attribution on a molecule, the attribution alone cannot reveal whether the explainer has failed or the model has learned a shortcut. We introduce WOMBAT, a benchmark of 14 whitebox GNNs, each with message-passing weights set by hand to detect a specific SMARTS motif. Each model’s decision rule is known by construction, providing attribution ground truth against which explainer errors can be identified and studied. We validate the models on millions of PubChem molecules and evaluate post-hoc explainers including GNNExplainer, PGExplainer, and Integrated Gradients. Guided by our qualitative analysis, we construct a model that causes Integrated Gradients to spread attribution across the graph, even though the model reliably detects the intended motif. We release the dataset, models, and evaluation code to help researchers in the development of newer XAI tools for GNNs.

\end{abstract}

\section{Introduction}

Graph Neural Networks (GNNs) have emerged as the foundational architecture for modern cheminformatics, enabling progress in molecular property prediction, virtual screening, and the design of novel therapeutics~\citep{zhang2025graph}. As such, understanding GNN predictions is critical to ensure models rely on true chemical principles rather than spurious shortcuts~\citep{proszewska2025b, wang2022extending}. This is why post-hoc XAI methods~\citep{ying2019gnnexplainer, sundararajan2017axiomatic, luo2020parameterized} are used to identify important graph features. However, verifying their reliability is currently limited by a lack of ground truth explanations for real-world molecular data.

Two common evaluation approaches use task-defined motifs as attribution ground truth~\citep{ying2019gnnexplainer, proszewska2025b} or perturbation-based metrics such as fidelity~\citep{pope2019explainability, zheng2024towards}. The former assumes that the model uses the intended motif~\citep{comparingtogroundtruthfaber}; the latter may evaluate predictions on out-of-distribution inputs~\citep{hooker2019benchmark}.

\FloatBarrier
\begin{figure}[t]
    \centering
    \includegraphics[width=\linewidth]{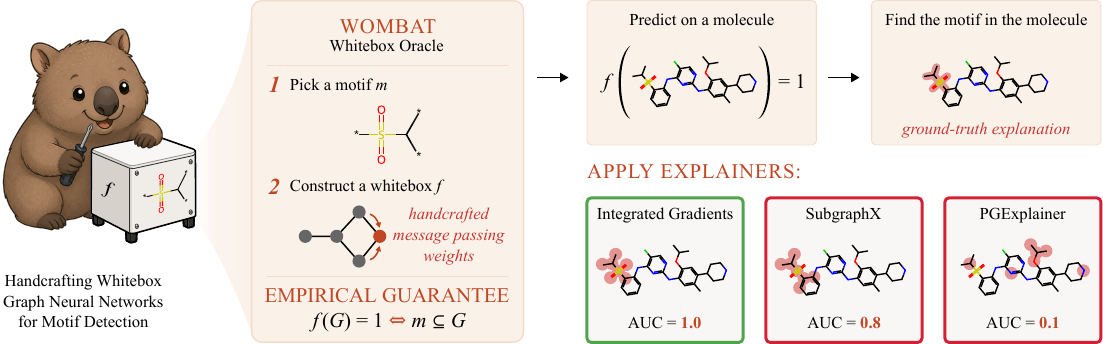}
    \caption{Overview of WOMBAT. For each motif $m$, we handcraft a whitebox GNN $f$ that detects this motif in the input graph $G$ with $f(G)\!=\!1 \iff m\subseteq G$. This yields a ground truth explanation against which post-hoc explainers can be benchmarked.}
    \label{fig:teaser}
\end{figure}

We introduce WOMBAT (Whitebox Oracle for Molecular Benchmarking and Attribution Testing) as a different way of approaching this problem (Figure~\ref{fig:teaser}). WOMBAT consists of 14 manually engineered Whitebox GNNs, each designed to detect a specific, chemically meaningful motif from a subset of the 4860 SMARTS patterns identified in ~\citep{klekota2008chemical}. By manually setting the weights of the message, aggregation, and update functions, we create whitebox models that do not require training yet achieve 100\% accuracy on many validation tasks. While we are not the first to propose whitebox GNNs, we are -- to the best of our knowledge -- the first to combine whitebox GNNs on real molecules grounded in chemistry (Table~\ref{tab:dataset_comparison}). 

\begin{table}[t]
    \centering
    \scriptsize
    \setlength{\tabcolsep}{5.2pt}
    \caption{Comparison of graph datasets and benchmarks. WOMBAT is the only one that combines designed-to-be-correct MPNNs (to eliminate confusion in explanation evaluations) with real molecular structures. 
    Abbreviations of features and benchmarks are explained in the Related Work section.
    }
    \begin{tabular}{lccccccccc}
        \toprule
        \textbf{Feature} & \textbf{MUTAG} & \textbf{Benzene} & \textbf{Flu.-Carb.} & \textbf{Alk.-Carb.} & \textbf{BA-2Mot.} & \textbf{ShapeG} & \textbf{B-XAIC} & \textbf{Tttm}  & \textbf{WOMBAT} \\
        \midrule
        Real molecules & \greencheckmark & \greencheckmark & \greencheckmark & \greencheckmark & \redcross & \redcross & \greencheckmark & \redcross & \greencheckmark \\
        Neg. Expl. eval. & \redcross & \redcross & \redcross & \redcross & \redcross & \redcross & \greencheckmark & \redcross & \greencheckmark \\
        Whitebox Models & \redcross & \redcross & \redcross & \redcross & \redcross & \redcross & \redcross & \greencheckmark & \greencheckmark \\
        No. of Tasks & 1 & 1 & 1 & 1 & 1 & 1 & 7 & 3 & 14 \\
        \bottomrule
    \end{tabular}
    \label{tab:dataset_comparison}
\end{table}

This setup provides us with a ground truth for attribution. Since we define exactly which nodes and edges are used for a prediction, discrepancies in an explainer's output cannot be attributed to shortcomings of the underlying model, such as shortcut learning. In turn, WOMBAT can be used not only to benchmark explainability methods, but -- much more importantly -- as a \textbf{diagnostic tool} for improving GNN explainability methods, since any failure can be carefully studied. By applying standard explainers to these oracles, we perform a qualitative analysis which, among other findings, allows us to precisely characterize a failure mode of Integrated Gradients (IG) in WOMBAT and to construct a network that maintains its predictive reliability while causing IG's performance to drastically decline.

Because WOMBAT relies on a deliberately constructed MPNN architecture, we do not claim that the observed failure modes must occur with the same prevalence or must occur at all in other GNN architectures. Rather, WOMBAT provides controlled examples in which such behaviours can be isolated, reproduced, and studied.

In summary, our contributions are:
\begin{itemize}
    \item We develop an MPNN architecture that supports manual weight-setting, allowing us to create 14 whitebox GNNs designed for the perfectly accurate detection of complex chemical motifs. These MPNNs are meant to be used as diagnostic tools for current and future XAI methods for GNNs.
    \item We provide a dataset of millions of molecules, sampled from PubChem with a focus on hard negatives using Tversky distance to challenge both models and explainers.
    \item We perform an evaluation of GNN explainers, identifying their specific failure modes and demonstrating sensitivity of Integrated Gradients and PGExplainer to specific weight configurations.
\end{itemize}

\section{Related Work}

\paragraph{GNN Explainability Methods.}
Post-hoc explainers attempt to identify important nodes, edges, or subgraphs responsible for a model's prediction. Optimization-based methods like GNNExplainer \citep{ying2019gnnexplainer} and its parameterized successor PGExplainer \citep{luo2020parameterized} seek to find a compact subgraph that maximizes mutual information. Axiomatic approaches such as Integrated Gradients (IG) \citep{sundararajan2017axiomatic} use path integrals. Inherently interpretable models, e.g. ProtGNN~\citep{zhang2022protgnn}, ProGReST~\citep{rymarczyk2023progrest}, and PGIB~\citep{seo2023interpretable}, identify subgraphs in the form of prototypes to make an interpretable decision.  

\paragraph{Evolution of Graph XAI Benchmarks.}

Historical benchmarks relied on small, domain-specific datasets with static ground truths. MUTAG~\citep{debnath1991structure} and Benzene~\citep{mccloskey2019using} require models to detect simple functional groups (e.g., \(\mathrm{NO}_2\) or aromatic rings). These were followed by Fluoride-Carbonyl (Flu.-Carb.) and Alkane-Carbonyl (Alk.-Carb.) \citep{sanchez2020evaluating}, which introduced multi-pattern tasks. However, these datasets are limited by their scale and the ``shortcut problem'': since the models are trained on statistical correlations, there is no guarantee they utilize the intended motifs for prediction~\citep{agarwal2023evaluating}. To address these limitations, synthetic generators such as BA-2Motifs (BA-2Mot.) and ShapeG  \citep{agarwal2023evaluating} were introduced, providing graph generation with implanted motifs. Unfortunately, they lack the chemical fidelity of real molecules. The B-XAIC benchmark~\citep{proszewska2025b} improves this by providing 50,000 molecules across seven tasks with known ground-truth rationales including negative explanations evaluations (Neg. Expl. eval.). Yet, even B-XAIC relies on trained black-box models, which maintain the model-explainer ambiguity, where a failure in attribution could still stem from a model that learned a spurious correlation rather than the explainer's inaccuracy.

\paragraph{Axiomatic Evaluation.}
A paradigm shift occurred with the Tttm (True-to-the-model) benchmark \citep{monti2024true, nerini2025true}, which introduced the axiom that an explainer must be truthful to the model's actual logic rather than the data's patterns. Tttm utilizes white-box classifiers with hard-coded logic to audit explainers. However, Tttm focuses on abstract node classification in synthetic graphs. WOMBAT combines the rigor of Tttm with the chemical realism of B-XAIC. By providing 14 manually engineered, designed to be correct MPNNs that detect SMARTS-defined motifs \citep{klekota2008chemical}, WOMBAT establishes a benchmark dedicated to XAI for GNN applications in chemistry. 

\section{WOMBAT}

\subsection{Data Preparation} 
\label{sec:data_prep}

\textbf{Selection of Motifs.} \benchmark{} evaluates GNN explainers using 14 hand-crafted whitebox GNNs, each designed to detect a specific molecular motif drawn from a comprehensive set of chemically meaningful SMARTS strings~\citep{klekota2008chemical}. We present them in Figure~\ref{fig:patterns}. The motifs were hand-selected to cover different substructure sizes; we analyse properties of the patterns in Appendix \ref{app:pattern_stats}. Moreover, we restrict our selection to acyclic motifs, due to the well-documented inability of standard Message Passing Neural Networks (MPNNs) to detect specific cyclic substructures in general. Note that this excludes aromatic systems and heterocycles from the set of patterns. This restriction is not arbitrary, however: cycle-based evaluation would likely require either a cycle-aware architecture, graph features that explicitly encode cycle information, or a separate construction. Cycle-related weaknesses of MPNNs have been discussed in the literature before~\citep{chen2020can, huang2022boosting}. This restriction applies only to the \textit{motif pool} and not to our dataset (our whiteboxes work properly even if cycles \textit{are} present in the input molecules).

\textbf{Motif adaptation.} WOMBAT consists of substructures with wildcards. We adapt the original SMARTS definitions to better align with MPNN mechanics by relaxing the strict uniqueness requirement for wildcard neighbours (Section~\ref{par:smarts_comp}). This relaxation ensures the detection task remains tractable, as strictly enforcing neighbour uniqueness is, in general, incompatible with standard message-passing operations. The disagreements between SMARTS and relaxed SMARTS are rare for the majority of patterns, with Patterns 2, 5 and 6 being the exceptions (Table~\ref{tab:app_disagreements}). Finally, for Patterns 3 and 5, we relax \texttt{[NH]} atoms to \texttt{[N;H1,H2]}, allowing one or two attached hydrogens instead of strictly one; this changes neither class labels nor ground truth atom sets across their entire validation sets.

\textbf{Ground Truths.} We label nodes in the union of all motif occurrences under the matching rules above as $1$, and all other nodes as $0$. This includes neighbours required by wildcard terms, even when these requirements are expressed as degree constraints in the relaxed SMARTS (see Section~\ref{par:smarts_comp}). Such neighbours contribute through their presence and therefore belong to the ground truth. Molecules without the motif have all-zero labels. Since our whiteboxes are constructed to implement the motif rules and empirically validated against them, our ground truths follow the \textit{true-to-the-model} view of attribution~\citep{chen2020true}.

\input{figures/patterns}

\textbf{Whitebox Validation Dataset Creation.} \label{par:validation_dataset} Before we test the explainers, we first empirically test the correctness of each of the whiteboxes. To perform this, we extracted all SMILES strings from the PubChem~\citep{kim2025pubchem} database. Then, we tested if they are parsable by RDKit~\citep{rdkit}, and removed those that failed parsing. To ensure chemical validity, we filtered out molecules violating maximum valence constraints (carbon \(>4\), nitrogen \(>4\), oxygen \(>2\), and sulphur \(>6\)). For each target motif, we then constructed a dataset by sampling up to 5 million molecules\footnote{or the maximum available, if fewer} per category:

\begin{itemize}[leftmargin=*]
    \item \textbf{Positive samples:} Randomly selected molecules containing the given motif.
    \item \textbf{Random negatives:} Randomly selected molecules lacking the motif.
    \item \textbf{Hard negatives (Tversky):} Molecules lacking the motif but sharing at least half of the pattern's active ECFP4 fingerprint bits (Tversky index~\citep{tversky1977features} $\geq 0.5$ with $\alpha=1$, $\beta=0$).\footnote{Pattern fingerprints were computed by replacing wildcards with carbon atoms and treating the resulting SMARTS as molecules.} This allows us to focus on molecules that are capable of ``mimicking'' parts of the motif's local structure.

\end{itemize}

Finally, we removed duplicated molecules from the resulting dataset. If a SMILES string was sampled as both a random and a hard (Tversky) negative, we prioritized the Tversky negative label. The distributions of distances and similarities across different patterns are detailed in Figure~\ref{fig:tversky_big}.

\input{figures/better_fig_4}

\subsection{Whitebox MPNNs Design}
\label{sec:design}

To evaluate XAI methods against a verifiable ground truth, we design a series of whitebox MPNNs. The core intuition (see Figure~\ref{fig:wombat}) behind this is to transform the standard message-passing network into a logical circuit. \textbf{We specify the weight matrices and biases manually}. First, we define every non-hydrogen atom as a graph \textit{node}. Its input representation is a one-hot encoded vector of length \(100\) representing the atom's atomic number. Then, we append a one-hot encoded vector of length \(10\), representing the number of hydrogens attached to the atom. We define every bond in the molecule as a graph \textit{edge}. Its input representation is a one-hot encoded vector of length \(4\), representing the bond's type (single, double, triple, aromatic). The message $m_{ij}$ passed from node $v_i$ to node $v_j$ using MPNN layers with weight matrices $W_{\mathrm{from}}$, $W_{\mathrm{to}}$, and $W_{e}$ is defined as:

\begin{equation}
\label{eq:message}
m_{ij} = \mathrm{ReLU}(x_i^T W_{\mathrm{from}}) \odot \mathrm{ReLU}(x_j^T W_{\mathrm{to}}) \odot \mathrm{ReLU}(e_{ij}^T W_{e})
\end{equation}

where $x_i$, $x_j$, and $e_{ij}$ represent the features of nodes $v_i$, $v_j$, and their connecting edge, respectively. Note that $\odot$ denotes element-wise multiplication. To update the node representations, we aggregate the incoming messages via summation, add a bias vector $b$, and apply a ReLU non-linearity. The updated representation $z_i$ for node $v_i$ in its neighbourhood $\mathcal{N}(v_i)$ is thus computed as:

\begin{equation}
\label{eq:z}
z_{i} = \mathrm{ReLU}\left(b + \sum_{v_j \in \mathcal{N}(v_i)} m_{ji}\right).
\end{equation}

\textbf{MPNN Readout.} Let $V$ be the set of nodes and let $x_i \in \mathbb{R}^d$ denote the activation vector of node $v_i$ after the final MPNN layer, where $d$ is the output dimensionality. We design the graph-level readout function, $r$, to act as a structural indicator that yields a positive value if and only if there exists at least one node where all $d$ features are simultaneously ``activated'' (i.e., strictly positive). We propose two alternative readout formulations to achieve this:

\begin{equation}
\label{eqn:readout_1}
r_1 = \max_{v_i \in V} \left( \prod_{j=1}^{d} \mathrm{ReLU} (x_{i,j}) \right)
\end{equation}
\begin{equation}
\label{eqn:readout_2}
r_2 = \mathrm{ReLU}\left(1 - \prod_{v_i \in V} \mathrm{ReLU}\left(1 - \prod_{j=1}^{d} \mathrm{ReLU}(x_{i,j})\right)\right)
\end{equation}

where $x_{i,j}$ represents the $j$-th feature of node $v_i$. The underlying intuition for both equations is identical: the inner product acts as a logical AND over the $d$ node features, while the outer operation acts as a logical OR across all nodes in the graph. While the first equation is simpler, the $\max$ operator negatively impacts gradient routing (negatively influencing gradient-based explainers). Furthermore, it can unfairly confuse explainers, as its output values are not bounded by 1 and can be arbitrarily large. The second equation addresses this by utilizing De Morgan's laws applied to continuous values, simulating the global OR without relying on $\max$ pooling. %A step-by-step illustration of these readout functions is provided in Figure \ref{fig:readouts}.

\textbf{Experimental Validation.} We evaluate the accuracy of our constructed whitebox MPNNs with readouts \(r_1\) and \(r_2\) using the validation dataset (detailed in Section \ref{par:validation_dataset}). Across millions of tested molecules, the models achieved perfect classification performance\footnote{, Zero false positives and zero false negatives at a prediction threshold of \(0.5\). For every motif evaluated, the models were tested against a set of strictly more than 5 million negative samples (see Table \ref{tab:ds_sizes}).} 

\textbf{Masking semantics.} In WOMBAT, nodes can be effectively masked by setting their features to 0. That is because (as long as all biases are not positive\footnote{None of our main whiteboxes have positive biases. We demonstrate one whitebox with this property in Appendix \ref{sec:appendix_impls}, but it's not a part of our main testing suite.}) a node with its features set to zero can, effectively, neither receive nor send any messages due to the message definition -- meaning the calculation is equivalent to the calculation as if it was not present altogether. This property is useful for ensuring fair evaluation of attribution methods that rely on a predefined baseline, such as Integrated Gradients~\citep{sundararajan2017axiomatic}, as the trivial zero-baseline relying on setting all features to vectors of zeroes is guaranteed to result in the model predicting 0.

\subsection{Evaluation of Attribution Methods}

\textbf{Dataset \& metrics.} We construct the XAI evaluation dataset by randomly sampling up to 10,000 positive and 10,000 negative molecules from the whitebox validation set (Section \ref{par:validation_dataset}), using all available molecules if a pool size is below this threshold. The negative subset is stratified: we first sample up to 5,000 hard negatives from the Tversky pool, and then draw from the random negative pool until the 10,000-molecule target is reached. Because our whitebox models are manually constructed and detect specific motifs, the exact ground-truth subgraphs can be directly extracted using their corresponding SMARTS strings. The faithfulness of node-level attributions is assessed using metrics dependent on the presence of the target motif:
\begin{itemize}[leftmargin=*]
    \item \textbf{Positive Samples:} We measure the explainer's ability to prioritize motif nodes over non-motif nodes per molecule using the AUROC and Average Precision (AP). We report the macro-average of these metrics across all positive graphs.\footnote{Molecules composed entirely of the target motif are excluded, as AUROC is undefined in the absence of negative (non-motif) nodes.}  We discuss properties of AUROC in relation to Fidelity in Appendix \ref{sec:fidelity_obs}. 
    \item \textbf{Negative Samples:} Following~\citet{proszewska2025b}, we evaluate attribution uniformity on graphs lacking the motif. We compute the interquartile range ($\mathrm{IQR} = Q_3 - Q_1$) of the node attribution scores. An explainer successfully processes a negative graph if no node receives an outlier attribution score—defined as falling outside the interval $[Q_1 - 1.5 \cdot \mathrm{IQR}, Q_3 + 1.5 \cdot \mathrm{IQR}]$. We report the aggregate success rate across all negative graphs.
\end{itemize}

\textbf{Edge-to-Node Attribution Conversion.} \label{par:edge-to-node} While our benchmark evaluates node-level attributions, certain methods (e.g., PGExplainer~\citep{luo2020parameterized}) exclusively generate edge-level importance scores. Because any edge belonging to a motif strictly implies that its incident nodes also belong to that motif, we project edge scores onto the nodes. Specifically, if native node-level explanations are unavailable, we compute a node's attribution by summing the attribution scores of all its incident edges.

\section{Experiments and Results}
\input{figures/example_attributions}

\input{tables/perfection_table_pos_combined}
\subsection{Quantitative results}
Because the WOMBAT benchmark guarantees that any deviation from the ground truth comes from the explainer's attribution error, we focus on whether an explainer achieved a perfect AUROC score for a given whitebox in Table~\ref{tab:model_eval_perfection_combined}. We also summarize the attribution performance of each explainer across all whitebox models by reporting the mean AUROC and mean Average Precision (AP) in Tables~\ref{tab:model_eval_auroc_continuous_custom_palette} and~\ref{tab:model_eval_ap_continuous_custom_palette}, respectively. We share details of the experimental setup in Appendix~\ref{sec:experimental_setup}.

We observe that gradient-based methods (such as IG, Input $\times$ Gradient, and Saliency) and SHAP Sampling significantly outperform the perturbation-based GNNExplainer and PGExplainer, which largely fail to exceed random baseline performance across most motifs. Second, despite the high average metrics of top-performing explainers, Table~\ref{tab:model_eval_perfection_combined} highlights that achieving strictly perfect explanations remains challenging, with only IG under \(r_2\) demonstrating semi-consistent perfection. Finally, all evaluated explainability methods, with the exception of PGExplainer, exhibited ideal performance on negative samples (i.e., molecules lacking the target motif). PGExplainer failed this evaluation in exactly one instance out of the 140,000 examples tested, a behavior observed in the case of both readout functions. This isolated failure occurred for Pattern 12, on which PGExplainer does not collapse to \(\mathrm{AUROC} = 0.5\). The failure is caused by PGExplainer learning a heuristic to detect Pattern 12; this heuristic fails for a very specific class of edge cases, causing misattributions. We illustrate these edge cases in Figure~\ref{fig:failures_of_pge_on_pattern_12} and discuss their causes in Appendix~\ref{sec:appendix_pge_fail_p12}. 

\subsection{Failure Modes of Explainers}

In this part of the work we present failure modes for Gradient-based methods and PGExplainer, while the analysis for GNNExplainer and Shapley Value Sampling is provided in the Appendix Section \ref{sec:app_pge_gnn}. Representative attributions across explainers are shown in Figure~\ref{fig:attrs}.

\paragraph{Gradient-based methods.}

\input{figures/pattern_5_misattrib}
\input{figures/pattern_5_ig}

Saliency and Input $\times$ Gradient perform well on the max-pooling readout ($r_1$). However, $r_1$ restricts gradient routing: when a molecule contains multiple motif instances, gradients are directed to only one substructure, leaving others without attribution. Under the $r_2$ readout, both methods collapse, a direct consequence of violating the sensitivity axiom~\citep{sundararajan2017axiomatic}. As a result, IG demonstrates an improvement under $r_2$.

\paragraph{Integrated Gradients.} Under the \(r_2\) readout, there are two molecules for which IG is unfaithful to the model recognising Pattern~5. We demonstrate one of these cases in the Fig.~\ref{fig:pattern_5_fail}. Here, a non-motif carbon receives a higher attribution than some atoms belonging to the pattern. However, it has no effect on the final prediction under \(r_2\). Interpolating its features from the zero baseline leaves the \(r_2\) prediction unchanged throughout the entire integration path (Fig. \ref{subfig:pattern5-ig-main-READOUTS}). Nevertheless, the atom increases the network's internal activations, as evidenced by its effect on the intermediate \(r_1\) readout. Consequently, there exists a region along the integration path, near the model's decision threshold, where further scaling this atom's representation increases the network's activation and yields positive local gradients (Fig. \ref{subfig:pattern5-ig-main-PATH}, Fig. \ref{subfig:pattern5-ig-main-IMPACT}). Since Integrated Gradients accumulates gradients along the integration path, these transient positive gradients result in a positive attribution for the non-motif atom. Similar issues happen for Pattern 4, where IG yields imperfect scores frequently (failure rate of approximately \(15\%\)). We present our full analysis in Appendix \ref{sec:app_ig_p_4_5}.

\input{figures/leaky_ig}

\paragraph{PGExplainer.} PGExplainer's failure can be partially attributed to the fact that it bases edge attributions based on final activations of the edge's terminals. In WOMBAT, many nodes (including pattern nodes) may not be activated in the final layer, significantly capping potential PGExplainer's performance in case of the larger substructures (such as Pattern 3 or Pattern 4) and causing PGExplainer to learn a heuristic in case of the Pattern 12. We provide more details in Appendix~\ref{sec:app_pge_gnn}.

\section{Discussion}

WOMBAT isolates explainer failures on models whose decision rules are known. It establishes that a failure \textit{can} occur and helps explain \textit{why}; however, its \textit{frequency} in trained GNNs remains an open question. Changing the architecture (or even weights) can impact the quality of explanations. We demonstrate this in Appendix~\ref{sec:appendix_gineconv}, where we create a more ``idiomatic'' MPNN (using GINEConv architecture~\citep{hu2019strategies}) and in Appendix~\ref{sec:appendix_impls}, where we show that different implementations of whiteboxes for Pattern 12 can yield wildly different explanations, including significantly worsening IG's performance by ``blurring'' its attribution. WOMBAT therefore complements benchmarks that use trained models and focus on quantitative evaluation, such as~\citet{proszewska2025b, sanchez2020evaluating}. 

The collapse of Saliency and Input × Gradient under $r_2$ illustrates the known limitation of relying on local gradients, which can vanish despite the prediction depending on the input \citep{sundararajan2017axiomatic}. Integrated Gradients (IG) generally performs well on WOMBAT, but still can attribute non-motif atoms as well; this is caused by transient gradients along its interpolation path. While the existence of path artifacts is known~\citep{miglani2020investigating, sanyal2021discretized, enguehard2023sequential, kapishnikov2021guided}, WOMBAT lets us trace their causes through interpretable weights. We hypothesise that they can be stronger in MPNNs whose message aggregation admits more non-motif contributions than WOMBAT's gated messages. We create an adversarial whitebox (Appendix \ref{sec:appendix_impls}) which shows that additional propagation of redundant messages can amplify them (see Figure \ref{fig:leaky_ig}), but does not establish their prevalence in trained models.

PGExplainer attributes edges based on their endpoints' final representations. In WOMBAT, many motif and non-motif nodes have final representations equal to zero, making some relevant and irrelevant edges indistinguishable to the PGExplainer. We hypothesise that oversmoothing~\citep{rusch2023survey} can worsen this problem when it removes distinctions needed for explanation. We furthermore demonstrate changes of weights for Pattern 12 that ``help'' PGExplainer learn to explain networks properly and changes that impede it from doing so in Appendix~\ref{sec:appendix_impls}.

\FloatBarrier

\section{Conclusions}
We introduced WOMBAT, a benchmark of 14 hand-crafted whitebox GNNs for acyclic molecular motif detection, validated on millions of PubChem molecules. Their known decision rules allow attribution errors to be isolated and studied without uncertainty about the model’s behaviour. Evaluating seven XAI methods, we traced failures to specific computations and used those findings to construct whiteboxes that amplify them. We release the data, models and code to support further diagnostic studies and the development of better graph explainers. WOMBAT makes even rare failures useful: understanding \textit{why} an explanation fails provides a concrete target for improving it.

\subsection*{Acknowledgements}

This work was funded by ``Interpretable and Interactive Multimodal Retrieval in Drug Discovery'' project. The ``Interpretable and Interactive Multimodal Retrieval in Drug Discovery'' project (FENG.02.02-IP.05-0040/23) is carried out within the First Team programme of the Foundation for Polish Science co-financed by the European Union under the European Funds for Smart Economy 2021-2027 (FENG). The work of T.D. was carried out within the ``AI-Based Virtual Screening Assistant'' project (LIDER15/0033/2024) funded by the National Center for Research and Development (Poland) under the LIDER XV program. We gratefully acknowledge Polish high-performance computing infrastructure PLGrid (HPC Center: ACK Cyfronet AGH) for providing computer facilities and support within computational grant no. PLG/2026/019504.

\subsection*{AI use statement}

In this work, we used generative AI tools to help rephrase or draft parts of the manuscript and to provide feedback on unfinished versions, including identifying potential errors in mathematical arguments and proofs. We also used generative AI tools to provide feedback on experimental methodology and the interpretation and presentation of results. Additionally, generative AI tools were used to assist with writing and editing source code, including TikZ code for illustrations. Furthermore, AI systems were used to assist with information and literature searches. We have not used generative AI tools for unsupervised or agentic code generation, and all whiteboxes in the main testing suite had all their weights set by a human\footnote{The only whitebox with LLM generated weights was created as a test to check if LLMs are capable of creating WOMBAT whiteboxes. It was not used to test XAI methods, but is available and clearly marked in the source code.}. We have not used LLMs for hypothesis generation. We have reviewed AI-assisted work: autocomplete suggestions were immediately verified by the author responsible for source code, while LLM-generated alerts about potential manuscript errors were reviewed and fixed by humans, if appropriate. We take responsibility for the final content of this work, including text, claims or artifacts produced with the aid of generative AI.

\subsection*{Reproducibility Statement.}
We provide open-source code (\url{https://github.com/gmum/wombat/}) for the entire evaluation pipeline, including all whitebox model weights, under the MIT license. The full WOMBAT validation dataset is also publicly available (\url{https://huggingface.co/datasets/dmtsh/wombat-smiles}) under the CC0 1.0 license.

\bibliography{iclr2027_conference}
\bibliographystyle{iclr2027_conference}

\appendix

%%%%%%%%%%%%%%%%%%%%%%%%%%%%%%%%%%%%%%%%%%%%%%%%%%%%%%%%%%%%
\newpage
\appendix

\setcounter{figure}{0}
\renewcommand{\thefigure}{S\arabic{figure}}
\setcounter{table}{0}
\renewcommand{\thetable}{S\arabic{table}}
\onecolumn

\section*{Appendix Table of Contents}
\startcontents[appendix]
\printcontents[appendix]{l}{1}{\setcounter{tocdepth}{2}}
\vspace{10pt}
\hrule
\vspace{20pt}

\section{Limitations}

While our whitebox models have been validated empirically across millions of molecules without a single false positive or false negative, we have not formally verified them using mathematical proof assistants (e.g., Lean). However, we consider our large-scale empirical validation sufficient for the purposes of this benchmark. Furthermore, we \textbf{strongly caution against using WOMBAT primarily as a leaderboard} or to declare that ``Explainer X is better than Explainer Y''. Because WOMBAT guarantees that any attribution discrepancies cannot be explained by unknown model behavior such as shortcut learning, \textbf{the benchmark is best utilized for case-by-case diagnostic studies}. For instance, even if a method such as Integrated Gradients achieves near-perfect attributions for 99.99\% of compounds under a specific readout, WOMBAT isolates the remaining 0.01\% failure cases, allowing researchers to study the exact structural or mathematical conditions that cause the explainer to break down. Furthermore, WOMBAT does not and cannot make any claim about \textit{prevalence} of any identified failure modes in more idiomatic GNNs and rather shows that a weakness is present. That weakness then needs to be judged on case-by-case basis as to its probable severity in more idiomatic GNNs. Finally, our evaluation has focused on whether XAI methods properly rank nodes: while this is critical for an XAI method to work well, other considerations (such as ease of setting appropriate threshold) exist in downstream settings; we discuss these limitations further in Appendix~\ref{sec:limit-auroc}.

\section{Experimental setup}
\label{sec:experimental_setup}

\textbf{PGExplainer and GNNExplainer.} We use PGExplainer's and GNNExplainer's implementations from PyTorch Geometric library~\citep{Fey/etal/2025}, adopting default hyperparameters for qualitative analysis. PGExplainer is trained on all molecules, randomly shuffled each epoch, using the model's predictions as targets. Because PGExplainer yields only edge-level attributions, we map these to node-level importance following the procedure in Section~\ref{par:edge-to-node}. For both explainers, we set \texttt{mode="regression"} and \texttt{return\_type="raw"} to accommodate the unbounded outputs of the $r_1$ readout.

\textbf{Gradient-based explainers \& Shapley Value Sampling.} Using Captum~\citep{kokhlikyan2020captum}, we compute node and edge attributions via Integrated Gradients, Input $\times$ Gradient, Saliency and Shapley Value Sampling. We evaluate only node-level results, aggregated by summing the absolute values of each node's 110-dimensional feature vector. We use default settings, with the exception of SVS (raising $n_{\mathrm{samples}}$ to 100). Due to high computational complexity, evaluation is restricted to a maximum of 100 molecules per class (positive and negative).

\textbf{SubgraphX.} Using the DiG library~\citep{JMLR:v22:21-0343}, we apply SubgraphX with default settings. Since SubgraphX identifies coalitions rather than individual node scores, we assign binary attributions: 1 if a node is in the coalition and 0 otherwise. We omit results for negative explanations, as SubgraphX provides coalitions by design. Due to high computational complexity, evaluation is restricted to a maximum of 100 molecules.

\section{AUROC-Fidelity relationship}
\label{sec:fidelity_obs}

\subsection{Preliminaries}

\paragraph{Assumptions.} In this section, we will focus on readout \(r_2\), which has better theoretical properties and does not unfairly confuse explainers like the \(r_1\) readout. Unless otherwise noted, in this section we assume all models are using \(r_2\) readout. Furthermore, it can be observed that \(r_2\) readout effectively binarizes network's output. This was also checked during the empirical validation, where there was no molecule for which a model using \(r_2\) readout would return logit not being equal to \(1\) or \(0\)\footnote{In practice, such situation happens only if the input features are scaled (i.e. the input features are not one-hot encoded, but are scaled one-hot encoding). In this section, however, we are working with Fidelity metric, which does not \textit{scale} input features, but completely masks them.}. 

\paragraph{Whiteboxes.} Whitebox models in WOMBAT were designed to be correct, which was later confirmed by extensive empirical evaluation on known molecules. In other words, we have empirically validated that for a graph \(G\), satisfying our predefined valence rules, and motif \(m\), whitebox model \(f_m\) (under the readout \(r_2\)) satisfies the following property:

\begin{equation}
    \label{eqn:reliability}
    f_m(G) = 1 \iff m \subseteq G
\end{equation}

\paragraph{Ground truth.} As stated in Section~\ref{sec:data_prep}, WOMBAT's ground truth is the union of the node sets of all motif occurrences. This includes nodes matched by wildcard terms. Where wildcard terms are replaced by degree constraints in the relaxed SMARTS, we include all neighbours of the matched atoms subject to those constraints, since their presence is required to satisfy them.

\paragraph{Fidelity.} Let \(G_s \subseteq G\) be a subgraph of \(G\) that constitutes an explanation of \(f_m\). Let \(G \setminus G_s\) be complement of \(G_s\). Then, we define \textit{fidelity} of \(G_s\) as follows~\citep{yuan2022explainability}:

\begin{equation}
    \label{eqn:fid_plus}
\mathrm{Fidelity}^{+\mathrm{prob}}(G, G_s) = f_m(G) - f_m(G\setminus G_s)
\end{equation}

which intuitively measures whether removal of \(G_s\) from the input changes model's prediction; the greater the \(\mathrm{Fidelity}^+\), the better \(G_s\) serves as model's explanation. Alternatively, fidelity can be defined as follows: 

\begin{equation}
    \label{eqn:fid_minus}
\mathrm{Fidelity}^{-\mathrm{prob}}(G, G_s) = f_m(G) - f_m(G_s)
\end{equation}

which intuitively measures whether removal of all nodes \textit{not} belonging to \(G_s\) changes model's prediction. In that case, the lower the \(\mathrm{Fidelity}^-\), the better \(G_s\) serves as model's explanation.

\paragraph{Sparsity.} Let \(G_s \subseteq G\) be a subgraph of \(G\) that constitutes an explanation of \(f_m\). Let \(G \setminus G_s\) be complement of \(G_s\). Then, we define \textit{sparsity} of \(G_s\) as follows~\citep{yuan2022explainability}:

\begin{equation}
    \label{eqn:sparsity}
\mathrm{Sparsity}(G, G_s) = 1 - \frac{|V_{G_s}|}{|V_{G}|}
\end{equation}

\subsection{AUROC-Fidelity relationship}

\begin{theorem}
    \label{th:gt}
    Let \(f_m\) be a WOMBAT whitebox model with readout \(r_2\). Let \(G=(V,E)\) be a graph, such that \(m \subsetneq G\). Let \(G_\mathrm{GT} \subsetneq G\) be ground truth explanation of \(f_m\) on \(G\). Then, \(G_{\mathrm{GT}}\) maximises \(\mathrm{Fidelity}^{\mathrm{+prob}}\) and minimises \(\mathrm{Fidelity}^{\mathrm{-prob}}\).
\end{theorem}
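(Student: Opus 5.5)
Since $m \subsetneq G$ means the motif occurs in $G$ (as a proper substructure), the reliability property~\eqref{eqn:reliability} gives $f_m(G)=1$. Because $r_2$ binarises the output, $f_m$ takes only the values $0$ and $1$ on any masked subgraph. Hence, for every candidate explanation $G_s \subseteq G$, we have $\mathrm{Fidelity}^{+\mathrm{prob}}(G,G_s)=1-f_m(G\setminus G_s)\le 1$ and $\mathrm{Fidelity}^{-\mathrm{prob}}(G,G_s)=1-f_m(G_s)\ge 0$. It therefore suffices to show $f_m(G\setminus G_\mathrm{GT})=0$ and $f_m(G_\mathrm{GT})=1$: these attain the extreme values $1$ and $0$ respectively.

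For $\mathrm{Fidelity}^-$, the plan is to show that $G_\mathrm{GT}$ still contains the motif. $G_\mathrm{GT}$ is the union of node sets of all motif occurrences, together with the neighbours required by wildcard and degree constraints. Any single occurrence of $m$ in $G$ therefore lies, node- and edge-wise, inside the induced subgraph on $G_\mathrm{GT}$. All atoms needed to satisfy the degree constraints are also retained, since the ground truth explicitly includes them. So the matching that witnesses $m\subseteq G$ also witnesses $m\subseteq G_\mathrm{GT}$. By the masking semantics (no positive biases), masked nodes neither send nor receive messages, so evaluating $f_m$ on the masked input equals evaluating it on the induced subgraph. Applying~\eqref{eqn:reliability} then gives $f_m(G_\mathrm{GT})=1$.

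For $\mathrm{Fidelity}^+$, the plan is to argue by contradiction. Suppose $m \subseteq G\setminus G_\mathrm{GT}$. Then there is an occurrence of $m$ in $G\setminus G_\mathrm{GT}$. Any such occurrence is also an occurrence in $G$, because induced subgraph edges are edges of $G$ and the atom and hydrogen features are unchanged. Its nodes would therefore belong to $G_\mathrm{GT}$, contradicting their removal. Hence $m\not\subseteq G\setminus G_\mathrm{GT}$. Masking semantics again identifies the masked evaluation with the model applied to that subgraph, so~\eqref{eqn:reliability} gives $f_m(G\setminus G_\mathrm{GT})=0$.

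The main obstacle is the subtlety of the relaxed SMARTS with degree constraints. Removing nodes lowers the degrees of the remaining atoms, so upper-bound degree constraints (e.g.\ ``exactly $k$ neighbours'') could in principle become satisfied after deletion, creating a new match in $G\setminus G_\mathrm{GT}$ that did not exist in $G$. Such non-monotonicity breaks the contradiction argument, and a similar concern applies to hydrogen counts, which are input features and do not change. I would first try to check that the motif constraints used are monotone under node deletion, i.e.\ they take the form of lower bounds or explicit presence of neighbours. If so, every match in a subgraph lifts to a match in $G$. If some pattern uses an exact-degree constraint, I would restrict the claim, or verify that the whitebox counts neighbours via messages so that deletion cannot create a spurious match. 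A further step is to confirm that~\eqref{eqn:reliability}, validated empirically only on valence-filtered molecules, may be applied to the masked subgraphs. These remain valence-valid because deleting atoms only lowers valences.
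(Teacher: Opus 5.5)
Your proposal follows the paper's proof. Both bound $\mathrm{Fidelity}^{+\mathrm{prob}}\le 1$ and $\mathrm{Fidelity}^{-\mathrm{prob}}\ge 0$ using that $f_m$ under $r_2$ takes values in $\{0,1\}$, and then obtain $f_m(G\setminus G_{\mathrm{GT}})=0$ and $f_m(G_{\mathrm{GT}})=1$ from the reliability property; the paper asserts these two evaluations directly from ``$G_{\mathrm{GT}}$ is the union of all occurrences,'' whereas you spell out the masking and lifting steps, and the monotonicity check you leave open does go through. Each exact-degree constraint in the relaxed SMARTS (\texttt{[CHD3]}, \texttt{[CD3]} with a double bond, \texttt{[CD4]}, \texttt{[SD4](=O)(=O)}) already sits at the maximum degree the valence filter permits for that atom, and masked subgraphs stay valence-valid because hydrogen features are fixed, so ``exactly $k$'' coincides with ``at least $k$'' there and every match in a subgraph lifts to a match in $G$.
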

\begin{proof}
     Recall that: 

    \begin{equation}
            f_m(G) = 1 \iff m \subseteq G
    \end{equation}
    
    Since \(G_{\mathrm{GT}}\) is union of all occurrences of motif \(m\) in \(G\): 
    
    \begin{equation}
        \mathrm{Fidelity}^{+\mathrm{prob}}(G, G_\mathrm{GT}) = f_m(G) - f_m(G \setminus G_{\mathrm{GT}}) = 1 - 0 = 1
    \end{equation}

    and 

    \begin{equation}
        \mathrm{Fidelity}^{-\mathrm{prob}}(G, G_\mathrm{GT}) = f_m(G) - f_m(G_{\mathrm{GT}}) = 1 - 1 = 0
    \end{equation}

    Furthermore, note that since we are using readout \(r_2\), it follows that for any \(G\), \(f_m(G) \in \{0, 1\}\) (as long as the input features are not scaled, which we assume to be the case). This trivially implies that the maximum of \(\mathrm{Fidelity}^{+\mathrm{prob}}\) is 1 and minimum \(\mathrm{Fidelity}^{-\mathrm{prob}}\) is 0. In other words, ground truth explanation graph maximises/minimises relevant fidelities. 
\end{proof}

\begin{theorem}
    \label{th:auroc}
    Let \(f_m\) be a WOMBAT whitebox model with readout \(r_2\). Let \(G=(V,E)\) be a graph, such that \(m \subsetneq G\). Let \(G_\mathrm{GT} \subsetneq G\) be ground truth explanation of \(f_m\) on \(G\) and let \(a_1, a_2, \dots, a_n\) be ground truth attributions of nodes \(v_1, v_2, \dots, v_n\), respectively, where \(a_i = 1 \iff v_i \in G_{\mathrm{GT}}\) and \(a_i = 0 \iff v_i \not\in G_{\mathrm{GT}}\). Then let \(\hat{a_1}, \hat{a_2}, \dots, \hat{a_n}\) be attributions generated by some attribution method. Then, if:
    \[
        \mathrm{AUROC}([\hat{a_1}, \dots, \hat{a_n}] \ ; \ [a_1, \dots, a_n]) = 1
    \]
    there exists \(t \in \real \) such that there exists \(V_s = \{v_i: \hat{a_i} \geq t\} \) for which exists \(G_s = G[V_s]\), such that \(G_s\) maximises \(\mathrm{Fid}^{+\mathrm{prob}}\) and minimises \(\mathrm{Fid}^{-\mathrm{prob}}\).
\end{theorem}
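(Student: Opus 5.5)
The plan is to reduce Theorem~\ref{th:auroc} to Theorem~\ref{th:gt}. I will show that perfect AUROC lets us choose a threshold $t$ whose superlevel set is exactly the ground-truth node set, so that $G_s = G[V_s] = G_{\mathrm{GT}}$. Theorem~\ref{th:gt} then gives the fidelity optimality directly.

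First I unpack $\mathrm{AUROC}=1$. AUROC equals the probability that a uniformly chosen positive node gets a strictly higher score than a uniformly chosen negative one, with ties counted as $1/2$. Hence $\mathrm{AUROC}=1$ holds iff every positive/negative pair satisfies $\hat a_i > \hat a_j$ whenever $a_i=1$ and $a_j=0$. Both classes are nonempty: the positive class because $m \subseteq G$ makes $G_{\mathrm{GT}}$ nonempty, and the negative class because $G_{\mathrm{GT}} \subsetneq G$. Otherwise AUROC is undefined, which is also why such molecules were excluded in the main text.

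Next I choose the threshold $t = \min\{\hat a_i : v_i \in G_{\mathrm{GT}}\}$. Every ground-truth node satisfies $\hat a_i \ge t$. Every non-ground-truth node has a score strictly below every positive score, hence strictly below $t$. So $V_s = \{v_i : \hat a_i \ge t\}$ equals the node set of $G_{\mathrm{GT}}$.

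The step I expect to need the most care is identifying $G[V_s]$ with $G_{\mathrm{GT}}$ as graphs, not just as node sets. This matters because fidelity is computed on $G_{\mathrm{GT}}$ and on its complement $G \setminus G_{\mathrm{GT}}$. I will argue as follows. In WOMBAT, masking is node-based (zeroing node features), and the ground truth is specified at node level. So the explanation subgraph is naturally read as the node-induced subgraph on the labelled nodes, and its complement as the induced subgraph on the remaining nodes. If instead $G_{\mathrm{GT}}$ were taken as a union of motif occurrences with only motif edges, $G[V_s]$ could contain extra edges among ground-truth atoms. I will then note that adding edges between atoms that are already present cannot destroy any motif occurrence. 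Hence $f_m(G[V_s])=1$ still holds by~\eqref{eqn:reliability}. The complement side is unchanged because it depends only on the removed nodes.

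With this identification, $\mathrm{Fid}^{+\mathrm{prob}}(G,G_s)=1$ and $\mathrm{Fid}^{-\mathrm{prob}}(G,G_s)=0$, by the computation in the proof of Theorem~\ref{th:gt}. Since $f_m$ takes values in $\{0,1\}$ under $r_2$ with unscaled features, these values are the extremes, which completes the argument.
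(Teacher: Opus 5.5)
Your proposal is correct and is essentially the paper's proof. $\mathrm{AUROC}=1$ forces $\max_{i:\,a_i=0}\hat a_i<\min_{i:\,a_i=1}\hat a_i$, and your threshold $t=\min_{i:\,a_i=1}\hat a_i$ lies in the paper's interval $(\max_{i:\,a_i=0}\hat a_i,\ \min_{i:\,a_i=1}\hat a_i]$, so $V_s$ is exactly the ground-truth node set, $G[V_s]=G_{\mathrm{GT}}$, and Theorem~\ref{th:gt} concludes. Your extra remarks (nonemptiness of both classes, reading $G_{\mathrm{GT}}$ as a node-induced subgraph) only make explicit what the paper leaves implicit. One caveat: your side claim that extra edges among present atoms can never destroy a match is false in general for degree-constrained relaxed SMARTS; it is harmless here only because each degree-constrained matched atom keeps all its neighbours in $G_{\mathrm{GT}}$.
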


\begin{proof}
    By definition of AUROC, if \(\mathrm{AUROC}([\hat{a_1}, \dots, \hat{a_n}] \ ; \ [a_1, \dots, a_n]) = 1\), then:

    \[
    \max_{i: \ a_i=0} \hat{a_i} < \min_{i: \ a_i=1} \hat{a_i}
    \]

    Hence, there exists \(t \in \real\) such that:

    \[t \in \left(\max_{i: \ a_i=0} \hat{a_i} \ ;  \ \min_{i: \ a_i=1} \hat{a_i}\right]  \]

    Then we can define \(V_s\):

    \[
    V_s = \{v_i : \hat{a_i} \geq t \}
    \]

    and observe that \(v_i \in V_s \iff \hat{a_i} \geq t \iff a_i = 1 \iff v_i \in G_{\mathrm{GT}}\). Hence, \(G[V_s] = G_{\mathrm{GT}}\), which maximises \(\mathrm{Fid}^{+\mathrm{prob}}\) and minimises \(\mathrm{Fid}^{-\mathrm{prob}}\) (Theorem \ref{th:gt}).
\end{proof}

\begin{theorem}
    \label{th:sparsity}
    Let \(f_m\) be a WOMBAT whitebox model with readout \(r_2\).  Let \(G_\mathrm{GT} \subsetneq G\) be ground truth explanation of \(f_m\) on \(G\). Let there only be one occurrence of motif \(m\) in \(G\). Then, \(G_{GT}\) maximises sparsity among all possible explanations that minimise \(\mathrm{Fidelity}^{-\mathrm{prob}}\). 
\end{theorem}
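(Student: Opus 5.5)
Since sparsity is $1 - |V_{G_s}|/|V_G|$ and $|V_G|$ is fixed, maximising sparsity is the same as minimising $|V_{G_s}|$. So the claim reduces to a cardinality statement: every explanation $G_s$ that minimises $\mathrm{Fidelity}^{-\mathrm{prob}}$ satisfies $|V_{G_s}| \geq |V_{G_{\mathrm{GT}}}|$. I read ``explanation'' here, as in Theorem~\ref{th:auroc}, as an induced subgraph $G_s = G[V_s]$. I also take it, as in Theorem~\ref{th:gt}, that $m \subseteq G$, so that $f_m(G)=1$.

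\textbf{Step 1: characterise the minimisers.} Because $r_2$ binarises the output on masked (unscaled) inputs, $f_m(G_s)\in\{0,1\}$. Hence $\mathrm{Fidelity}^{-\mathrm{prob}}(G,G_s) = 1 - f_m(G_s) \in \{0,1\}$. By Theorem~\ref{th:gt} the minimum value $0$ is attained, by $G_{\mathrm{GT}}$. So the minimisers are exactly the $G_s$ with $f_m(G_s)=1$. Masking semantics (zero features neither send nor receive messages, with non-positive biases) let us treat $f_m(G_s)$ as $f_m$ evaluated on the subgraph $G_s$ itself. Then the reliability property~\eqref{eqn:reliability} gives $f_m(G_s)=1 \iff m \subseteq G_s$.

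\textbf{Step 2: containment.} Take any minimiser $G_s$. It contains some occurrence of $m$. Since $G_s \subseteq G$, that occurrence is also an occurrence of $m$ in $G$. By hypothesis there is only one such occurrence, and $G_{\mathrm{GT}}$ is its node set (including the wildcard and degree-constraint neighbours, per the ground-truth definition). Therefore $V_{G_{\mathrm{GT}}} \subseteq V_{G_s}$, which gives $|V_{G_s}| \ge |V_{G_{\mathrm{GT}}}|$. Since $G_{\mathrm{GT}}$ is itself a minimiser, it attains the maximal sparsity.

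\textbf{Main obstacle: Step 2 under wildcards and degree constraints.} Two things have to be checked.
\begin{enumerate}[(a)]
\item The ground truth contains all neighbours required by degree constraints. Deleting one of them from $G_s$ could break the match, so the occurrence found in $G_s$ must use exactly those nodes.
\item ``Only one occurrence'' must be read as uniqueness of the matched node set. Distinct SMARTS embeddings onto the same atoms are harmless. An occurrence in $G_s$ whose node set differs from $G_{\mathrm{GT}}$ would be a second occurrence in $G$, which is excluded.
\end{enumerate}
A further subtlety is that degree constraints are evaluated on the subgraph. An occurrence in $G_s$ therefore has at most the degrees it has in $G$. I would argue that it is still an occurrence in $G$ as long as the constraints are lower bounds; if some constraint is an upper bound, this needs a case check against the relaxed SMARTS. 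I would state this reading explicitly and rely on the paper's empirical validation of~\eqref{eqn:reliability} on masked subgraphs.
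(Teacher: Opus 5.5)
Your proposal is correct and is essentially the paper's argument, stated directly rather than by contradiction: the paper supposes a minimiser $G'$ with $|V_{G'}| < |V_{G_{\mathrm{GT}}}|$, picks a motif node missing from $G'$, and uses the reliability property to get $f_m(G')=0$, hence $\mathrm{Fidelity}^{-\mathrm{prob}}(G,G')=1$, a contradiction. Your Step~2 also makes explicit the uniqueness step the paper leaves implicit (any occurrence in $G'$ is an occurrence in $G$, hence is the one forming $G_{\mathrm{GT}}$), and your degree-constraint caveat is a refinement rather than a gap, since the paper relies on the same reliability assumption for masked subgraphs.
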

\begin{proof}
    Note that under assumption that there is only one motif \(m\) present in \(G\), it follows that \(G_{\mathrm{GT}}= m\). Assume that there exists some \(G'\) such that: 
    \[
            \mathrm{Fid}^{-\mathrm{prob}}(G, G') = 0
    \]
    and
    \[
        \mathrm{Sparsity}(G, G') > \mathrm{Sparsity}(G, G_{\mathrm{GT}})\]

    Since \(\mathrm{Sparsity}(G, G') > \mathrm{Sparsity}(G, G_{\mathrm{GT}})\), it follows that \(|V_{G'}| < |V_{G_{\mathrm{GT}}}|\). Hence, there exists a node \(v \in G\) such that \(v\in G_{\mathrm{GT} } = m\) but \(v \not\in G'\). However:

    \[
    \exists_{v \in m} \; v \not\in G' \implies m \not\subseteq G' \implies f_m(G') = 0
    \]
    
   This implies that \(\mathrm{Fid}^{-\mathrm{prob}}(G, G') = f_m(G) - f_m(G') = 1 - 0 = 1\), leading to a contradiction.
\end{proof}

Note that Theorem \ref{th:sparsity} does \textit{not} necessarily hold in the multi-motif case, as attributing only one of the motifs could be more sparse than attribution created by the ground truth. Attributing only one motif, would, however, not maximise \(\mathrm{Fid}^{+\mathrm{prob}}\): optimal attribution (sparsity-wise) that would maximise/minimise relevant fidelities would likely need to contain one ``full'' motif and attribute singular nodes belonging to other occurrences of \(m\) in the graph. We believe, however, such attribution to be unintuitive.

\subsection{Limitations of AUROC-based evaluation}
\label{sec:limit-auroc}

Our evaluation has focused on whether the explainer ranks nodes correctly, as we believe incorrect ranking to be of fundamental significance. That said, achieving \(\mathrm{AUROC}=1.0\) by itself is not enough for an explainer to be regarded as \textit{perfect}. While \(\mathrm{AUROC}=1\) guarantees that there exists a threshold that perfectly separates ground-truth from non-ground-truth nodes, it does not quantify calibration of attribution scores or margins between them. In downstream applications the ground truth is unknown, making it harder to identify an ``importance'' threshold. Therefore, the results in this section should be interpreted as statements about the existence of fidelity-optimal thresholded masks, rather than about the quality of attribution magnitudes.

% \newpage
\input{pattern_4}

\FloatBarrier
\input{appendix_pge_gnn}
\FloatBarrier
\input{appendix_architectures}
\FloatBarrier
\input{appendix_additional_results}
\FloatBarrier
\input{appendix_additional_info}

\FloatBarrier

\end{document}

%% file: math_commands.tex
\usepackage{amsmath,amsfonts,bm}

\def\eqref#1{equation~\ref{#1}}
\def\1{\bm{1}}

\DeclareMathAlphabet{\mathsfit}{\encodingdefault}{\sfdefault}{m}{sl}
\SetMathAlphabet{\mathsfit}{bold}{\encodingdefault}{\sfdefault}{bx}{n}

%% file: figures/patterns.tex
\begin{figure}
    \centering
    \includegraphics[width=0.99\linewidth]{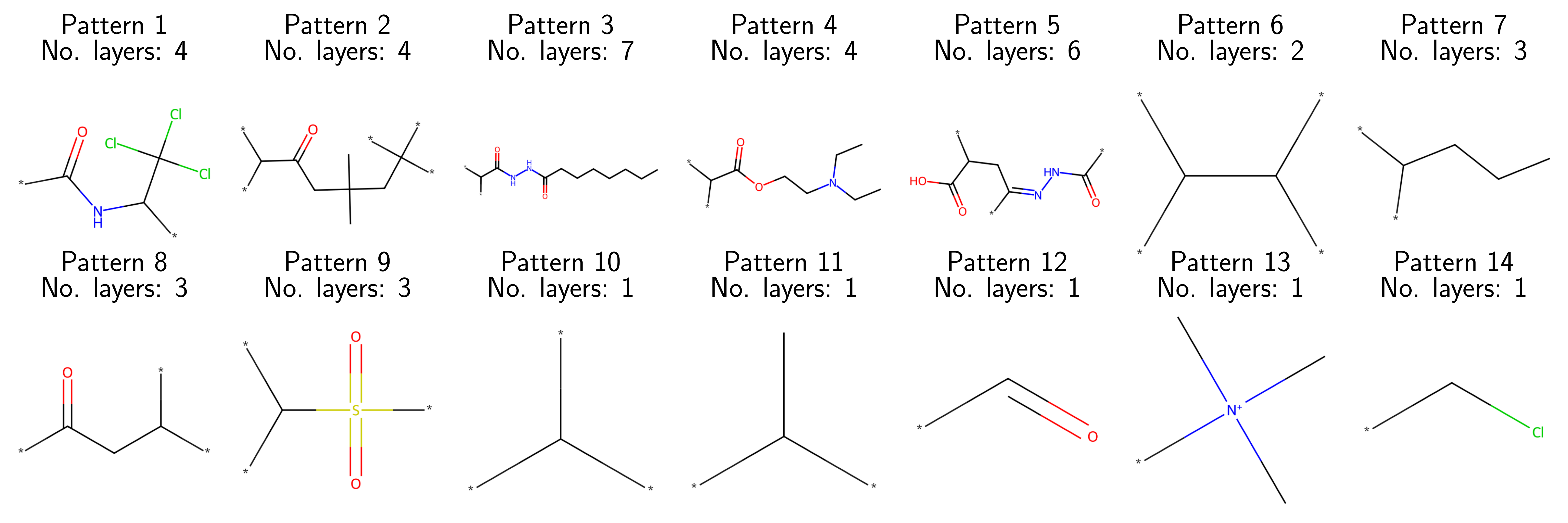}
    \caption{WOMBAT introduces 14 whitebox GNN models for detection of 14 distinct, chemically meaningful motifs. The biggest model, detecting Pattern 3, uses 7 MPNN layers before readout.}
    \label{fig:patterns}
\end{figure}

%% file: figures/better_fig_4.tex
\begin{figure}[t]
    \centering
    \includegraphics[width=\linewidth]{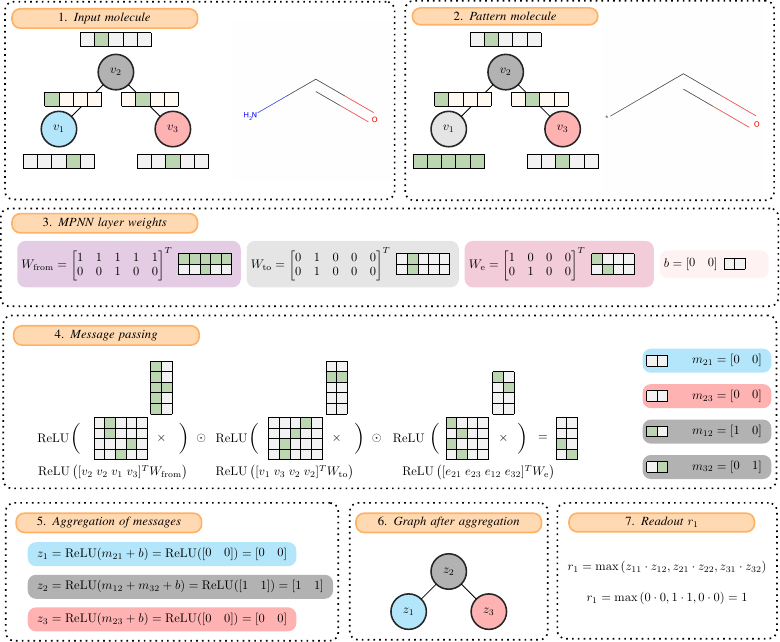}

    \caption{Schematic of the whitebox MPNN architecture and molecular computation flow. The process begins by defining the message-passing weights $W_{\text{from}}$, $W_{\text{to}}$, and $W_{e}$; notably, weights corresponding to wildcard tokens in the target motif are initialized to 1. Input graphs are represented via one-hot encoding. Message propagation from node $v_i$ to $v_j$ is computed according to Eq.~\ref{eq:message}, followed by the update of node representations $z_i$ via Eq.~\ref{eq:z} (here, the bias term $b=[0,0]$). The final prediction, in this case, a value of 1, is obtained through the readout operation defined in Eq.~\ref{eqn:readout_1}. Note that, we only present one type of readout, and the colorcoding denotes the destination of incoming messages: cyan, gray, and red correspond to nodes $v_1$, $v_2$, and $v_3$, respectively.}
    \label{fig:wombat}
    \vspace{-1em}
\end{figure}

%% file: figures/example_attributions.tex
\begin{figure}[t]
    \begin{subfigure}[t]{0.5\linewidth}
        \centering
        \includegraphics[width=0.7\linewidth]{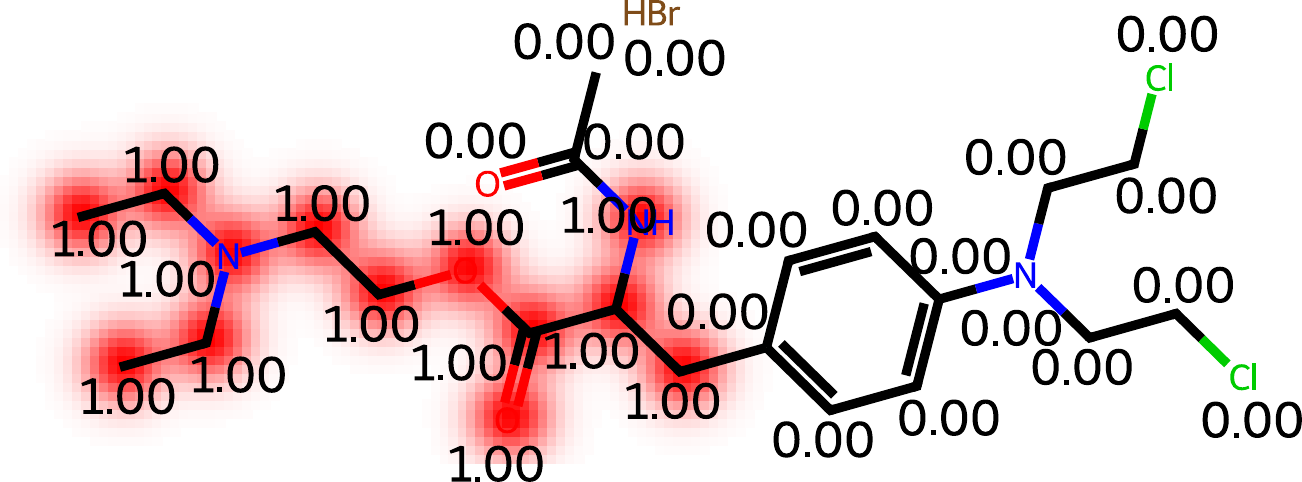}
        \caption{Ground truth}
        \label{fig:attrs_gt}
    \end{subfigure}%
    \hfill 
    \begin{subfigure}[t]{0.5\linewidth}
        \centering
        \includegraphics[width=0.7\linewidth]{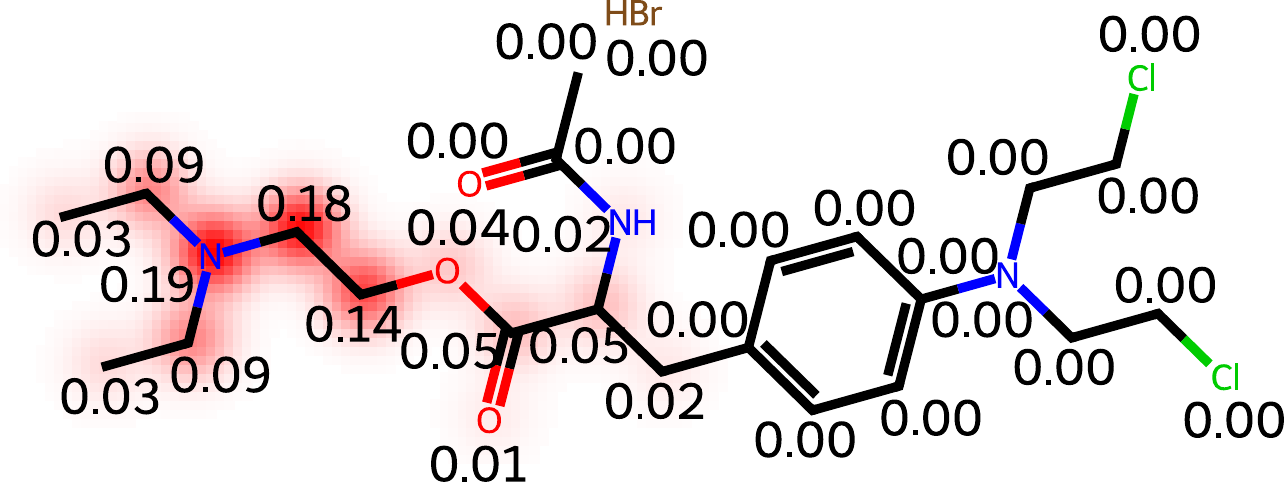}
        \caption{Integrated Gradients}
        \label{fig:attrs_ig}
    \end{subfigure}%
    \\
    \begin{subfigure}[t]{0.5\linewidth}
        \centering
        \includegraphics[width=0.7\linewidth]{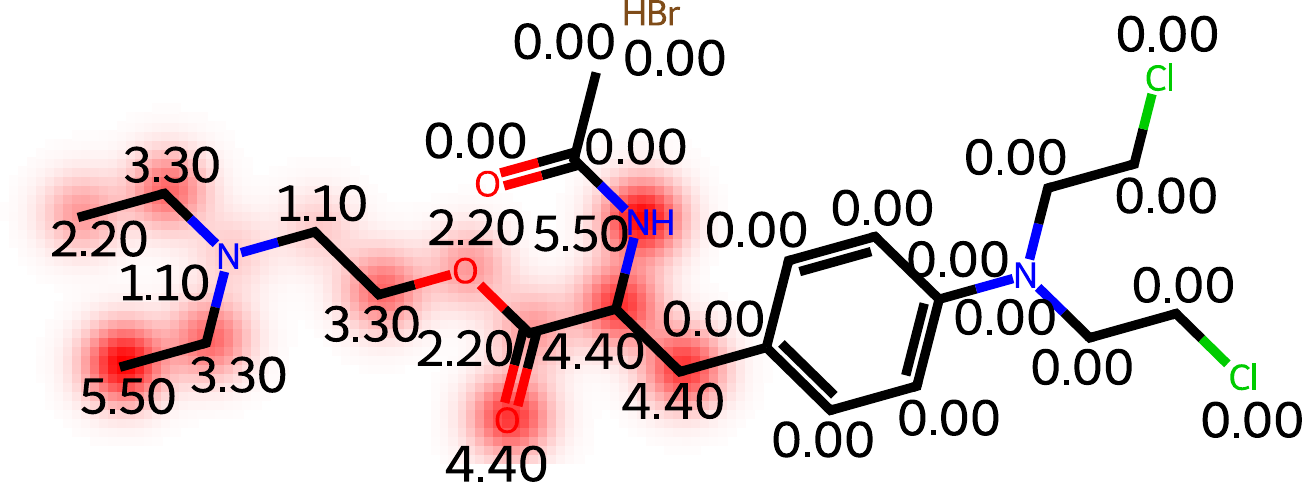}
        \caption{Shapley Value Sampling}
        \label{fig:attrs_shap}
    \end{subfigure}%
    \hfill 
        \begin{subfigure}[t]{0.5\linewidth}
        \centering
        \includegraphics[width=0.7\linewidth]{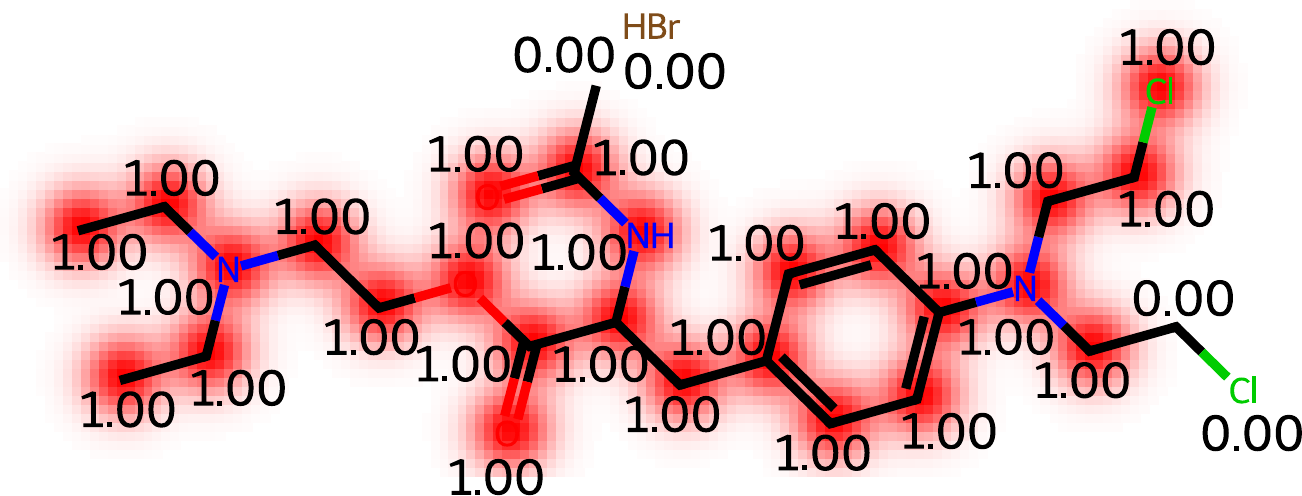}
        \caption{SubgraphX}
        \label{fig:attrs_sgx}
    \end{subfigure}%
    \caption{Attributions generated by different explainers when evaluated on the whitebox task of recognizing Pattern 4. Other explainers (i.e., PGExplainer, GNNExplainer, Saliency, and Input \(\times\) Gradient) produced null explanations. Integrated Gradients and Shapley Value Sampling attribute all motif nodes higher than non-motif nodes, whereas SubgraphX attributes importance to multiple non-motif nodes.}
    \label{fig:attrs}
    \vspace{-1em}
\end{figure}

%% file: tables/perfection_table_pos_combined.tex
\begin{table}[t]
\centering
\caption{Explainer performance across different motifs. We observe that IG is the most successful in this scenario. Moreover, we can see that a change of a readout function directly impacts the performance of IG. A checkmark indicates the explainer achieved a perfect score (\(\mathrm{AUROC}=1\)) for all positive instances of the given pattern. The sub-columns \(r_1\) and \(r_2\) denote the readout functions utilized by the whitebox models, corresponding to Eq. \ref{eqn:readout_1} and Eq. \ref{eqn:readout_2}, respectively.}
\label{tab:model_eval_perfection_combined}
\begin{scriptsize}
\begin{tabular}{c cccccccccccccc}
\toprule
\multirow{2}{*}{\textbf{Pattern}} & \multicolumn{2}{c}{\textbf{GNN Expl.}} & \multicolumn{2}{c}{\textbf{Input $\times$ Grad.}} & \multicolumn{2}{c}{\textbf{IG}} & \multicolumn{2}{c}{\textbf{PG Expl.}} & \multicolumn{2}{c}{\textbf{SHAP Sampl.}} & \multicolumn{2}{c}{\textbf{Saliency}} & \multicolumn{2}{c}{\textbf{SubgraphX}} \\
\cmidrule(lr){2-3} \cmidrule(lr){4-5} \cmidrule(lr){6-7} \cmidrule(lr){8-9} \cmidrule(lr){10-11} \cmidrule(lr){12-13} \cmidrule(lr){14-15}
& \(r_1\) & \(r_2\) & \(r_1\) & \(r_2\) & \(r_1\) & \(r_2\) & \(r_1\) & \(r_2\) & \(r_1\) & \(r_2\) & \(r_1\) & \(r_2\) & \(r_1\) & \(r_2\) \\
\midrule
1  & \redcross & \redcross & \redcross & \redcross & \redcross & \redcross & \redcross & \redcross & \redcross & \redcross & \redcross & \redcross & \redcross & \redcross \\
2  & \redcross & \redcross & \redcross & \redcross & \redcross & \greencheckmark & \redcross & \redcross & \redcross & \redcross & \redcross & \redcross & \redcross & \redcross \\
3  & \redcross & \redcross & \greencheckmark & \redcross & \greencheckmark & \greencheckmark & \redcross & \redcross & \redcross & \redcross & \greencheckmark & \redcross & \redcross & \redcross \\
4  & \redcross & \redcross & \redcross & \redcross & \redcross & \redcross & \redcross & \redcross & \redcross & \redcross & \redcross & \redcross & \redcross & \redcross \\
5  & \redcross & \redcross & \redcross & \redcross & \redcross & \redcross & \redcross & \redcross & \redcross & \redcross & \redcross & \redcross & \redcross & \redcross \\
6  & \redcross & \redcross & \redcross & \redcross & \redcross & \greencheckmark & \redcross & \redcross & \redcross & \redcross & \redcross & \redcross & \redcross & \redcross \\
7  & \redcross & \redcross & \redcross & \redcross & \redcross & \greencheckmark & \redcross & \redcross & \redcross & \redcross & \redcross & \redcross & \redcross & \redcross \\
8  & \redcross & \redcross & \redcross & \redcross & \redcross & \greencheckmark & \redcross & \redcross & \redcross & \redcross & \redcross & \redcross & \redcross & \redcross \\
9  & \redcross & \redcross & \redcross & \redcross & \redcross & \greencheckmark & \redcross & \redcross & \redcross & \redcross & \redcross & \redcross & \redcross & \redcross \\
10 & \redcross & \redcross & \redcross & \redcross & \redcross & \greencheckmark & \redcross & \redcross & \redcross & \redcross & \redcross & \redcross & \redcross & \redcross \\
11 & \redcross & \redcross & \redcross & \redcross & \redcross & \greencheckmark & \redcross & \redcross & \redcross & \redcross & \redcross & \redcross & \redcross & \redcross \\
12 & \redcross & \greencheckmark & \redcross & \redcross & \redcross & \greencheckmark & \redcross & \redcross & \greencheckmark & \greencheckmark & \redcross & \redcross & \redcross & \redcross \\
13 & \redcross & \redcross & \redcross & \redcross & \redcross & \greencheckmark & \redcross & \redcross & \redcross & \redcross & \redcross & \redcross & \redcross & \redcross \\
14 & \redcross & \redcross & \redcross & \redcross & \redcross & \greencheckmark & \redcross & \redcross & \greencheckmark & \greencheckmark & \redcross & \redcross & \redcross & \redcross \\
\bottomrule
\end{tabular}
\end{scriptsize}
\end{table}

%% file: figures/pattern_5_misattrib.tex
\begin{figure}[t]
    \begin{subfigure}[t]{0.45\linewidth}
        \centering
        \includegraphics[trim={0 41cm 0 41cm},clip, width=\linewidth]{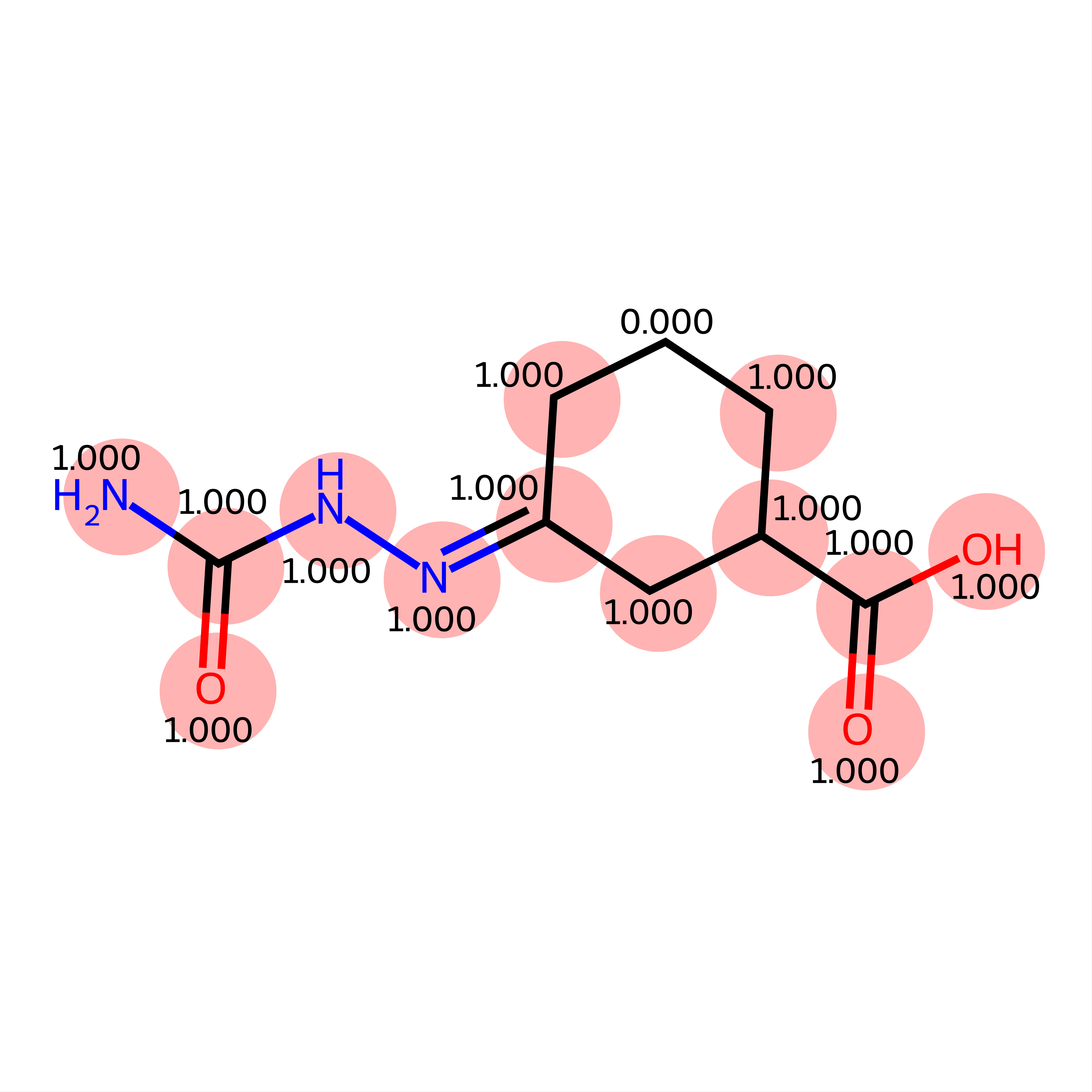}
        \caption{Ground truth attribution}
        \label{fig:pattern_5_fail_GT}
    \end{subfigure}%
    \hfill
    \begin{subfigure}[t]{0.45\linewidth}
        \centering
        \includegraphics[trim={0 41cm 0 41cm},clip, width=\linewidth]{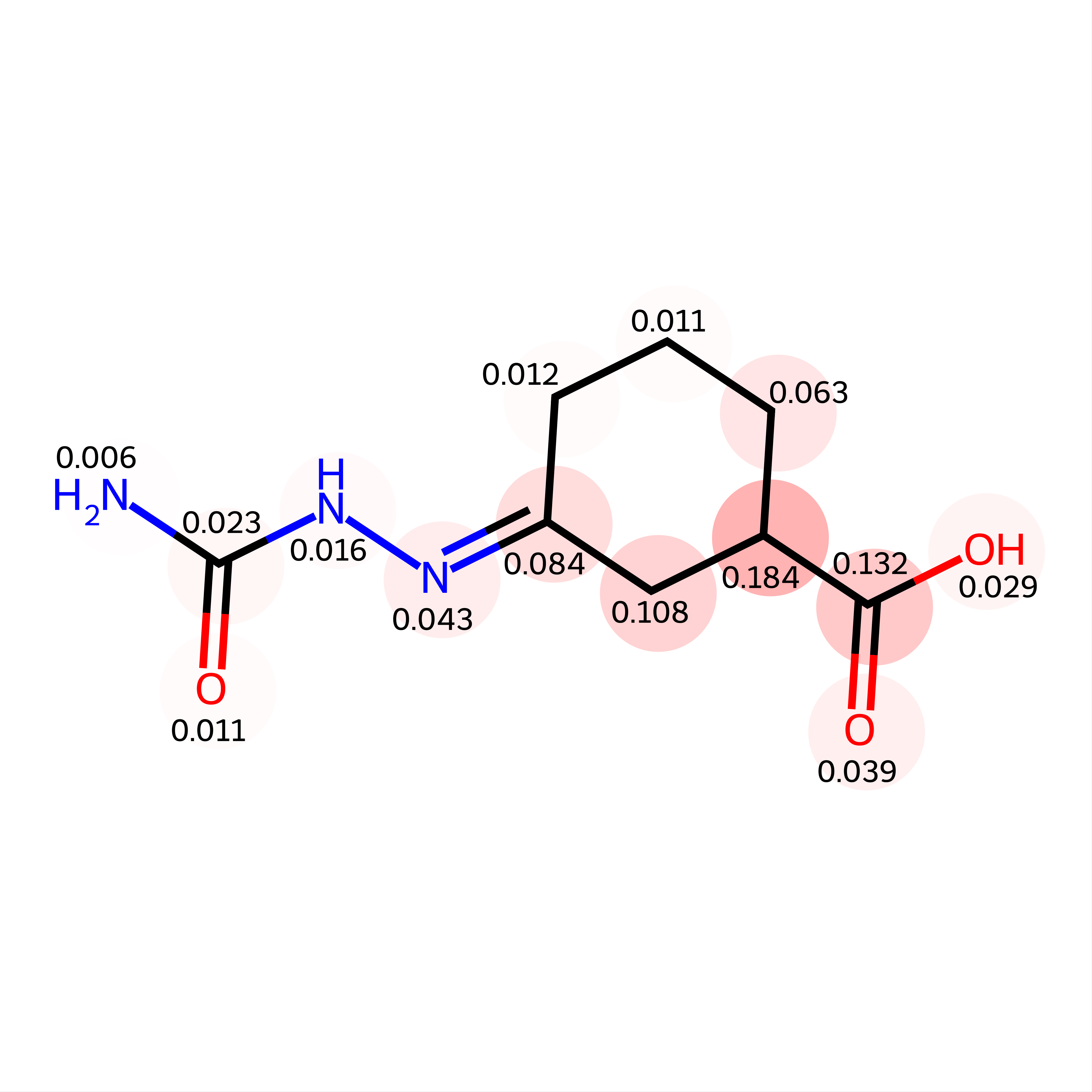}
        \caption{Integrated Gradients attribution}
        \label{fig:pattern_5_misattrib}
    \end{subfigure}%
    \hfill 
    \caption{Comparison of attribution generated by Integrated Gradients and the ground truth. Integrated Gradients is unfaithful to the model by assigning higher attribution to the non-motif carbon in the cyclohexane ring than to the \(\mathrm{NH}_2\) functional group.}
    \label{fig:pattern_5_fail}
    \vspace{-1em}
\end{figure}

%% file: figures/pattern_5_ig.tex
\begin{figure}[t]
\begin{subfigure}[t]{0.3\linewidth}
    \begin{tikzpicture}[scale=0.45]
    \begin{axis}[
        title={},
        xlabel={Interpolation (\%)},
        ylabel={Gradient},
        legend pos=north west,
        ymajorgrids=true,
        xmajorgrids=true,
        grid style=dashed,
        scaled y ticks=false,
        xmin=0.94,
        legend cell align=left,
        /pgf/number format/fixed,
        /pgf/number format/precision=3,
    ]

    \addplot[
        color=red,
        mark=none,
        very thick
        ] table [col sep=comma]{figures/data/atom_7_feats_gradients.csv};
    \addlegendentry{Gradients of the atom's type OHE entry}

    \addplot[
            color=blue,
            mark=none,
            very thick
            ] table [col sep=comma]{figures/data/atom_7_hydrogen_feats_gradients.csv};
        \addlegendentry{Gradients of the atom's hydrogen count OHE entry}
        
    \end{axis}
    \end{tikzpicture}
    \caption{Gradients of the non-motif atom during IG interpolation.}
    \label{subfig:pattern5-ig-main-PATH}
    \end{subfigure}%
    \hfill 
    \begin{subfigure}[t]{0.3\linewidth}
        \begin{tikzpicture}[scale=0.45]
    \begin{axis}[
        title={},
        xlabel={Interpolation (\%)},
        ylabel={Network's prediction},
        legend pos=north west,
        ymajorgrids=true,
        xmajorgrids=true,
        grid style=dashed,
        scaled y ticks=false,
        legend cell align=left,
        /pgf/number format/fixed,
        /pgf/number format/precision=3,
    ]

    \addplot[
        color=red,
        mark=none,
        very thick
        ] table [col sep=comma] {figures/data/predictions_vs_scaling_atom_7_r1.csv};
        
    \addlegendentry{\(r_1\) readout}
    
      \addplot[
        color=blue,
        mark=none,
        very thick
        ] table [col sep=comma]{figures/data/predictions_vs_scaling_atom_7_r2.csv};
        \addlegendentry{\(r_2\) readout}
    \end{axis}
    \end{tikzpicture}
    \caption{Network's prediction during interpolation of the non-motif atom (other atoms and edges are not scaled (\(\alpha=1.0\)))}
    \label{subfig:pattern5-ig-main-READOUTS}
    \end{subfigure}%
    \hfill 
        \begin{subfigure}[t]{0.3\linewidth}
    \begin{tikzpicture}[scale=0.45]
    \begin{axis}[
        title={},
        xlabel={Interpolation coefficient},
        ylabel={Activation},
        legend pos=north west,
        ymajorgrids=true,
        xmajorgrids=true,
        grid style=dashed,
        scaled y ticks=false,
        xmin=0.98,
        ymax=1.1,
        legend cell align=left,
        /pgf/number format/fixed,
        /pgf/number format/precision=3,
    ]

    \addplot[
        color=red,
        mark=none,
        very thick
        ] table [col sep=comma]{figures/data/atom_8_activations_vs_scaling_atom_7.csv};
    \addlegendentry{Activation of the neighbouring atom}

\addplot[
        color=blue,
        mark=none,
        very thick
        ] table [col sep=comma]{figures/data/full_predictions_vs_scaling_atom_7.csv};
    \addlegendentry{Network prediction (\(r_2\))}
    
    \end{axis}
    \end{tikzpicture}
    \caption{Interpolation of the non-motif atom while other atoms and edges are interpolated at \(\alpha=0.98\).}
    \label{subfig:pattern5-ig-main-IMPACT}
    \end{subfigure}
    \hfill 
    \label{fig:pattern5-ig-main}
    \caption{Analysis of causes of spurious attribution of the non-motif atom in Pattern 5. Figure \ref{subfig:pattern5-ig-main-PATH} shows gradients that are accumulated by this atom near the decision boundary. Figure \ref{subfig:pattern5-ig-main-READOUTS} demonstrates that under readout \(r_2\) the atom does not impact the network's final prediction, regardless of its interpolation state. However, it increased the network's output under readout \(r_1\), as it increases internal activations. Figure \ref{subfig:pattern5-ig-main-IMPACT} shows the effect of interpolating the non-motif atom on activation of a neighbouring node and final network's prediction under readout \(r_2\), providing a justification for curve observed in the Figure \ref{subfig:pattern5-ig-main-PATH}.}
\end{figure}

%% file: figures/leaky_ig.tex
\begin{figure}
    \begin{subfigure}[t]{0.45\linewidth}
        \includegraphics[trim={0 32cm 0 33cm},clip, width=\linewidth]{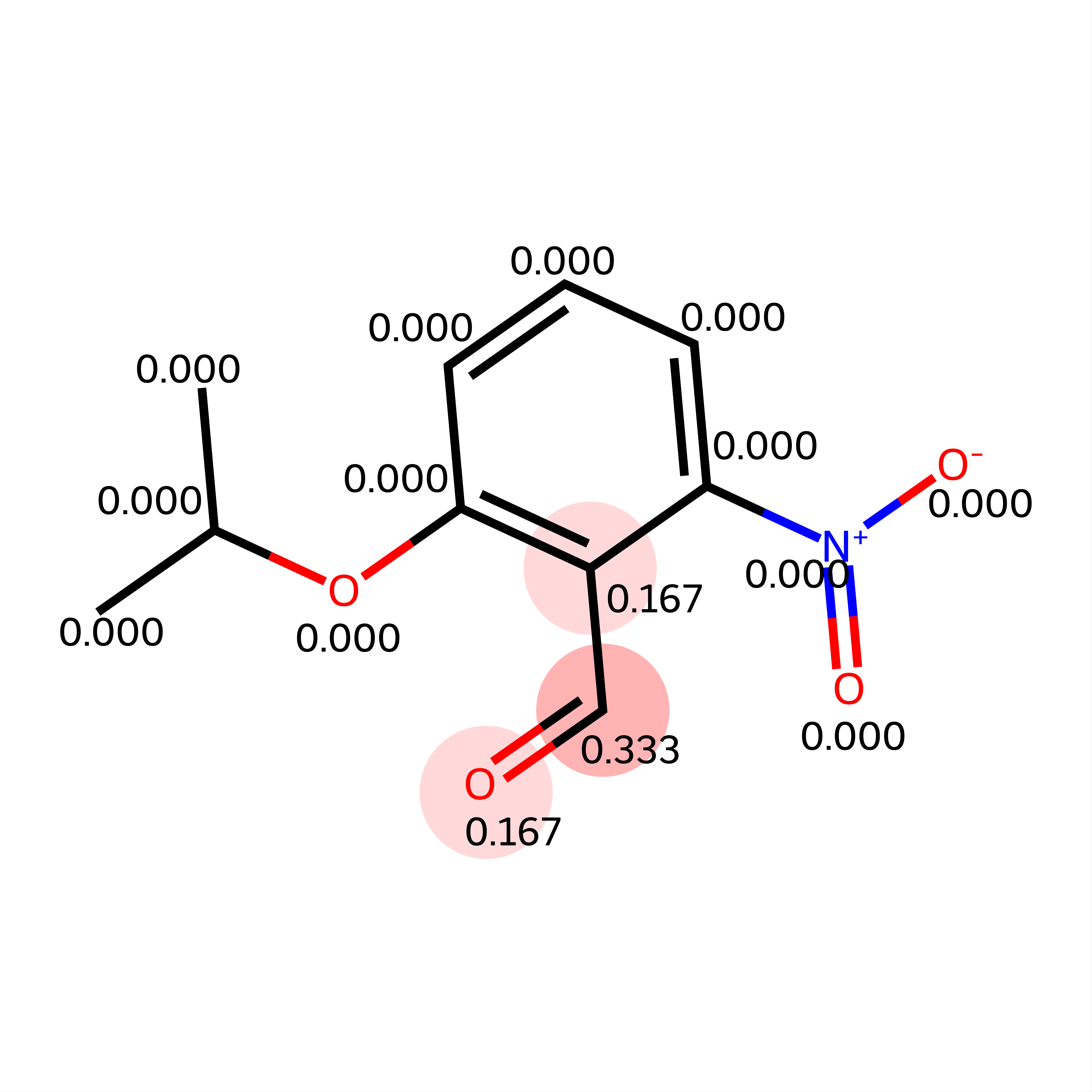}
        \caption{IG attribution for the Pattern 12 whitebox (regular version).}
    \end{subfigure}
    \hfill 
    \begin{subfigure}[t]{0.45\linewidth}
        \includegraphics[trim={0 32cm 0 33cm},clip, width=\linewidth]{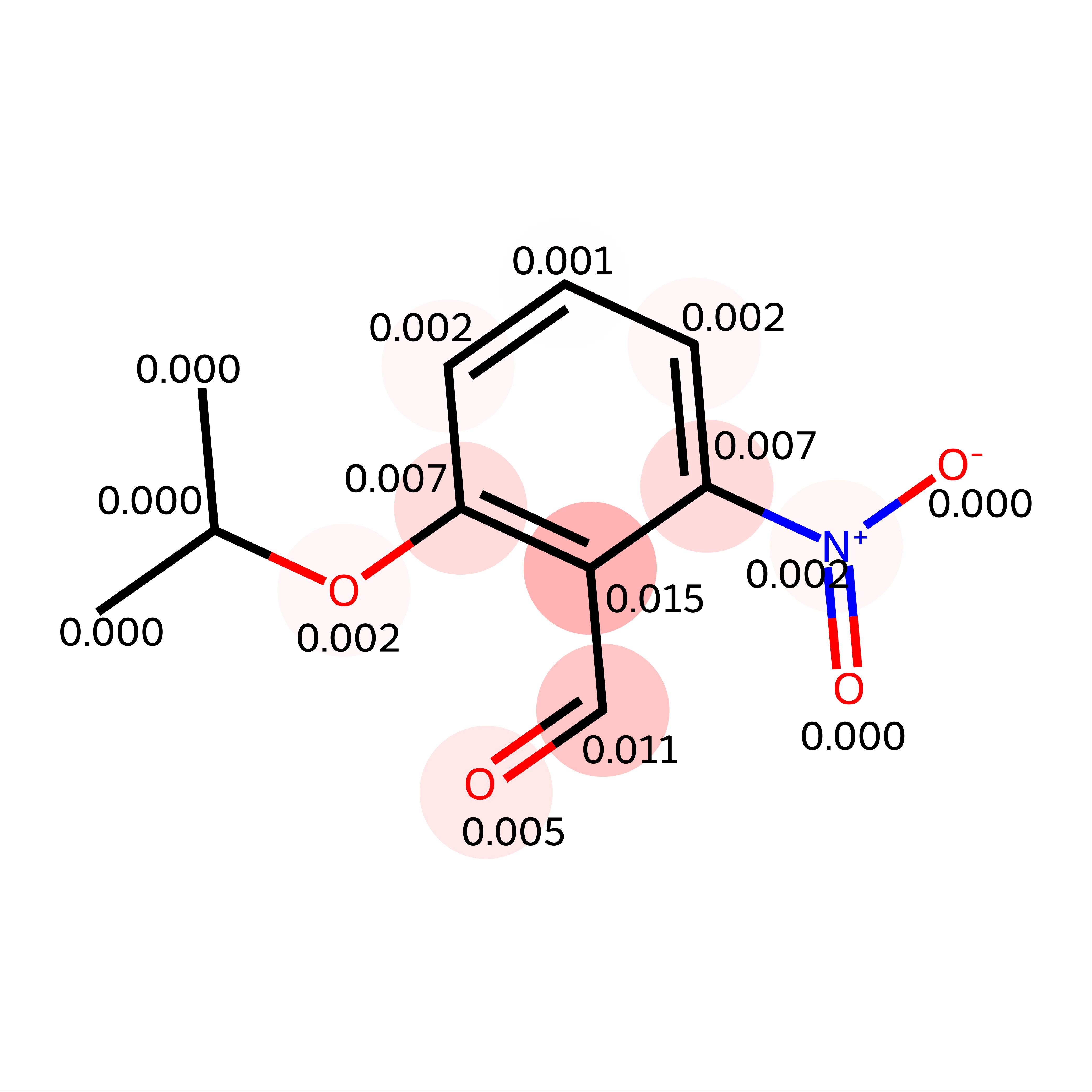}
        \caption{IG attribution for the anti-IG Pattern 12 whitebox. Note the ``leaky'' attribution.}
    \end{subfigure}
    \caption{IG attributions for two Pattern 12 implementations with identical \(r_2\) predictions on one-hot molecular inputs. Here, the regular implementation yields perfect AUROC, while redundant computations involving neighbours cause attribution spillover.}
    \label{fig:leaky_ig}
\end{figure}

%% file: pattern_4.tex
\section{Causes of misattribution in Integrated Gradients}

\label{sec:app_ig_p_4_5}

In this section, we provide implementation details of whiteboxes recognizing Pattern 4 and Pattern 5. Then, we study exact failure modes of Integrated Gradients in these networks.

\subsection{Notation}
\label{sec:notation}

In our architecture, each node is represented by a latent vector $z_i^{(\ell)} \in \mathbb{R}^{d_i}$ after passing through the $\ell$-th layer. We adopt a notation where each dimension is represented by a letter (A through G). A node is considered \textbf{active} for a given feature at layer $\ell$ if its activation at the corresponding vector index exceeds a threshold of $0.5$. For example, if there is a node that after 3rd MPNN layer has 5th element of its latent vector activated, we will say that this node is E3-activated. We adopt this notation, as presenting whitebox weights without any intuition would be highly impractical.

For brevity, we define the relational operator $X \sim Y$ to mean: \textit{An $X$-active node connected via a single bond to at least one $Y$-active node.}, as this is a pattern by which majority of our features are defined.

\subsection{Pattern 4}

\subsubsection{Implementation Detail: Neighbour Counting} 
Features are generally defined by a pattern: \textit{"A node with activation $X$ connected via edge type $Z$ to at least $N$ nodes with activation $Y$."} While ensuring the correct node and edge types is straightforward, counting neighbours beyond the first layer requires mathematical care. In our architecture, counting is implemented by applying a negative bias \textit{after} sum aggregation. Because aggregated messages can exceed binary values (e.g., a node could receive a summed message of 3), naive counting without proper thresholding can yield false positives. Pattern 4 requires nitrogen atom to be connected to 2 identical non-trivial substructures ([CH2][CH3] chains). In this case, this requires 2 separate logical circuits.

\subsubsection{Layer definitions}
\label{sec:pattern4-wihtebox-def}

\paragraph{Layer 1:}
The first layer establishes the foundational chemical features:

\begin{minipage}{0.55\linewidth}
\begin{itemize}[leftmargin=7pt]
    \item \textbf{A1:} Carbon with exactly 2 hydrogens, that is connected via single bond to at least one carbon atom with exactly 3 hydrogens.
    \item \textbf{B1:} Nitrogen that is connected via single bond to at least 3 carbon atoms.
    \item \textbf{C1:} Nitrogen that is connected via single bond to at least 4 carbon atoms.
    \item \textbf{D1:} Carbon with exactly 2 hydrogens, that is connected via single bond to at least one carbon atom.
    \item \textbf{E1:} Oxygen with no hydrogens, that is connected via single bond to at least one carbon atom.
    \item \textbf{F1:} Carbon with no hydrogens, that is connected via a double bond to an oxygen atom.
    \item \textbf{G1:} Carbon with exactly 1 hydrogen that is connected via a single bond to at least 3 other nodes.
\end{itemize}
\end{minipage}%
\begin{minipage}{0.1\linewidth}
    \begin{center}
        \rule{1pt}{100pt}
    \end{center}
\end{minipage}%
\begin{minipage}{0.35\linewidth}
    \begin{center}
        \includegraphics[width=\linewidth]{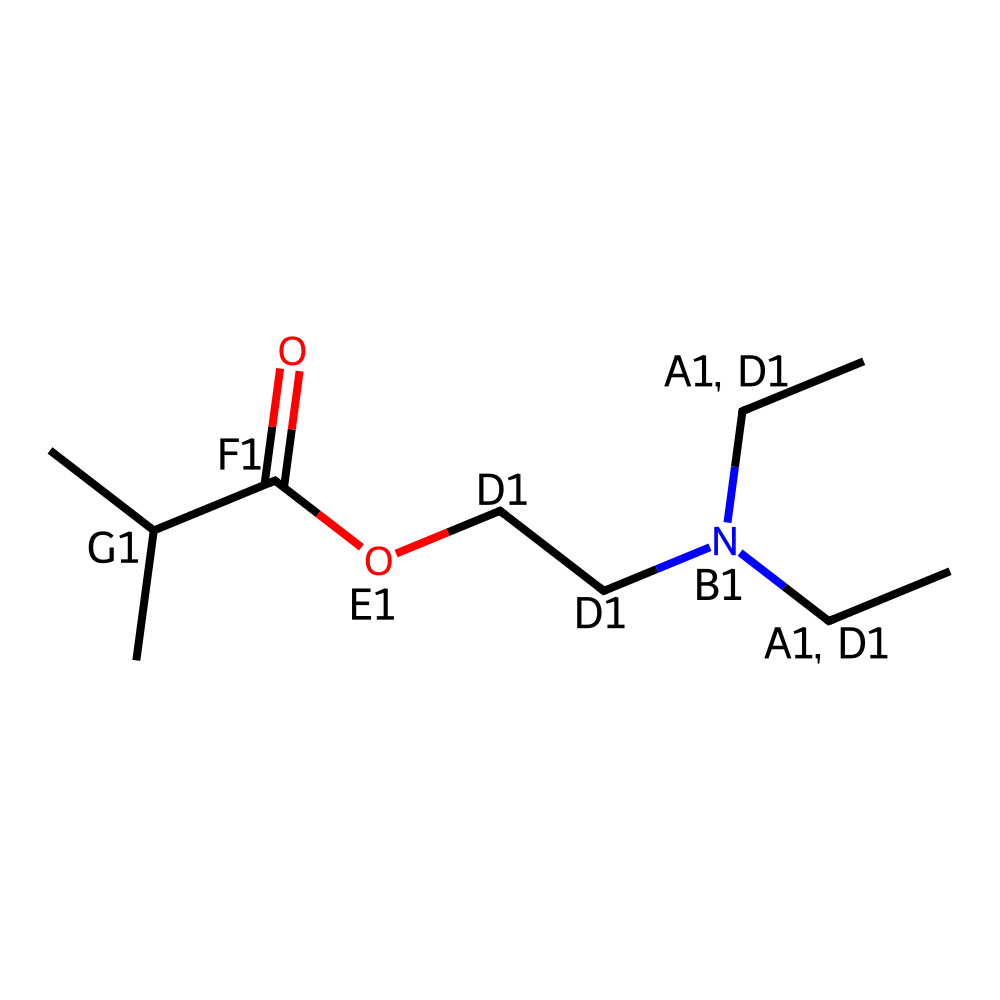}
    \end{center}
\end{minipage}

\paragraph{Layer 2.}
The second layer aggregates low-level chemical features into higher-level ones and splits ''reasoning'' of the model into 2 separate logical circuits (A2, B2).

\begin{minipage}{0.55\linewidth}
\begin{itemize}[leftmargin=7pt]
    \item \textbf{A2:} B1 that is \textbf{not} C1 and is connected via a single bond to at least 2 A1-activated nodes.
    \item \textbf{B2:} C1 that is connected via a single bond to at least 2 A1-activated nodes.
    \item \textbf{C2:} \(\mathrm{D1} \sim \mathrm{D1}\)
    \item \textbf{D2:} \(\mathrm{E1} \sim \mathrm{D1}\)
    \item \textbf{E2:} \(\mathrm{F1} \sim \mathrm{G1}\)
\end{itemize}
\end{minipage}%
\begin{minipage}{0.1\linewidth}
    \begin{center}
          \rule{1pt}{100pt}  
    \end{center}
\end{minipage}%
\begin{minipage}{0.35\linewidth}
    \begin{center}
        \includegraphics[width=\linewidth]{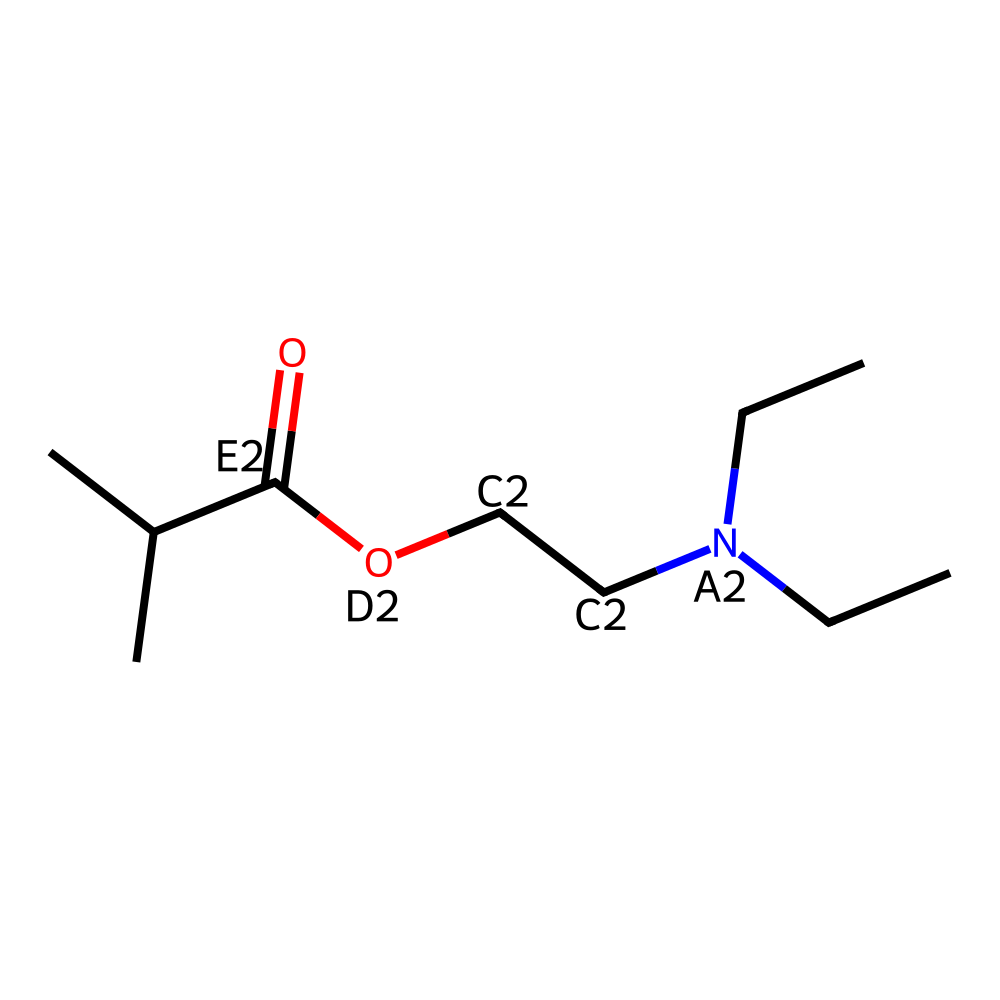}
    \end{center}
\end{minipage}

\textbf{Note that a node cannot be A2 and B2-activated at the same time.} The need to split these computations into 2 separate activations is a direct result of the fact that B1-active nodes can either have \(\mathrm{B1}=1\) (if the nitrogen has 3 neighbouring carbons) or \(\mathrm{B1} = 2\) (if it has 4 neighbouring carbons). Because our message passing relies on multiplication, a naive implementation of A2 activation which does not take that into account would be sensitive to false positives. As such, the split into B1 and C1 activations in the first layer and A2 and B2 in the second guarantees that a correct calculation is performed regardless of nitrogen's number of neighbours. A2 feature (''B1 that is not C1'') is implemented by setting large negative weight for C1 activation, preventing A2-activations on nodes that were C1-activated.  

\paragraph{Layer 3.} This layer continues aggregation of lower-level features into higher-level ones. 
\begin{minipage}{0.55\linewidth}

 \begin{itemize}[leftmargin=7pt]
    \item \textbf{A3:} C2 with A2 or B2.
    \item \textbf{B3:} \(\mathrm{C2} \sim \mathrm{D2}\)
    \item \textbf{C3:} \(\mathrm{D2} \sim \mathrm{E2}\)
\end{itemize}

\end{minipage}%
\begin{minipage}{0.1\linewidth}
    \begin{center}
        \rule{1pt}{100pt}
    \end{center}
\end{minipage}%
\begin{minipage}{0.35\linewidth}
    \begin{center}
        \includegraphics[width=\linewidth]{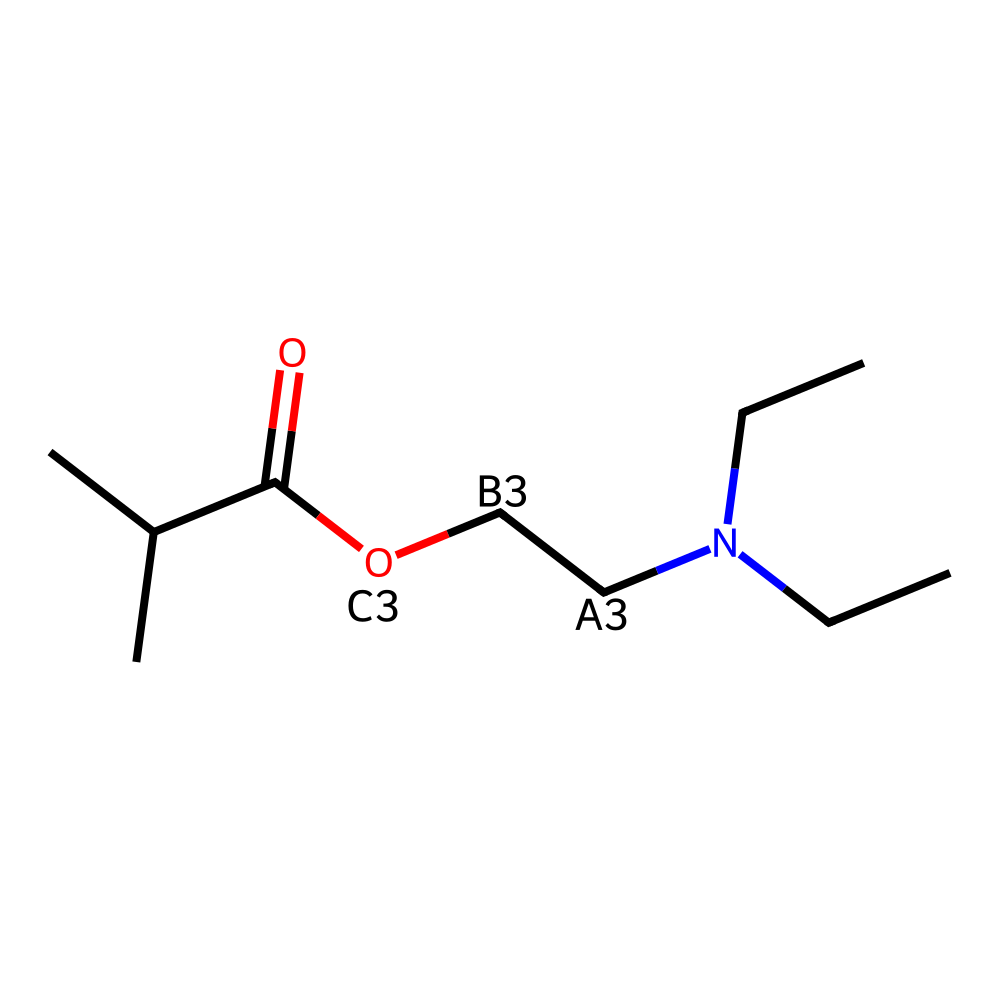}
    \end{center}
\end{minipage}

\paragraph{Layer 4.} For network to classify a molecule as containing Pattern 4, there should exist a node that contains all of the following activations: 

\begin{minipage}{0.55\linewidth}

 \begin{itemize}[leftmargin=7pt]
    \item \textbf{A4:} \(\mathrm{B3} \sim \mathrm{A3}\)
    \item \textbf{B4:} \(\mathrm{B3} \sim \mathrm{C3}\)
\end{itemize}

\end{minipage}%
\begin{minipage}{0.1\linewidth}
    \begin{center}
        \rule{1pt}{100pt}
    \end{center}
\end{minipage}%
\begin{minipage}{0.35\linewidth}
    \begin{center}
        \includegraphics[width=\linewidth]{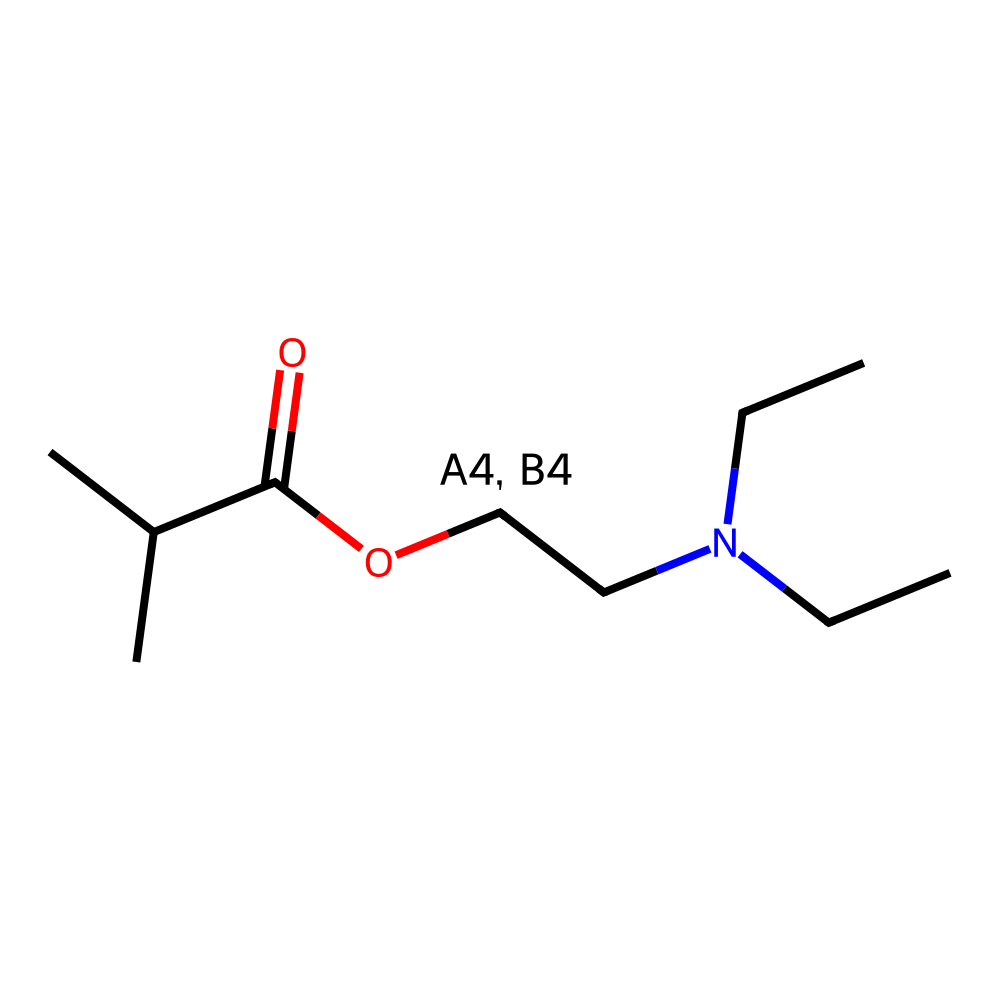}
    \end{center}
\end{minipage}

\paragraph{Readout.}
The final graph-level readout function simply checks for the existence of at least one node that is \(\mathrm{A4}\)-active and \(\mathrm{B4}\)-active.

\subsection{Pattern 5}

In contrast to the GNN for Pattern 4, GNN for Pattern 5 is conceptually simpler, without relying on 2-circuit architecture. 

\subsubsection*{Layer definitions}
\label{sec:pattern5-wihtebox-def}

\paragraph{Layer 1:}
The first layer establishes the foundational chemical features:

\begin{minipage}{0.55\linewidth}
\begin{itemize}[leftmargin=7pt]
    \item \textbf{A1:} Oxygen with exactly 0 hydrogens, that is connected via a double bond to at least one carbon atom.
    \item \textbf{B1:} Oxygen with exactly 1 hydrogen, that is connected via a single bond to at least one carbon atom.
    \item \textbf{C1:} Carbon with exactly 1 hydrogen, that is connected via single bonds to at least 3 other atoms.
    \item \textbf{D1:} Carbon with exactly two hydrogens, that is connected via single bonds to at least 2 other carbon atoms.
    \item \textbf{E1:} Carbon with exactly 0 hydrogens, that is connected to at least 3 other atoms.
    \item \textbf{F1:} Nitrogen with exactly 0 or 1 hydrogens, that is connected via a double bond to at least 1 carbon.
    \item \textbf{G1:} Nitrogen with exactly 1 or 2 hydrogens, that is connected via a single bond to at least 1 other nitrogen. 

\end{itemize}
\end{minipage}%
\begin{minipage}{0.1\linewidth}
    \begin{center}
        \rule{1pt}{100pt}
    \end{center}
\end{minipage}%
\begin{minipage}{0.35\linewidth}
    \begin{center}
        \includegraphics[width=\linewidth]{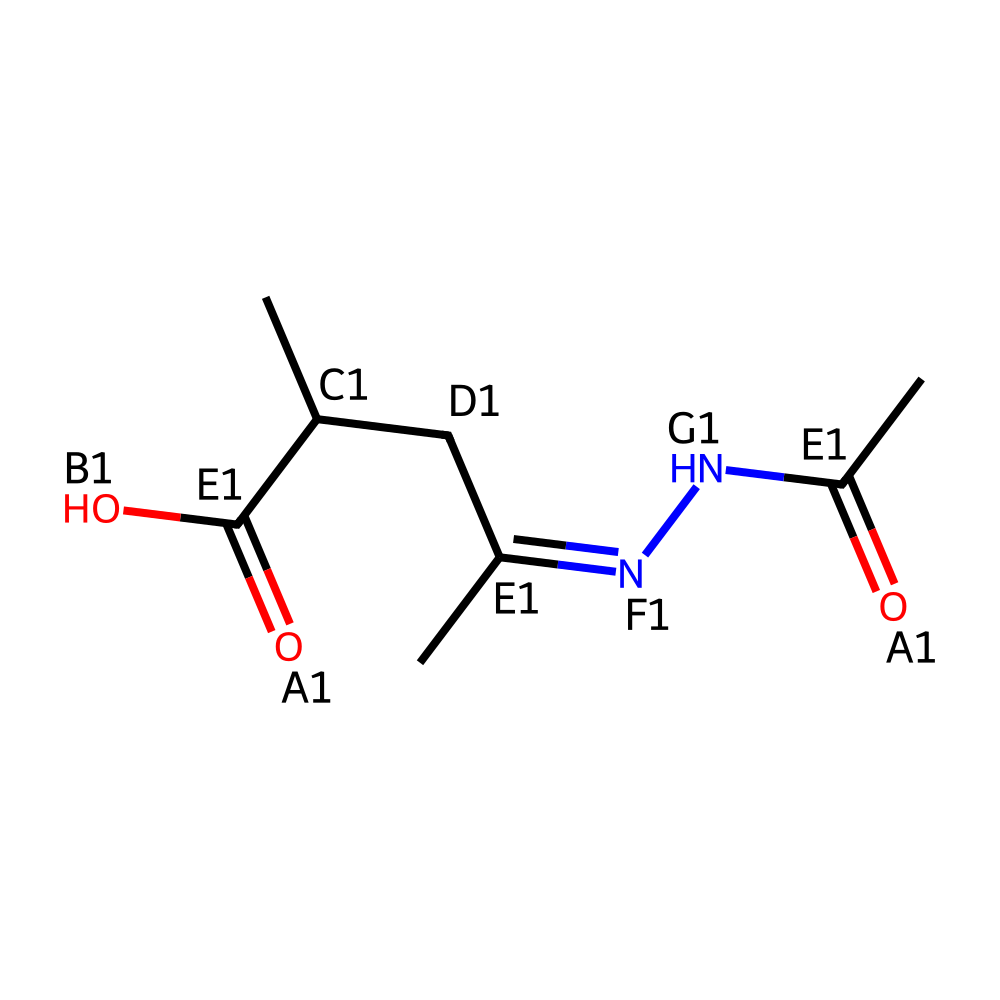}
    \end{center}
\end{minipage}

\paragraph{Layer 2:}
The second layer aggregates low-level chemical features into higher-level ones.

\begin{minipage}{0.55\linewidth}
\begin{itemize}[leftmargin=7pt]
    \item \textbf{A2:} An E1-active node connected via a double bond to at least one A1-active node.
    \item \textbf{B2:} \(\mathrm{B1} \sim \mathrm{E1}\)
    \item \textbf{C2:} \(\mathrm{C1} \sim \mathrm{D1}\)
    \item \textbf{D2:} \(\mathrm{D1} \sim \mathrm{C1}\)
    \item \textbf{E2:} An E1-active node connected via a double bond to at least one F1-active node.
    \item \textbf{F2:} \(\mathrm{F1} \sim \mathrm{G1}\)
    \item \textbf{G2:} \(\mathrm{G1} \sim \mathrm{F1}\)

\end{itemize}
\end{minipage}%
\begin{minipage}{0.1\linewidth}
    \begin{center}
        \rule{1pt}{100pt}
    \end{center}
\end{minipage}%
\begin{minipage}{0.35\linewidth}
    \begin{center}
        \includegraphics[width=\linewidth]{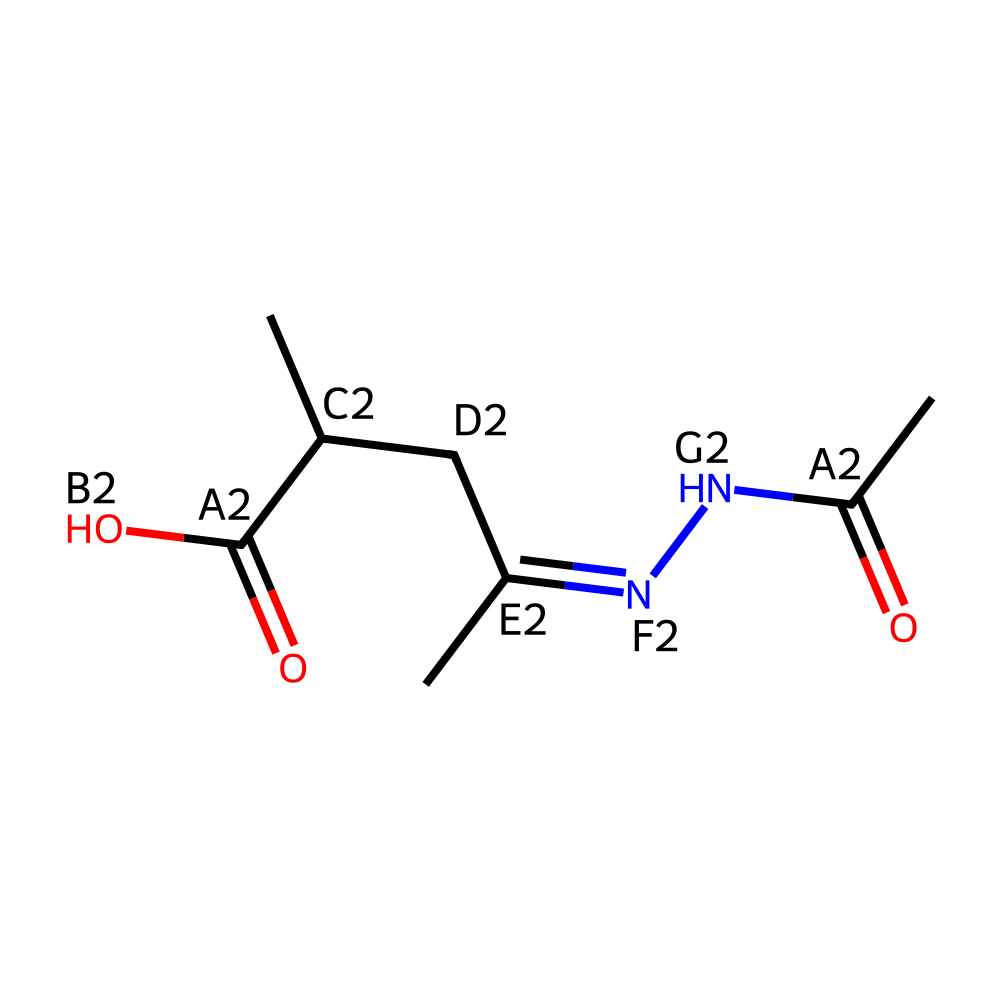}
    \end{center}
\end{minipage}

\paragraph{Layer 3:}
The third layer continues to aggregate lower-level chemical features into higher-level ones.

\begin{minipage}{0.55\linewidth}
\begin{itemize}[leftmargin=7pt]
    \item \textbf{A3:} \(\mathrm{A2} \sim \mathrm{B2}\)
    \item \textbf{B3:} \(\mathrm{C2} \sim \mathrm{A2}\)
    \item \textbf{C3:} \(\mathrm{D2} \sim \mathrm{C2}\)
    \item \textbf{D3:} \(\mathrm{E2} \sim \mathrm{D2}\)
    \item \textbf{E3:}  A F2-active node connected via a double bond to at least one E2-active node.
    \item \textbf{F3:} \(\mathrm{G2} \sim \mathrm{A2}\)
\end{itemize}
\end{minipage}%
\begin{minipage}{0.1\linewidth}
    \begin{center}
        \rule{1pt}{100pt}
    \end{center}
\end{minipage}%
\begin{minipage}{0.35\linewidth}
    \begin{center}
        \includegraphics[width=\linewidth]{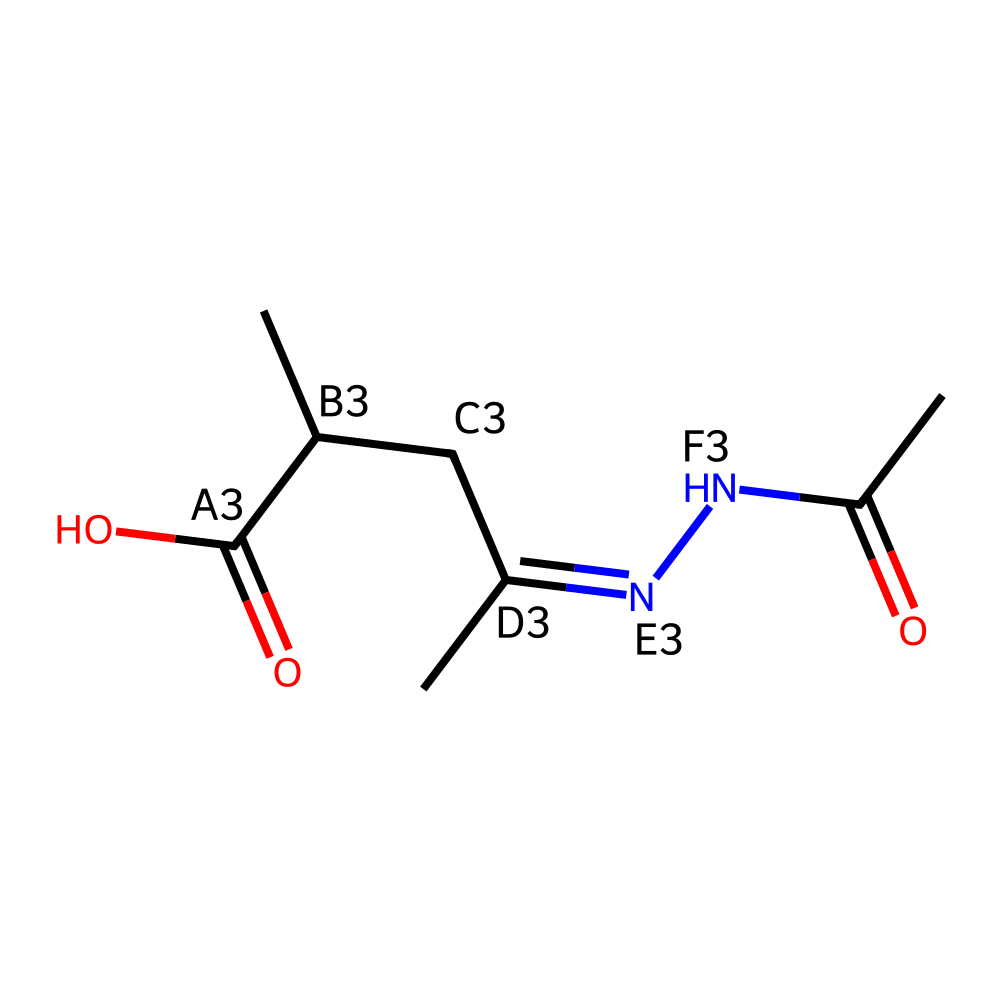}
    \end{center}
\end{minipage}

\paragraph{Layer 4:}
The fourth layer continues to aggregate lower-level chemical features into higher-level ones.

\begin{minipage}{0.55\linewidth}
\begin{itemize}[leftmargin=7pt]
    \item \textbf{A4:} \(\mathrm{B3} \sim \mathrm{A3}\)
    \item \textbf{B4:} \(\mathrm{C3} \sim \mathrm{B3}\)
    \item \textbf{C4:} \(\mathrm{D3} \sim \mathrm{C3}\)
    \item \textbf{D4:} \(\mathrm{E3} \sim \mathrm{F3}\)
\end{itemize}
\end{minipage}%
\begin{minipage}{0.1\linewidth}
    \begin{center}
        \rule{1pt}{100pt}
    \end{center}
\end{minipage}%
\begin{minipage}{0.35\linewidth}
    \begin{center}
        \includegraphics[width=\linewidth]{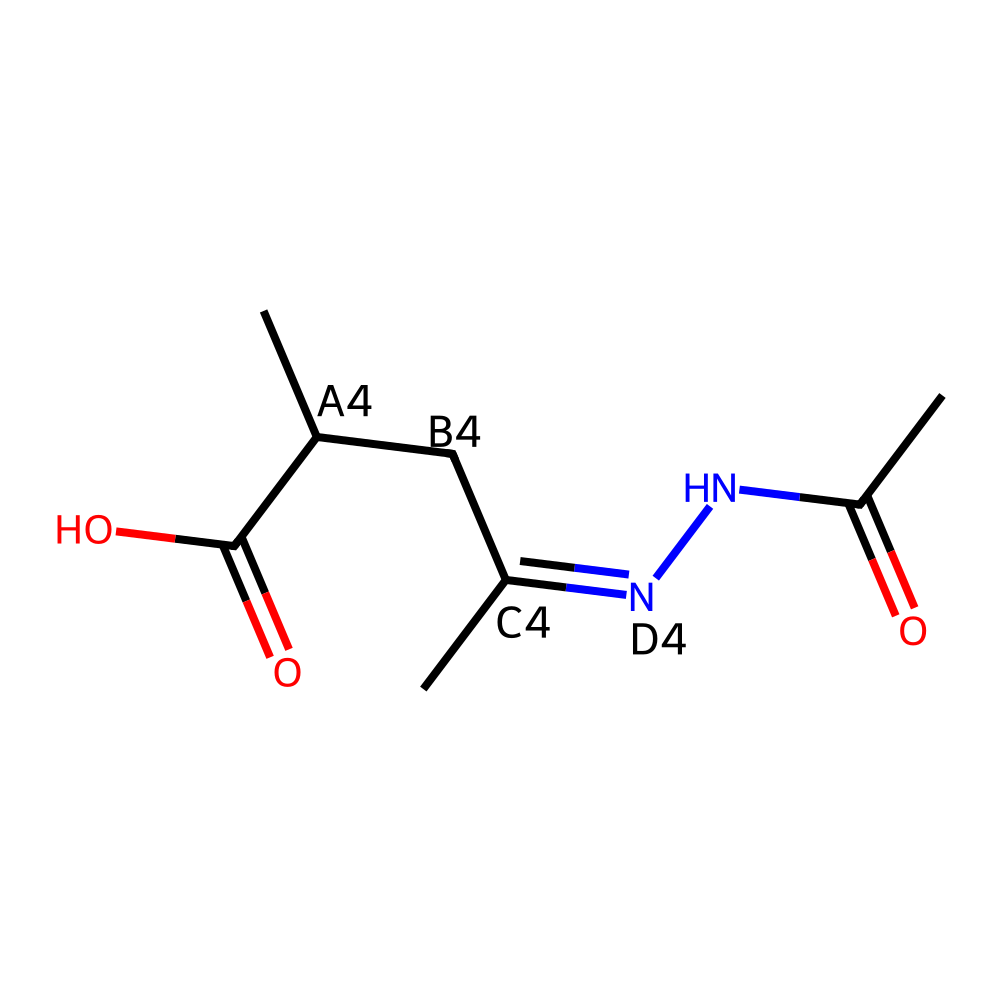}
    \end{center}
\end{minipage}

\paragraph{Layer 5:}
The fifth layer continues to aggregate lower-level chemical features into higher-level ones.

\begin{minipage}{0.55\linewidth}
\begin{itemize}[leftmargin=7pt]
    \item \textbf{A5:} \(\mathrm{B4} \sim \mathrm{A4}\)
    \item \textbf{B5:} A C4-active node connected via a double bond to at least one D4-active node.
\end{itemize}
\end{minipage}%
\begin{minipage}{0.1\linewidth}
    \begin{center}
        \rule{1pt}{100pt}
    \end{center}
\end{minipage}%
\begin{minipage}{0.35\linewidth}
    \begin{center}
        \includegraphics[width=\linewidth]{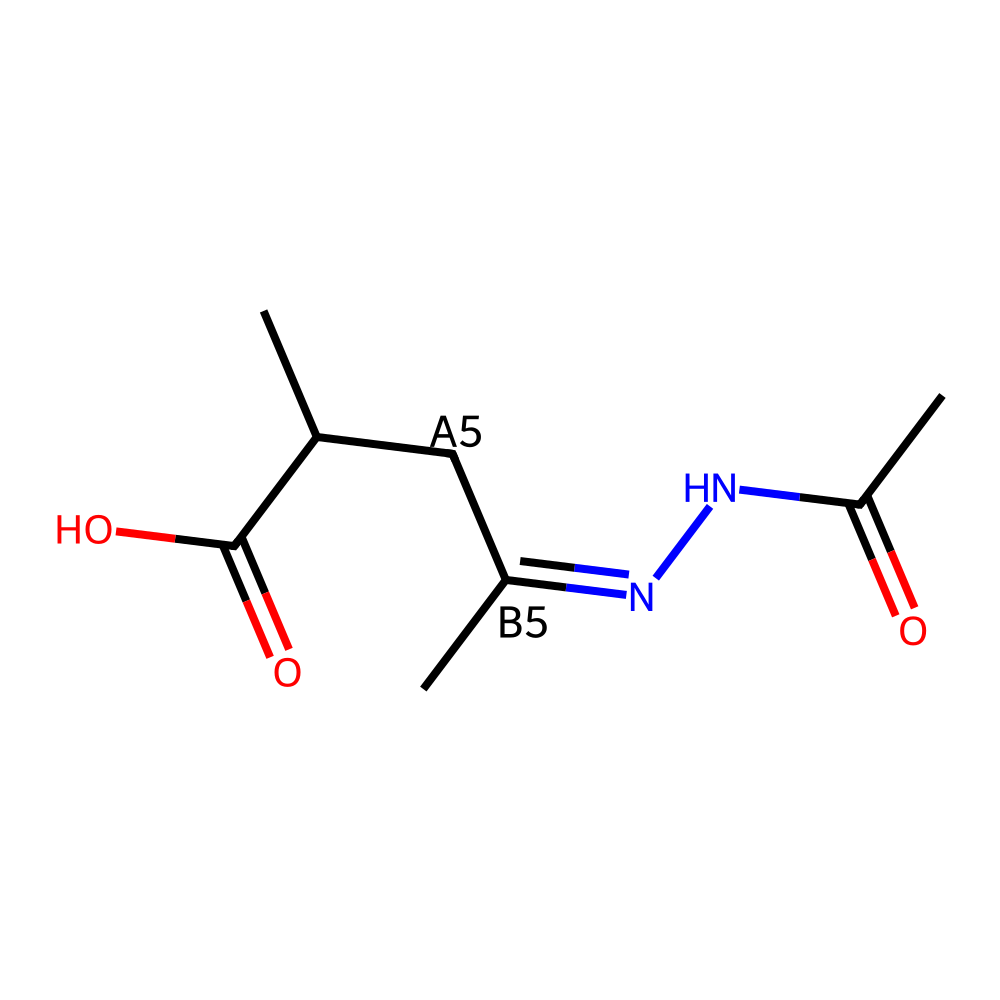}
    \end{center}
\end{minipage}

\paragraph{Layer 6:}
The sixth layer ''looks'' for A5-active node connected via a single bond to B5-active node:

\begin{minipage}{0.55\linewidth}
\begin{itemize}[leftmargin=7pt]
    \item \textbf{A6:} \(\mathrm{A5} \sim \mathrm{B5}\)
\end{itemize}
\end{minipage}%
\begin{minipage}{0.1\linewidth}
    \begin{center}
        \rule{1pt}{100pt}
    \end{center}
\end{minipage}%
\begin{minipage}{0.35\linewidth}
    \begin{center}
        \includegraphics[width=\linewidth]{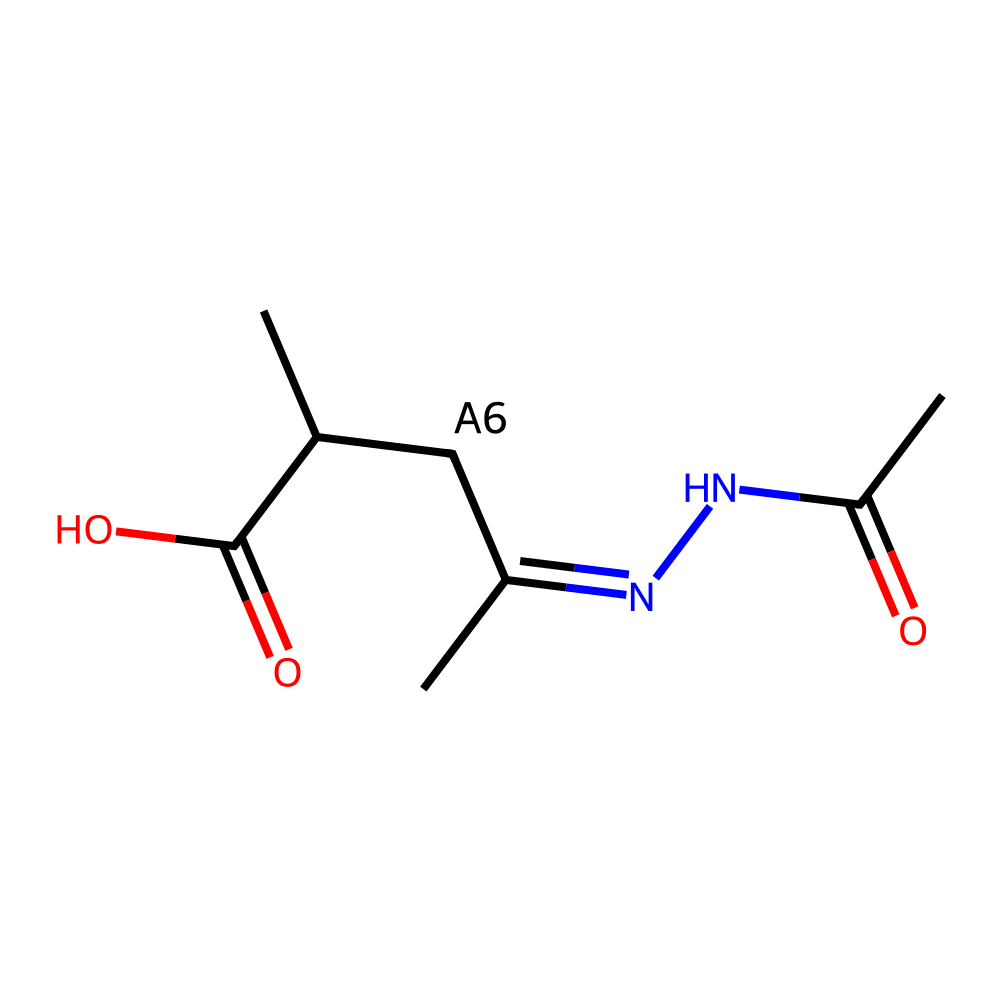}
    \end{center}
\end{minipage}

\paragraph{Readout.}
The final graph-level readout function simply checks for the existence of at least one node that is \(\mathrm{A6}\)-active.

\subsection{Preliminaries}

Introduced by~\cite{sundararajan2017axiomatic}, Integrated Gradients is an attribution method formally defined as: 

\begin{equation}
    \label{eq:ig}
    \mathrm{IG}_i(x) = (x_i - x_i') \int_{\alpha=0}^{1} \frac{\partial f(x' + \alpha(x-x'))}{\partial x_i} \mathrm{d}\alpha
\end{equation}

where: 
\begin{enumerate}
    \item \(f\) is a model being explained. In WOMBAT's case, this means one of our 14 whiteboxes.
    \item \(x\) is an input to the network. In WOMBAT's case, this means tensors encoding features of nodes and edges (which we will denote, respectively, as \(x_V\) and \(x_{E}\)) of a given graph (as specified in Section \ref{sec:design}).
    \item \(\mathrm{IG}_i(x)\) is attribution of a specific element of the input, which can be identified by the number \(i\). 
    \item \(x_i'\) is a \textit{baseline}. This is a parameter of the method; it should represent a ''neutral'' input to the network, i.e. an input that yields a neutral prediction. In WOMBAT's case, this means tensors \(x'_V\) and \(x'_E\) that match shapes of \(x_V\) and \(x_E\), but fully consist of zeroes. All of our networks predict that there is no pattern present when receiving zero-baseline as an input, making the baseline selection justified.
\end{enumerate}

As such, in WOMBAT's case Equation \ref{eq:ig} can be seen as 2 equations:

\begin{align*}
    \mathrm{IG}_{V_i}(x_V, x_E) &= x_{V_i} \int_{\alpha=0}^1 \frac{\partial f(\alpha x_V, \alpha x_E)}{\partial x_{V_i}} \mathrm{d}\alpha\\
    \mathrm{IG}_{E_i}(x_V, x_E) &= x_{E_i} \int_{\alpha=0}^1 \frac{\partial f(\alpha x_V, \alpha x_E)}{\partial x_{E_i}} \mathrm{d}\alpha
\end{align*}

In practice, however, we don't use edge attributions for our evaluations. As such, we can simplify these equations back to 1 equation:

\begin{equation}
    \mathrm{IG}_{i}(x_V, x_E) = x_{V_i} \int_{\alpha=0}^1 \frac{\partial f(\alpha x_V, \alpha x_E)}{\partial x_{V_i}} \mathrm{d}\alpha
\end{equation}

For brevity, throughout the remainder of this paper we use the term gradient at interpolation step \(\alpha\), meaning:
\[
x_{V_i} \frac{\partial f(\alpha x_V, \alpha x_E)}{\partial x_{V_i}}
\]
rather than the raw partial derivative.

\subsection{Integrated Gradients in Pattern 4}
\label{sec:ig_in_pattern4_deep}
\input{figures/ig_misattributions}
\input{figures/pattern_4_idxs_and_interp}
\input{figures/pattern_4_ig}

As shown in Figure \ref{fig:pattern_4_fails}, Integrated Gradients can fail by misattributing non-motif atoms. Figure \ref{subfig:pattern4-ig-main-PATH} demonstrates a sudden spike in gradients accumulated by the non-motif atom during the IG's interpolation between baseline and actual input to the network which causes the misattribution. In this section, we investigate \textit{why} it happens in a greater detail. First, we need to index atoms belonging to the molecule. We provide the exact same molecule from Figure \ref{fig:pattern_4_fails} in Figure \ref{fig:pattern_4_idx}, along with node indices and their respective attributions (denoted as \textsc{[Node idx]: [Attribution]}). In this case, Atom 3 receives bigger attribution than Atom 0 or Atom 5, even though it does not belong to the motif -- while these atoms do. 

As demonstrated in Figure \ref{subfig:pattern4-ig-main-PATH}, the gradient spike for Atom 3 occurs when input features are approximately scaled to \(\alpha=0.96\), peaking around \(\alpha=0.98\). We identify one of the paths leading to this issue and demonstrate it intuitively in Figure \ref{fig:pattern4-more-interpolations}. 

Assume that all features are interpolated by a given \(\alpha\). Consider an atom \(a_i\), with atomic number \(n_i\) and number of adjacent hydrogens \(h_i\). Then it is represented by a vector \(v_i\) such that:

\[
v_i = [ 0 \ 0 \ \dots \ \overbrace{\alpha}^{v[n_i]} \ \dots \ \overbrace{\alpha}^{v[100+h_i]} \ \dots ]
\]

That is because in WOMBAT's case, representation of every atom is a concatenation of one hot encoding of its atomic number and number of its adjacent hydrogens. 

Similarly, all edge representations can be expressed as vector of length 4, where \(\alpha\) is the sole non-zero value; its placement is dependent on bond arity.
 
In case of the atom 3, \(n_3=6\) (because it is a carbon) and \(h_3=3\) (because it is a \(\mathrm{CH}_3\) carbon). We define now \(v_i'\):

\[
v_i' = [ 0 \ 0 \ \dots \ \overbrace{\alpha + \varepsilon}^{v[n_i]} \ \dots \ \overbrace{\alpha}^{v[100+h_i]} \ \dots ]
\]

where \(\varepsilon\) is infinitesimally small. In this section, we occasionally refer to \(v_i'\) as \(\varepsilon\)-perturbed representation of \(a_i\). Note that we will only consider perturbing atomic numbers, as we observe gradient spikes for them. 

We now demonstrate a computation path that causes gradients during interpolation for \(v_3\) to be non-zero. Consider activation C1, which was previously defined (see Section \ref{sec:pattern4-wihtebox-def}) as a \textit{nitrogen that is connected via single bond to at least 4 carbon atoms}. In practice, this is implemented by following weight set\footnote{Note that here, for simplicity, we define a weight sets for C1  \textit{only}, which can be denoted as vectors. In practice, \(W_{\mathrm{from}}\), \(W_{\mathrm{to}}\) and \(W_{e}\) are \textit{matrices} (see Figure \ref{fig:wombat}).}: 

\begin{align*}
    W_{\mathrm{from}} &= [\overbrace{\mathrm{OHE(100, 6)}}^{\text{From carbon}} \ \ \ | \ \ \ \overbrace{0 \ \dots \ 0}^{\text{regardless of its hydrogen count}}] \\  
    W_{\mathrm{to}} &= [\overbrace{\mathrm{OHE(100, 7)}}^{\text{To nitrogen}} \ \ \ | \ \ \ \overbrace{0 \ \dots \ 0}^{\text{regardless of its hydrogen count}}] \\ 
    W_{e} &= [\overbrace{1}^{\text{single bond
    }} \ \ 0 \ \ 0 \ \ 0] \\ 
    b &= -3
\end{align*}

Consider messages received by the atom 2 (nitrogen ion):

\[
m_{i2} = v_i W_{\mathrm{from}} \odot v_2W_{\mathrm{to}} \odot e_{\mathrm{i2}} W_{\mathrm{e}}
\]

Since atom 2 is a nitrogen, it follows that:

\[
v_2W_{\mathrm{to}} = \alpha
\]

Similarly, atom 2 is connected \textit{only} to carbon atoms via single bonds, which trivially yields:

\begin{align*}
    \forall_{i \in \mathcal{N}(v_2)} \ v_iW_{\mathrm{from}} &= \alpha \\
    \forall_{i\in \mathcal{N}(v_2)} \ e_{i2}W_{\mathrm{e}} &= \alpha
\end{align*}

As such, \(\forall_{i\in\mathcal{N}(v_2)} \ m_{i2} = \alpha^3\), and, subsequently, sum \(s\) of messages received by atom 2 is equal to \(s =\sum_{i\in\mathcal{N}(v_2)} m_{i2} = 4 \alpha^3\). This means that C1-activation in atom 2 becomes: 

\[
C_1 = \mathrm{ReLU}(s+b) = \mathrm{ReLU}(4\alpha^3 - 3) 
\] 

Consider now message from an \(\varepsilon\)-perturbed node \(a_i\). Since the edge encoding and recipient node encoding did not change, we only need to consider the ''from'' component of the message multiplication:

\[
v_i'W_{\mathrm{from}} = \alpha + \varepsilon
\]

hence: 

\[
m_{i2}' = \alpha^2(\alpha + \varepsilon) = \alpha^3 + \varepsilon \alpha^2
\]

meaning that sum of messages received by the atom 2 would in turn become \(s' =4\alpha^3 + \varepsilon\alpha^2 = s + \varepsilon\alpha^2\), and:

\[
C_1' = \mathrm{ReLU}(4\alpha^3 + \varepsilon\alpha^2 - 3)
\]

As such, for sufficiently large \(\alpha\), \(C_1' = C_1 + \varepsilon_2\), where \(\varepsilon_2\) is some positive constant. Isomorphic reasoning can be performed to show the following:

\begin{enumerate}
    \item \(\varepsilon_2\)-perturbed C1 activation in atom 2 causes some \(\varepsilon_3\)-perturbation of activation B2 in atom 2 for sufficiently large \(\alpha\).
    \item \(\varepsilon_3\)-perturbed B2 activation in atom 2 causes some \(\varepsilon_4\)-perturbation of activation A3 in atom 6 for sufficiently large \(\alpha\).
    \item \(\varepsilon_4\)-perturbed A3 activation in atom 6 causes \(\varepsilon_5\)-perturbation of activation A4 in atom 7 for sufficiently large \(\alpha\).
    \item \(\varepsilon_5\)-perturbation of activation A4 in atom 7 causes \(\varepsilon_6\)-perturbation of model's output for sufficiently large \(\alpha\).
\end{enumerate}

This provides an explicit explanation for the gradient spike.

\subsection{Integrated Gradients in Pattern 5}

\begin{figure}
    \centering
\begin{subfigure}[t]{0.42\linewidth}
        \centering
        \includegraphics[trim={0 40cm 0 40cm},clip, width=\linewidth]{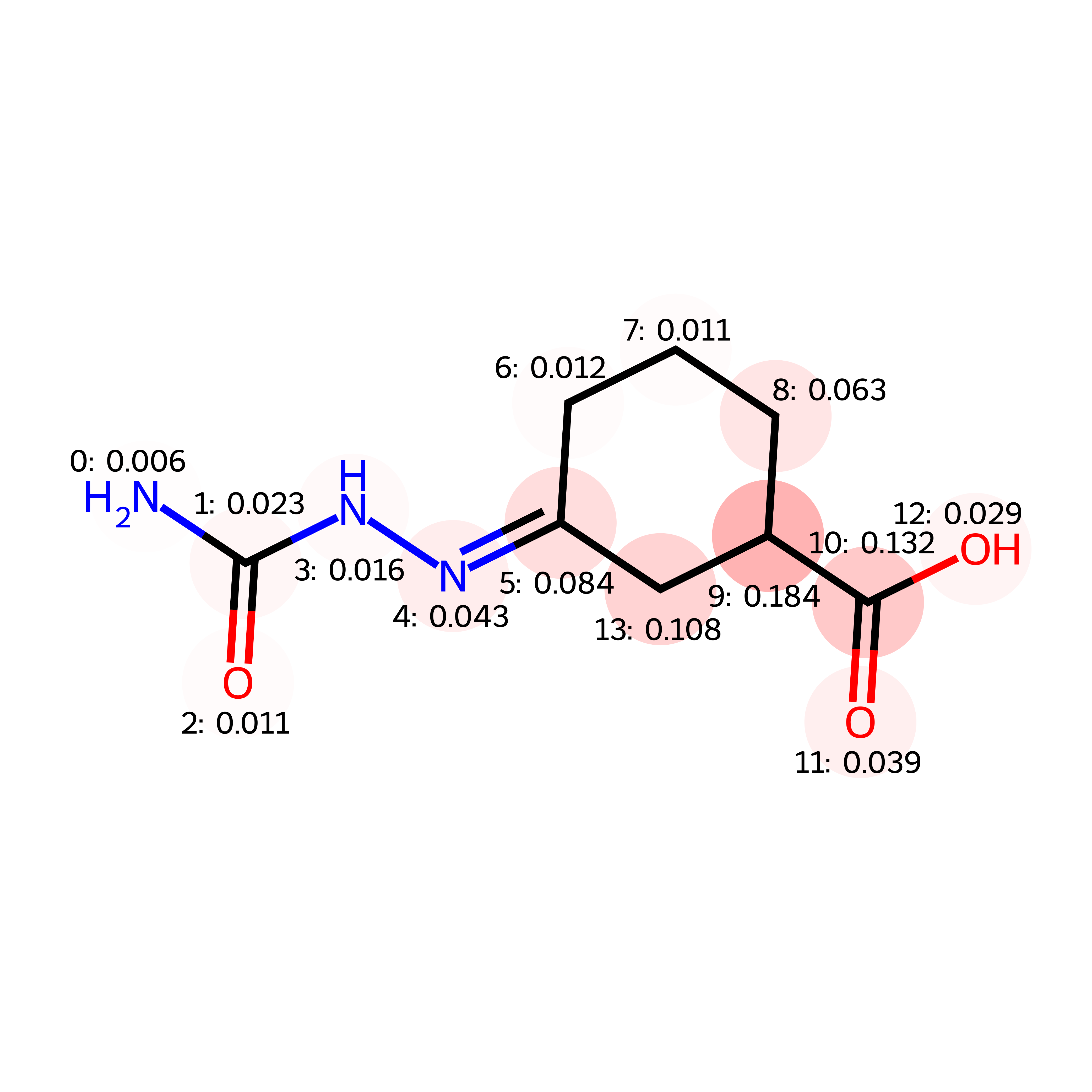}
        \caption{Molecule from Figure \ref{fig:pattern_5_fail}, with atom indices and IG attributions annotated.}
        \label{fig:pattern_5_idx}
\end{subfigure}%
\hfill 
\begin{subfigure}[t]{0.42\linewidth}
    \centering
    \input{figures/pattern_5_interpolation_impact}
    \caption{Impact of interpolation of atom 7 while other atoms and edges are interpolated at \(\alpha = 0.98\).}
    \label{fig:pattern5-more-interpolations}
\end{subfigure}
    \caption{Case study of IG's failure in Pattern 5. Figure \ref{fig:pattern_5_idx} represents node indices and their respective attributions, while Figure \ref{fig:pattern5-more-interpolations} shows some of activations impacted by interpolation of the Atom 7 when all elements are interpolated at \(\alpha=0.98\).}
\end{figure}

Ultimately, the cause of failure of Integrated Gradients in some of Pattern 5's examples is very similar to the one presented for Pattern 4 (Section \ref{sec:ig_in_pattern4_deep}). As such, we will only describe a chain of perturbation propagation contributing to IG's unfaithfulness. We present indices of atoms belonging to the molecule in Figure \ref{fig:pattern_5_idx}. 
\begin{enumerate}
    \item \(\varepsilon_0\)-perturbation of Atom 7 (not belonging to motif) causes \(\varepsilon_1\)-perturbation of D1 activation in the Atom 8 for sufficiently large \(\alpha\).
    \item \(\varepsilon_1\)-perturbation of D1 activation in Atom 8 causes \(\varepsilon_2\)-perturbation of C2 activation in Atom 9 for sufficiently large \(\alpha\).
    \item \(\varepsilon_2\)-perturbation of C2 activation in Atom 9 causes \(\varepsilon_3\)-perturbation of B3 activation in Atom 9 for sufficiently large \(\alpha\).
    \item \(\varepsilon_3\)-perturbation of B3 activation in Atom 9 causes \(\varepsilon_4\)-perturbation of A4 activation in Atom 9 for sufficiently large \(\alpha\).
    \item \(\varepsilon_4\)-perturbation of A4 activation in Atom 9 causes \(\varepsilon_5\)-perturbation of A5 activation in Atom 13 for sufficiently large \(\alpha\).
    \item \(\varepsilon_5\)-perturbation of A5 activation in Atom 13 causes \(\varepsilon_6\)-perturbation of A6 activation in Atom 13 for sufficiently large \(\alpha\).
    \item \(\varepsilon_6\)-perturbation of A6 activation in Atom 13 causes \(\varepsilon_7\)-perturbation of \(r_2\) readout for sufficiently large \(\alpha\), yielding a positive gradient.
\end{enumerate}
We plot values of these activations in Figure \ref{fig:pattern5-more-interpolations}.

%% file: figures/ig_misattributions.tex
\begin{figure}
    \begin{subfigure}{0.45\linewidth}
        \includegraphics[width=\linewidth]{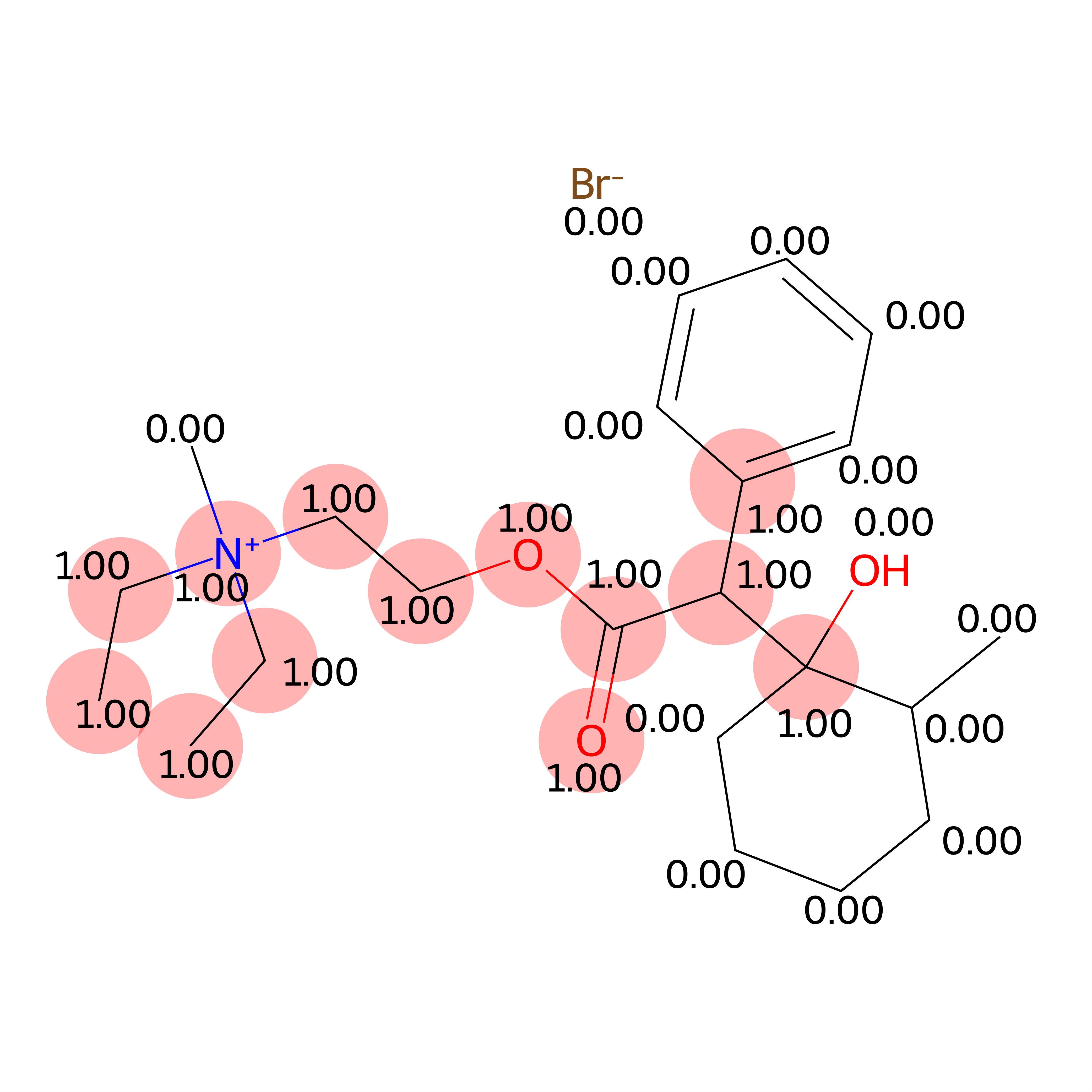}
        \caption{Ground truth}
    \end{subfigure}%
    \hfill 
    \begin{subfigure}{0.45\linewidth}
        \includegraphics[width=\linewidth]{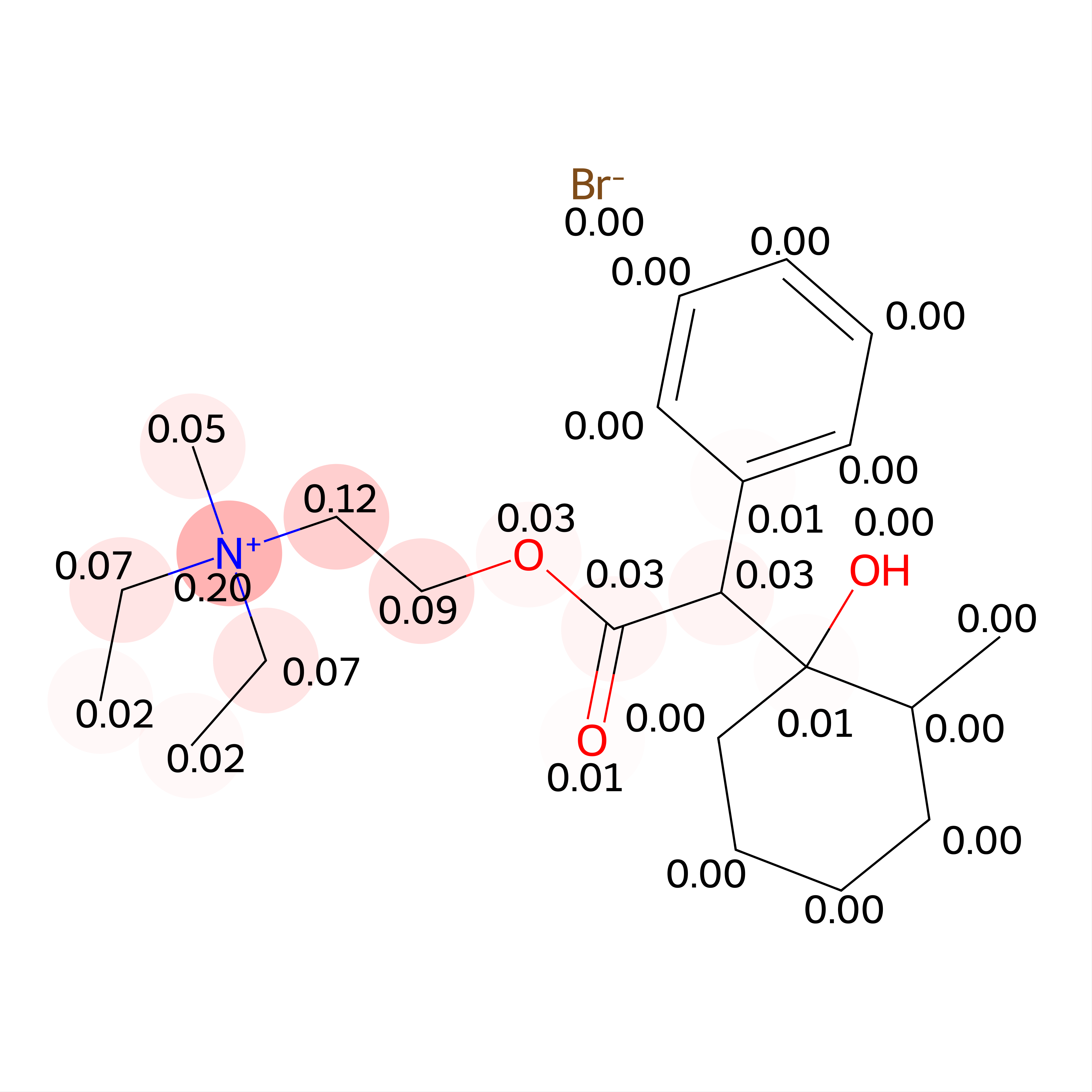}
                \caption{IG's imperfect attribution.}

    \end{subfigure}%
    \caption{Ground truth attribution for one of molecules containing Pattern 4 vs. IG's attributions. IG is unfaithful to the model by attributing the \(\mathrm{CH}_3\) carbon neighbouring the nitrogen ion, even though it does not belong to the ground truth attribution.}
    \label{fig:pattern_4_fails}
\end{figure}

%% file: figures/pattern_4_idxs_and_interp.tex
\begin{figure}
    \centering

    \begin{subfigure}[t]{0.45\linewidth}
        \begin{center}
            \includegraphics[width=0.95\linewidth]{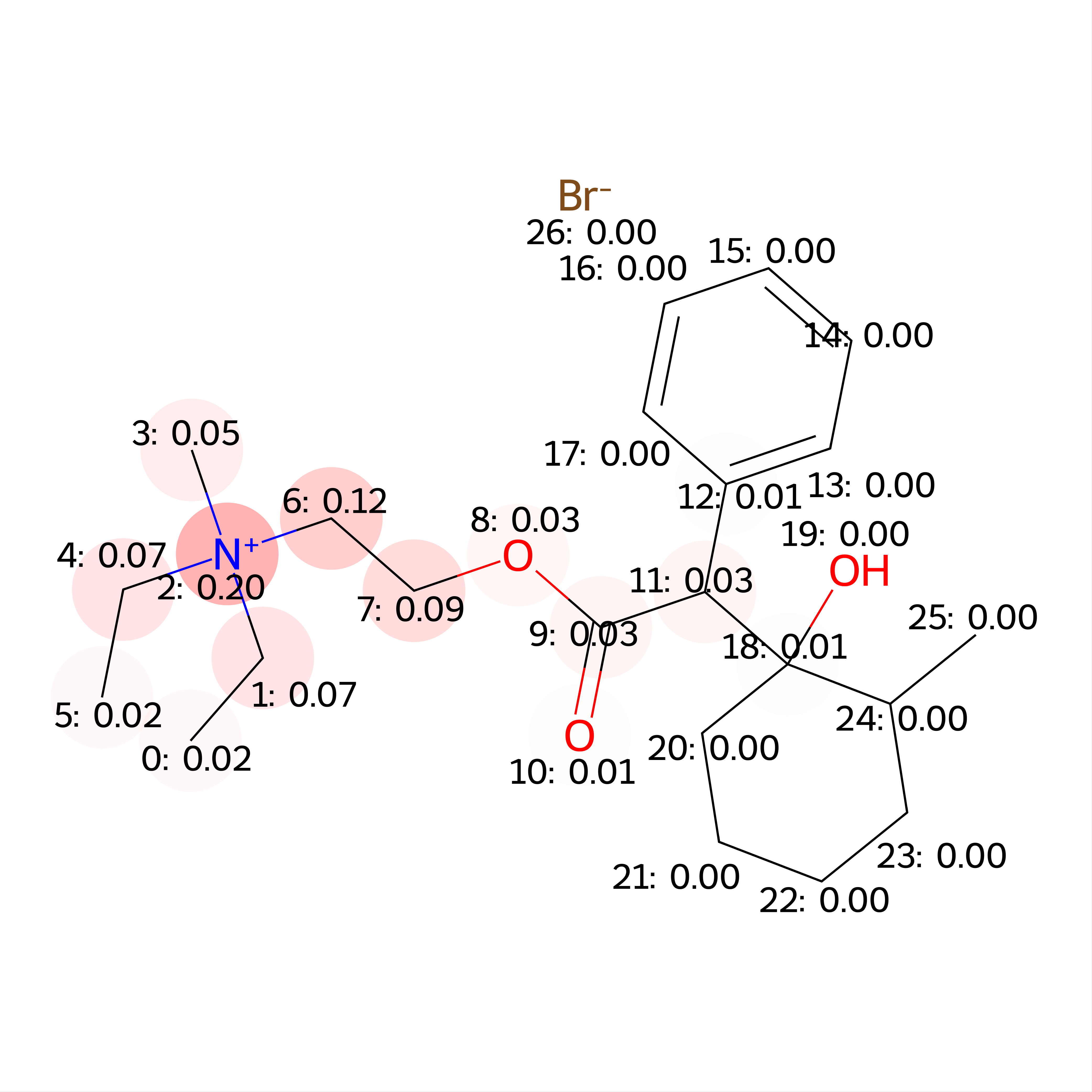}
        \end{center}
        \caption{Molecule from Figure \ref{fig:pattern_4_fails}, with atom indices and IG attributions annotated.}
        \label{fig:pattern_4_idx}
    \end{subfigure}
    \hfill 
    \begin{subfigure}[t]{0.45\linewidth}
        {
\definecolor{blue}{HTML}{0072B2}
\definecolor{skyblue}{HTML}{56B4E9}
\definecolor{green}{HTML}{009E73}
\definecolor{amber}{HTML}{E69F00}
\definecolor{vermillion}{HTML}{D55E00}
\definecolor{purple}{HTML}{8E5AA9}
\definecolor{ink}{HTML}{17212B}
    \begin{center}
        \begin{tikzpicture}[scale=0.6]
    \begin{axis}[
        title={},
        xlabel={Atom 3 Interpolation coefficient (\(\alpha'\)); other atoms at \(\alpha=0.97\)},
        ylabel={Activation},
        legend pos=north west,
        ymajorgrids=true,
        xmajorgrids=true,
        grid style=dashed,
        scaled y ticks=false,
        xmin=0.97,
        ymax=1,
        legend cell align=left,
        /pgf/number format/fixed,
        /pgf/number format/precision=3,
    ]

    \addplot[
            color=blue,
            mark=none,
            very thick,
            ] table [col sep=comma]{figures/data_pattern_4/c12_activations_vs_scaling_atom_3.csv};
        \addlegendentry{C1-activation of atom 2}

    \addplot[
            color=skyblue,
            mark=none,
            very thick,
            ] table [col sep=comma]{figures/data_pattern_4/b22_activations_vs_scaling_atom_3.csv};
        \addlegendentry{B2-activation of atom 2}

    \addplot[
            color=green,
            mark=none,
            very thick,
            ] table [col sep=comma]{figures/data_pattern_4/a36_activations_vs_scaling_atom_3.csv};
        \addlegendentry{A3-activation of atom 6}

    \addplot[
            color=amber,
            mark=none,
            very thick,
            ] table [col sep=comma]{figures/data_pattern_4/a47_activations_vs_scaling_atom_3.csv};
        \addlegendentry{A4-activation of atom 7}

    \addplot[
            color=ink,
            mark=none,
            very thick,
            ] table [col sep=comma]{figures/data_pattern_4/full_predictions_vs_scaling_atom_3.csv};
        \addlegendentry{Network prediction (\(r_2\))}

    \end{axis}
    \end{tikzpicture}
    \end{center}
    \hfill 
    \caption{Impact of interpolation of atom 3 while other atoms and edges are interpolated at \(\alpha = 0.97\).}
    \label{fig:pattern4-more-interpolations}
}
    \end{subfigure}
    
    \caption{Atom indexes (Figure \ref{fig:pattern_4_idx}) and impact of interpolation of atom 3's representation (Figure \ref{fig:pattern4-more-interpolations}).}
\end{figure}

%% file: figures/pattern_4_ig.tex
\begin{figure}[t]
\begin{subfigure}[t]{0.3\linewidth}
    \begin{tikzpicture}[scale=0.45]
    \begin{axis}[
        title={},
        xlabel={Interpolation (\%)},
        ylabel={Gradient},
        legend pos=north west,
        ymajorgrids=true,
        xmajorgrids=true,
        grid style=dashed,
        scaled y ticks=false,
        xmin=0.88,
        legend cell align=left,
        /pgf/number format/fixed,
        /pgf/number format/precision=3,
    ]

    \addplot[
        color=red,
        mark=none,
        very thick
        ] table [col sep=comma]{figures/data_pattern_4/atom_3_feats_gradients.csv};
    \addlegendentry{Gradients of the atom's type OHE entry}

    \addplot[
            color=blue,
            mark=none,
            very thick
            ] table [col sep=comma]{figures/data_pattern_4/atom_3_hydrogen_feats_gradients.csv};
        \addlegendentry{Gradients of the atom's hydrogen count OHE entry}
        
    \end{axis}
    \end{tikzpicture}
    \caption{Gradients of the non-motif atom during IG interpolation.}
    \label{subfig:pattern4-ig-main-PATH}
    \end{subfigure}%
    \hfill 
    \begin{subfigure}[t]{0.3\linewidth}
        \begin{tikzpicture}[scale=0.45]
    \begin{axis}[
        title={},
        xlabel={Interpolation (\%)},
        ylabel={Network's prediction},
        legend pos=north west,
        ymajorgrids=true,
        xmajorgrids=true,
        grid style=dashed,
        scaled y ticks=false,
        legend cell align=left,
        /pgf/number format/fixed,
        /pgf/number format/precision=3,
    ]

    \addplot[
        color=red,
        mark=none,
        very thick
        ] table [col sep=comma] {figures/data_pattern_4/predictions_vs_scaling_atom_3_r1.csv};
        
    \addlegendentry{\(r_1\) readout}
    
      \addplot[
        color=blue,
        mark=none,
        very thick
        ] table [col sep=comma]{figures/data_pattern_4/predictions_vs_scaling_atom_3_r2.csv};
        \addlegendentry{\(r_2\) readout}
    \end{axis}
    \end{tikzpicture}
    \caption{Network's prediction during interpolation of the non-motif atom.}
    \label{subfig:pattern4-ig-main-READOUTS}
    \end{subfigure}%
    \hfill 
        \begin{subfigure}[t]{0.3\linewidth}
    \begin{tikzpicture}[scale=0.45]
    \begin{axis}[
        title={},
        xlabel={Interpolation coefficient},
        ylabel={Activation},
        legend pos=north west,
        ymajorgrids=true,
        xmajorgrids=true,
        grid style=dashed,
        scaled y ticks=false,
        xmin=0.97,
        ymax=1.1,
        legend cell align=left,
        /pgf/number format/fixed,
        /pgf/number format/precision=3,
    ]

    \addplot[
        color=red,
        mark=none,
        very thick
        ] table [col sep=comma]{figures/data_pattern_4/atom_2_activations_vs_scaling_atom_3.csv};
    \addlegendentry{Activation of the neighbouring atom}

\addplot[
        color=blue,
        mark=none,
        very thick
        ] table [col sep=comma]{figures/data_pattern_4/full_predictions_vs_scaling_atom_3.csv};
    \addlegendentry{Network prediction (\(r_2\))}
    
    \end{axis}
    \end{tikzpicture}
    \caption{Interpolation of the non-motif atom while other atoms and edges are interpolated at \(\alpha=0.97\).}
    \label{subfig:pattern4-ig-main-IMPACT}
    \end{subfigure}
    \hfill 
    \label{fig:pattern4-ig-main}
    \caption{Analysis of causes of spurious attribution of the non-motif atom in Pattern 4. Figure \ref{subfig:pattern4-ig-main-PATH} shows gradients that are accumulated by this atom near the decision boundary. Figure \ref{subfig:pattern4-ig-main-READOUTS} demonstrates that, regardless of atom's presence, network makes a correct classification. However, interpolating atom between states causes model's prediction to drop to 0. Figure \ref{subfig:pattern4-ig-main-IMPACT} showcases impact interpolation of the non-motif atom on activation of neighbouring node and final network's prediction under readout \(r_2\), providing a justification for curve observed in the Figure \ref{subfig:pattern4-ig-main-PATH}.}
\end{figure}

%% file: figures/pattern_5_interpolation_impact.tex
{
\definecolor{blue}{HTML}{0072B2}
\definecolor{skyblue}{HTML}{56B4E9}
\definecolor{green}{HTML}{009E73}
\definecolor{amber}{HTML}{E69F00}
\definecolor{vermillion}{HTML}{D55E00}
\definecolor{purple}{HTML}{8E5AA9}
\definecolor{ink}{HTML}{17212B}

\begin{tikzpicture}[scale=0.5]
    \begin{axis}[
        title={},
        xlabel={Atom 7 Interpolation coefficient (\(\alpha'\)); other atoms at \(\alpha=0.98\)},
        ylabel={Activation (log scale)},
        legend pos=north west,
        ymajorgrids=true,
        xmajorgrids=true,
        grid style=dashed,
        scaled y ticks=false,
        xmin=0.98,
        ymax=1.2,
        legend cell align=left,
        /pgf/number format/fixed,
        /pgf/number format/precision=2,
        ymode=log,
        log ticks with fixed point,
    ]

    \addplot[
            color=blue,
            mark=none,
            very thick,
            ] table [col sep=comma]{figures/data/d18_activations_vs_scaling_atom_7.csv};
        \addlegendentry{D1-activation of atom 8}

    \addplot[
            color=skyblue,
            mark=none,
            very thick,
            ] table [col sep=comma]{figures/data/c29_activations_vs_scaling_atom_7.csv};
        \addlegendentry{C2-activation of atom 9}
        
    \addplot[
            color=green,
            mark=none,
            very thick,
            ] table [col sep=comma]{figures/data/b39_activations_vs_scaling_atom_7.csv};
        \addlegendentry{B3-activation of atom 9}   

    \addplot[
            color=amber,
            mark=none,
            very thick,
            ] table [col sep=comma]{figures/data/a49_activations_vs_scaling_atom_7.csv};
        \addlegendentry{A4-activation of atom 9}  

    \addplot[
            color=vermillion,
            mark=none,
            very thick,
            ] table [col sep=comma]{figures/data/a513_activations_vs_scaling_atom_7.csv};
        \addlegendentry{A5-activation of atom 13}

    \addplot[
            color=purple,
            mark=none,
            very thick,
            ] table [col sep=comma]{figures/data/a613_activations_vs_scaling_atom_7.csv};
        \addlegendentry{A6-activation of atom 13}

    \addplot[
            color=ink,
            mark=none,
            very thick,
            ] table [col sep=comma]{figures/data/full_predictions_vs_scaling_atom_7.csv};
        \addlegendentry{Network prediction (\(r_2\))}
    
    \end{axis}
\end{tikzpicture}
}

%% file: appendix_pge_gnn.tex
\section{Additional Results on Failure Modes of Shapley Value Sampling, PGExplainer, and GNNExplainer}

\input{figures/pge_logits_pattern_12}
\input{figures/failures_of_pge_on_pattern_12}

\label{sec:app_pge_gnn} Both PGExplainer and GNNExplainer perform poorly on WOMBAT, even when evaluating simple patterns. At the same time, Shapley Value Sampling performs reasonably, although its performance often yields imperfect attributions. We focus our analysis on models utilizing the readout in Eq.~\ref{eqn:readout_2}, as its bounded output ($[0, 1]$) should theoretically simplify the attribution task. 

\subsection{Shapley Value Sampling}

Although Shapley Value Sampling (SVS) maintains relatively high AUROC across $r_1$ and $r_2$ readouts, it frequently fails to achieve perfect node ranking, a trend that intensifies with motif complexity. This is most visible for Pattern 3 under the $r_2$ readout, where SVS achieves perfect attribution ($\mathrm{AUROC}=1.0$) for only 3 out of 22 positive molecules (Table \ref{tab:shap_perfection}). We empirically verify that  increasing SVS's sampling budget improves its performance, especially across bigger motifs (see Figure \ref{fig:svs_performance}).
\input{figures/shap_improvements}
\input{tables/shap_perfection}

\subsection{PGExplainer}

\input{figures/activated_nodes_end}
\paragraph{Edge classification.} PGExplainer relies on a Multilayer Perceptron (MLP) to predict attributions of edges. Let \(z_1, z_2, z_3, \dots, z_n\) be model's latent representations of nodes \(v_1, v_2, v_3, \dots, v_n\) after the last MPNN layer (before the readout). To predict attribution for edge \(e_{ij}\), it first requires calculation of the edge logit:

\begin{equation}
    \label{eq:mlp_eq}
    \omega_{ij} = \mathrm{MLP}([z_i;z_j])
\end{equation}

for more details, please refer to~\cite{luo2020parameterized}. In WOMBAT, the final-layer representations \(z_i\) are frequently zero vectors, including nodes belonging to the target pattern. On average, in WOMBAT's whiteboxes a vast majority of nodes is zero-activated after the last layer (Figure~\ref{fig:activated_nodes_end:A}). Consequently, many edges receive exactly the same input
\([0;0]\) in Equation~\ref{eq:mlp_eq} and must therefore receive the same score. In particular, if a pattern edge and an irrelevant edge both connect pairs of zero-activation nodes, PGExplainer cannot distinguish between them, irrespective of how successfully its explanation network is optimized. Edges incident to the active node may remain distinguishable from edges whose two endpoints have zero representations. This gives PGExplainer some capacity to recover the portion of a pattern lying in the immediate neighbourhood of that node. Perfect separation of motif and non-motif edges is impossible however whenever the ground-truth pattern contains zero--zero edges while the surrounding graph also contains irrelevant zero--zero edges. This ambiguity is consistent with our empirical results: under the main evaluation protocol, PGExplainer achieved reasonable, although imperfect, performance only on Pattern~12, one of the
smallest evaluated patterns, consisting of three nodes.

We establish an approximate upper bound on PGExplainer's performance in Figure~\ref{fig:activated_nodes_end:B}. We construct it as follows:
\begin{enumerate}
    \item Assign a score of 1 to every ground truth motif node
    whose final-layer representation is non-zero, and to every
    ground-truth motif node adjacent to such a node.
    \item Assign a score of 0 to all remaining nodes.
\end{enumerate}
Note that we assume that PGExplainer assigns zero attribution to an edge when both endpoints have zero final layer representations. This does not necessarily have to be true, but given that a vast majority of nodes tend to be zero-activated (see Figure~\ref{fig:activated_nodes_end:A}), we believe it to be a sensible assumption. As such, we believe that this approximation should be treated as a practical (and loose) upper bound. It can be observed that this bound does not impact smaller Patterns (such as Pattern 12, for which PGExplainer achieves a decent performance), but caps PGExplainer's performance severely in case of Patterns such as Pattern 2 or 3.

\paragraph{PGExplainer in Pattern 12.} \label{sec:appendix_pge_fail_p12} While PGExplainer manages to explain majority of molecules containing Pattern 12, there are a few cases where it fails to score ideal AUROC score. Furthermore, it also fails the IQR test for 1 of the molecules in the negative sample test. All of these failures are caused by the presence of formyl anions in the graphs (see Figure \ref{fig:failures_of_pge_on_pattern_12}). To understand this, one needs to consider weights of the Pattern 12 whitebox. It contains only 1 MPNN layer with the following activation definitions\footnote{For activation notation in WOMBAT, please refer to Section \ref{sec:notation}.}:
\begin{minipage}{0.55\linewidth}
\begin{enumerate}
    \item \textbf{A1:} Carbon with exactly 1 hydrogen, that is connected via double bond to at least one oxygen atom.
    \item \textbf{B1:} Carbon with exactly 1 hydrogen, that is connected via a single bond to at least one other atom.
\end{enumerate}
\end{minipage}%
\begin{minipage}{0.1\linewidth}
    \begin{center}
        \rule{1pt}{100pt}
    \end{center}
\end{minipage}%
\begin{minipage}{0.35\linewidth}
    \begin{center}
    \includegraphics[width=\linewidth]{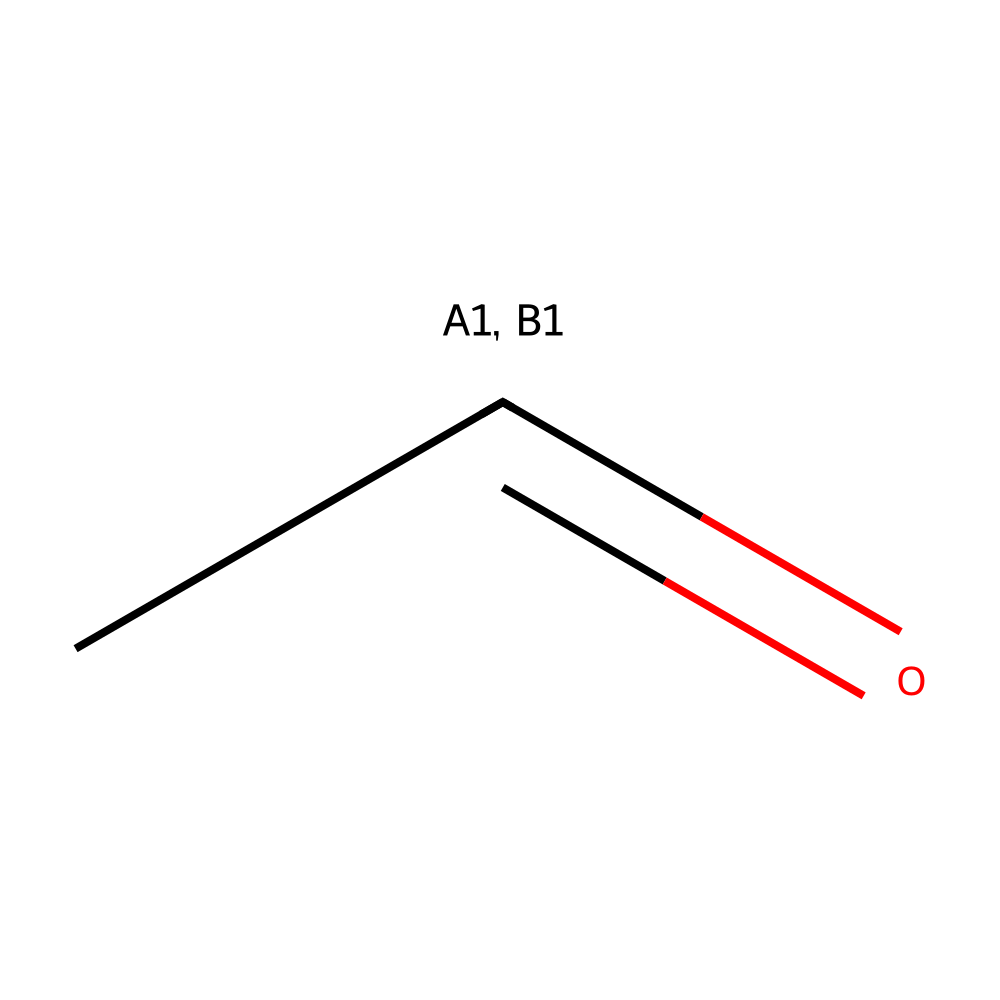}
    \end{center}
\end{minipage}

  The readout effectively computes \(\mathrm{A1}\land\mathrm{B1}\). However, for a carbon atom with conventional valence, A1 generally implies B1. A1 specifies a carbon atom with a double bond to oxygen and a bond to exactly one hydrogen. Such a carbon usually has an additional single bond to another atom and therefore also activates B1. Accordingly, \(\mathrm{A1}\implies\mathrm{B1}\) holds for most examples in the training dataset, although counterexamples are present. In turn, PGExplainer's MLP instead of learning to check for \(\mathrm{A1} \land \mathrm{B1}\) in one of edge's terminals checks for \(\mathrm{A1}\); in other words, presence of \(\mathrm{A1}\)-activation in one of edge's terminals is necessary and sufficient for edge attribution by PGExplainer (see Figure \ref{fig:pge_logits_pattern_12}). This shortcut leads to PGExplainer failing by attributing formyl anions, which do not contain Pattern 12, but they do exhibit \(\mathrm{A1}\)-activation (see Figure \ref{fig:act_formyl_anion_pattern_12}).  

\begin{minipage}{0.45\linewidth}
\paragraph{Optimisation.} As PGExplainer requires a separate training phase, it is highly sensitive to hyperparameters such as learning rate and epoch count. We conducted a grid search on Pattern 14, one of the simplest patterns the explainer failed to identify, to assess this sensitivity (Figure~\ref{fig:pgexplainer_grid}). The results indicate a tendency for the model to fail to converge. To test stability, we executed seven independent runs using the best-performing hyperparameters from the grid search ($\mathrm{lr}=0.1$, $20$ epochs); however, the model successfully converged in only one instance.
\end{minipage}%
\begin{minipage}{0.1\linewidth}
    \hfill 
\end{minipage}%
\begin{minipage}{0.45\linewidth}
    \includegraphics[width=0.9\linewidth]{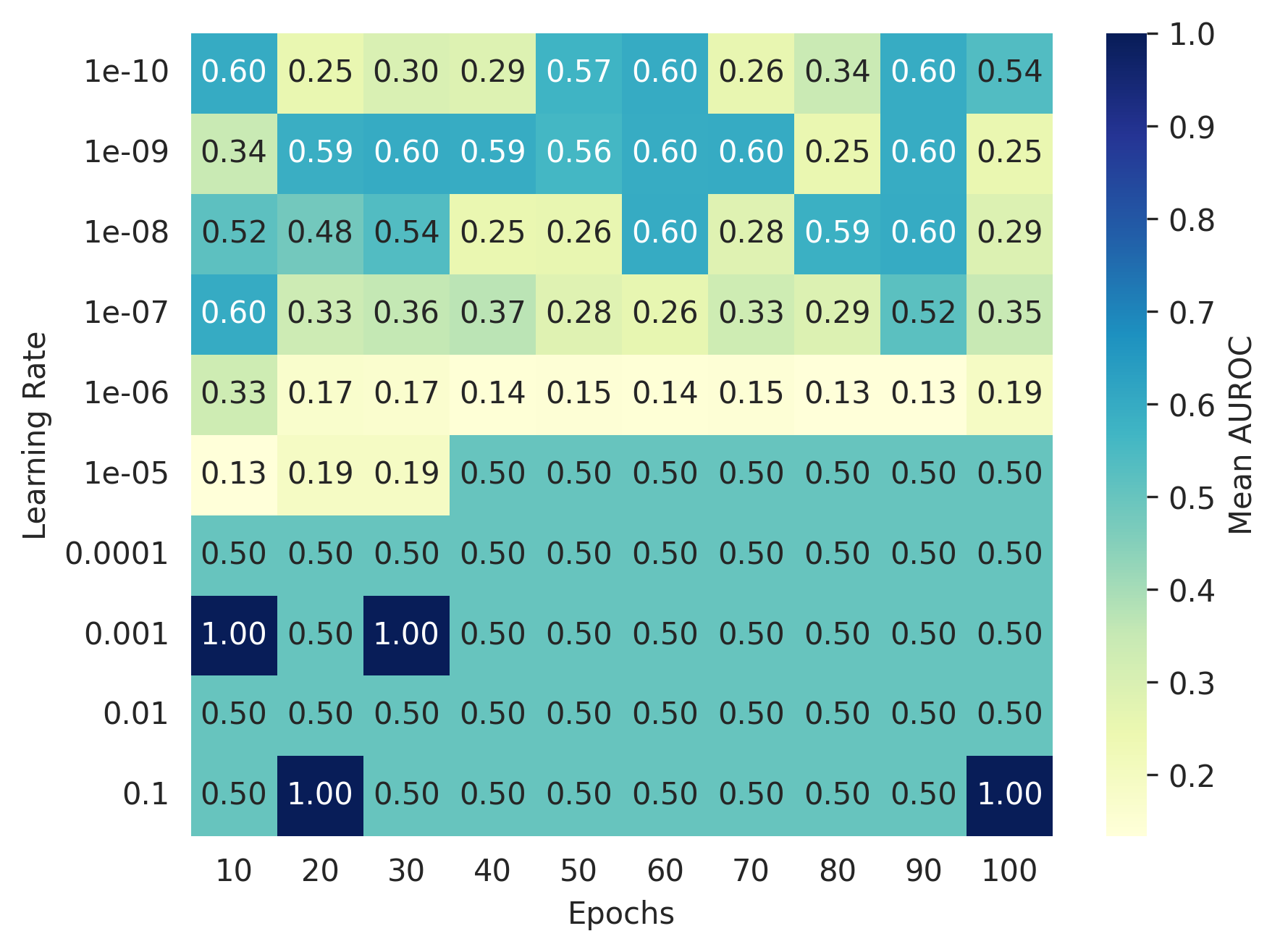}
    \captionof{figure}{Results of performing grid search for training parameters of PGExplainer for Pattern 14 with readout \(r_2\). Each value comes from a fully separate training run.}
    \label{fig:pgexplainer_grid}
\end{minipage}

\subsection{GNNExplainer}

\paragraph{Optimisation.} Similarly to PGExplainer, GNNExplainer is sensitive to hyperparameter selection. We performed an exhaustive random search across 50,000 parameter combinations to determine if it could correctly attribute a single positive molecule for Pattern 14. Surprisingly, not a single combination was a successful attribution.

%% file: figures/pge_logits_pattern_12.tex
\begin{figure}
    \centering
    \begin{subfigure}[t]{0.6\linewidth}
            \includegraphics[width=\linewidth]{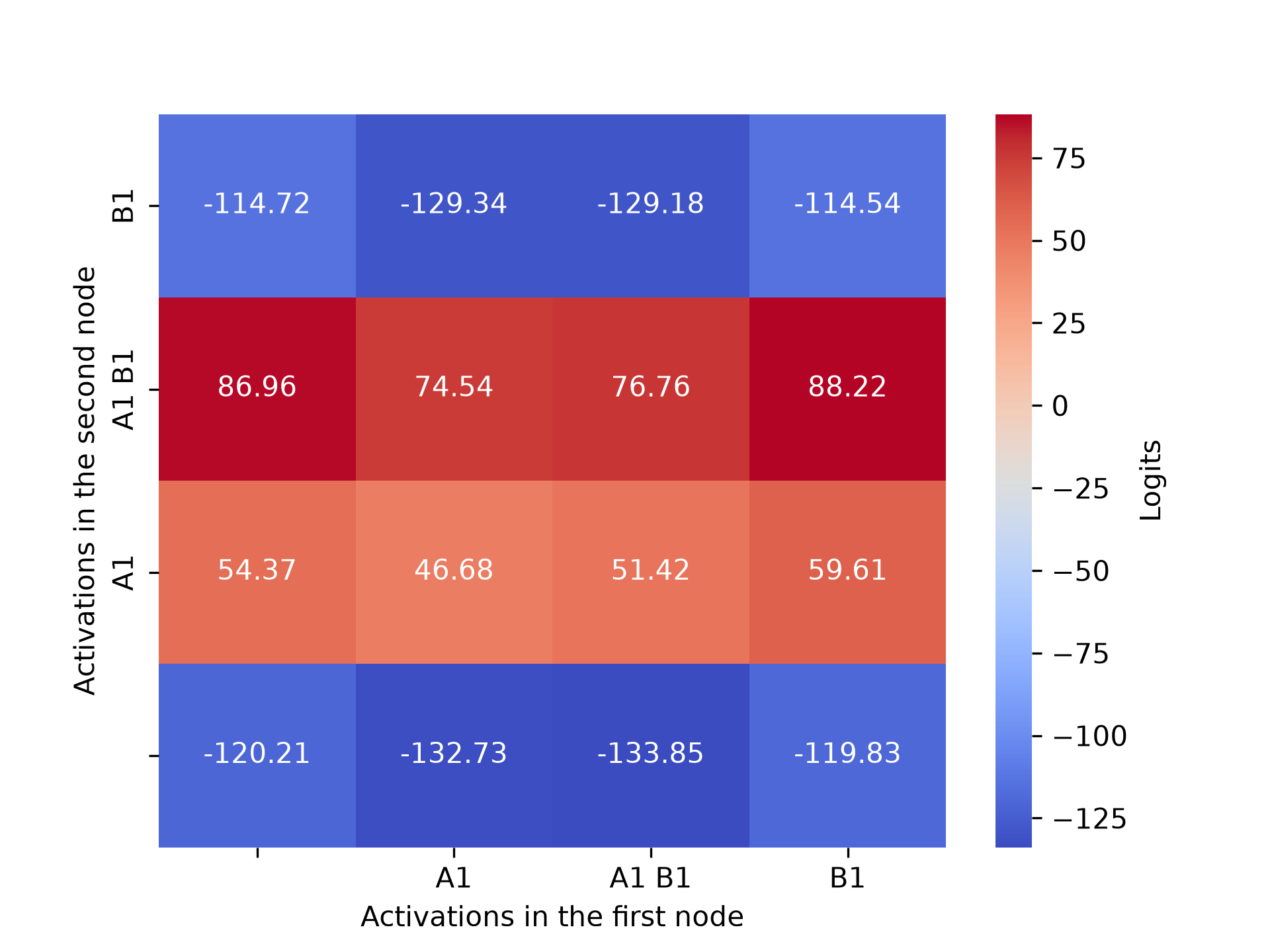}
            \caption{Logits returned by MLP trained by PGExplainer to explain Pattern 12 depending on activations of the last layer in Pattern 12 whitebox.}
    \label{fig:pge_logits_pattern_12}
    \end{subfigure}\hfill 
    \begin{subfigure}[t]{0.35\linewidth}
    \centering 
        \includegraphics[width=\linewidth]{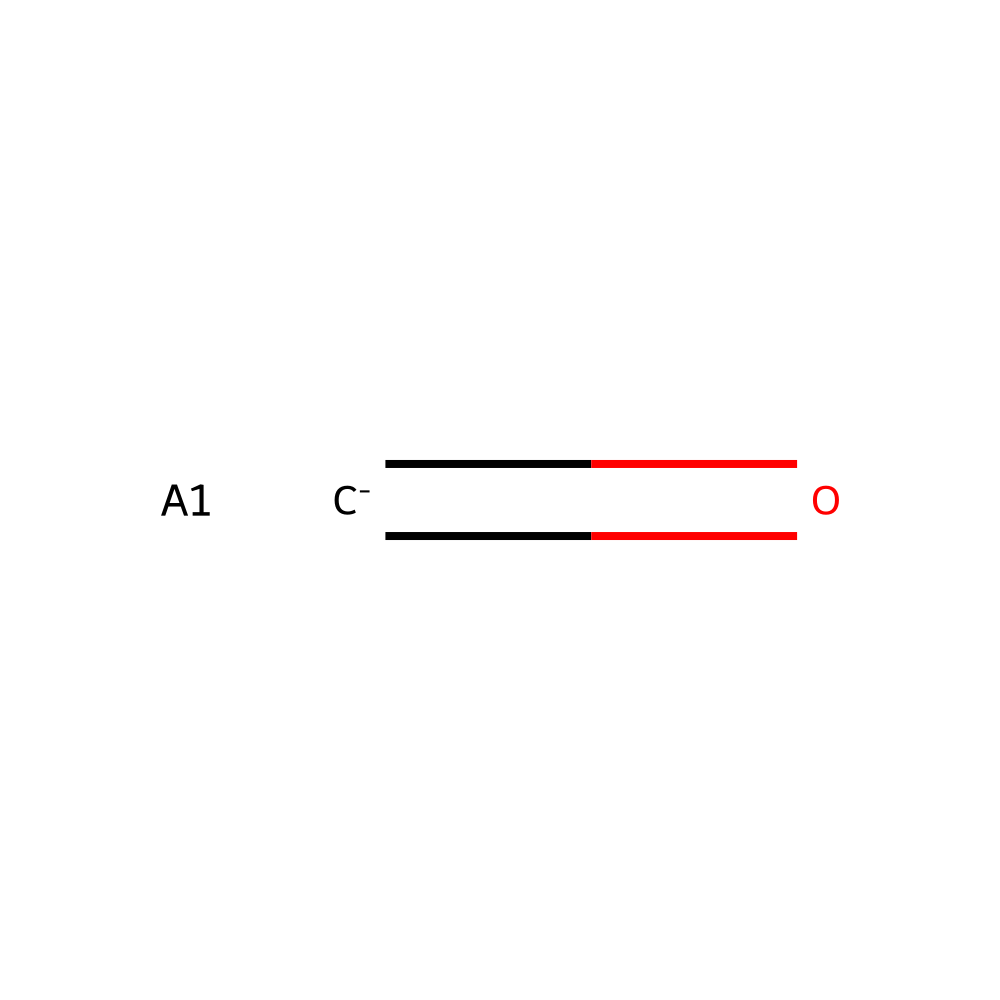}
    \caption{Activations of Pattern 12 whitebox for a formyl anion.}
    \label{fig:act_formyl_anion_pattern_12}
    \end{subfigure}
    \caption{Figure \ref{fig:pge_logits_pattern_12} demonstrates that while presence of B1 activation in one of edges terminals increases logit values, A1 activation is sufficient and necessary for PGExplainer to attribute the edge connecting 2 nodes. As such, carbons belonging to formyl anions, despite only being \(\mathrm{A1}\)-activated (Figure \ref{fig:act_formyl_anion_pattern_12}) still get attributed by PGExplainer.}
\end{figure}

%% file: figures/failures_of_pge_on_pattern_12.tex
\begin{figure}[htbp]
    \centering
    % ROW 1
    \begin{subfigure}[t]{0.45\linewidth}
        \centering
        \includegraphics[width=\linewidth]{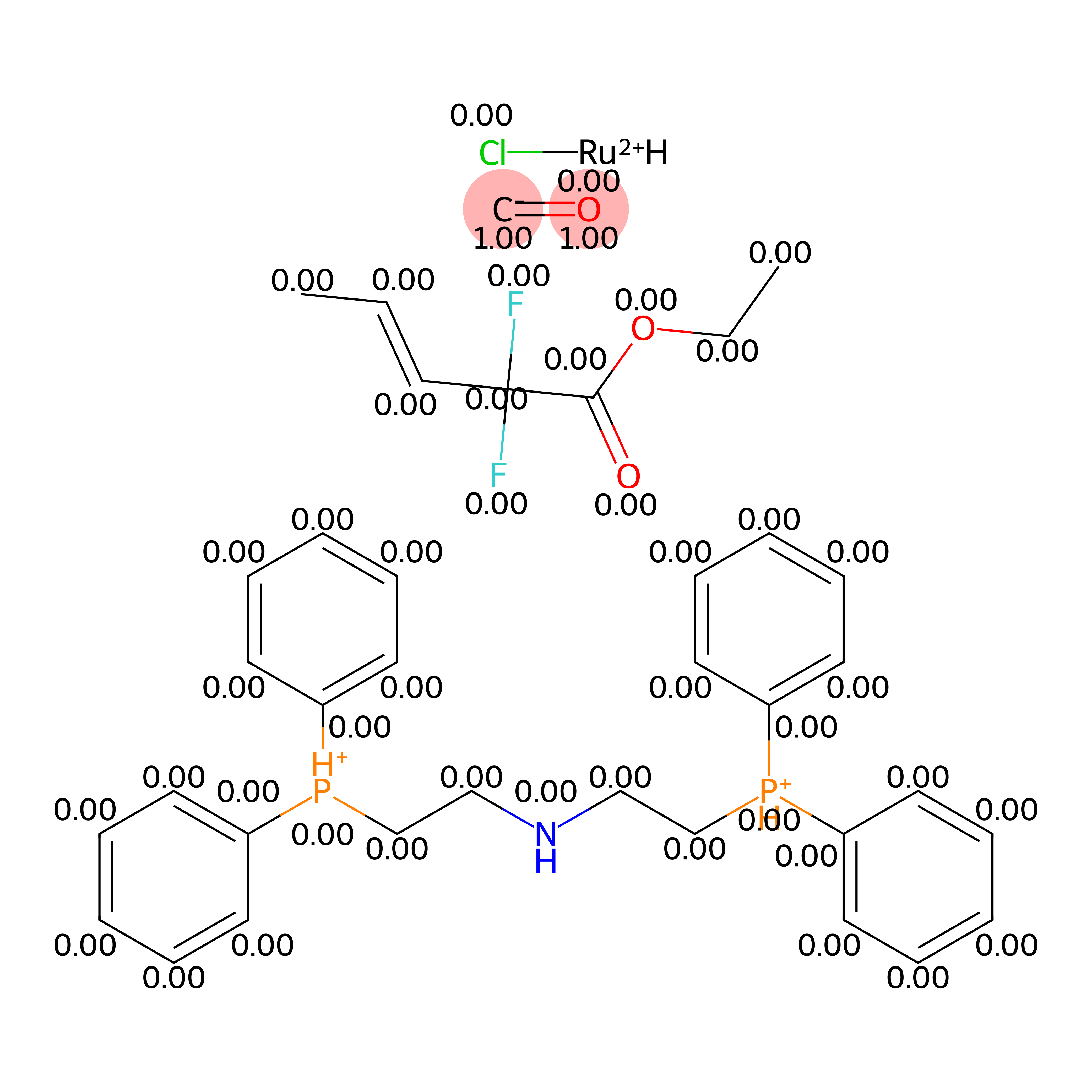}
        \caption{Sole failing negative sample.}
        \label{fig:failures_of_pge_on_pattern_12_pattern_12:NEGATIVE}
    \end{subfigure}\hfill
    \begin{subfigure}[t]{0.45\linewidth}
        \centering
        \includegraphics[width=\linewidth]{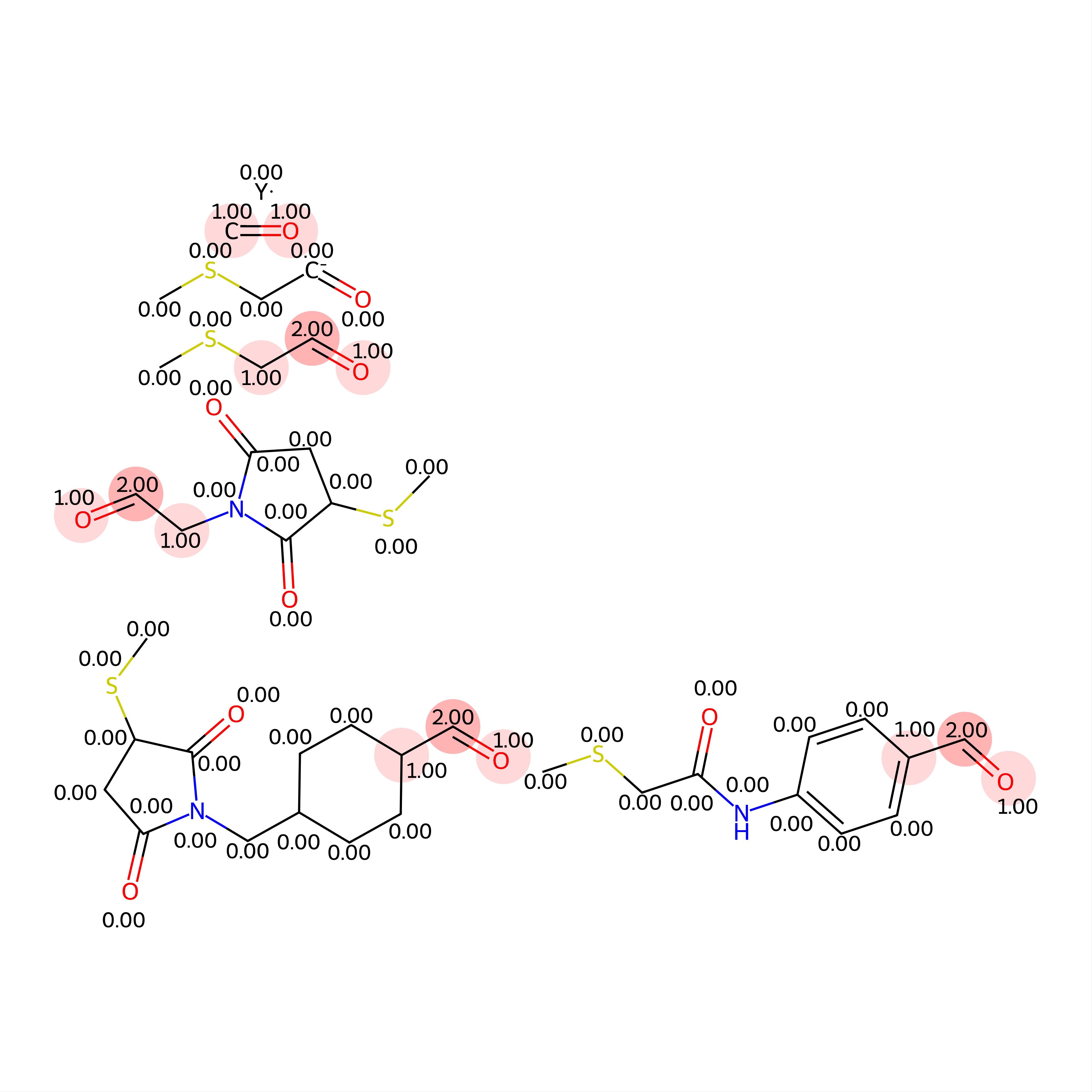}
        \caption{One of 2 failing positive samples.}
        \label{fig:failures_of_pge_on_pattern_12_negative_sample:POSITIVE}
    \end{subfigure}

    \caption{In case of both negative (Figure \ref{fig:failures_of_pge_on_pattern_12_pattern_12:NEGATIVE}) and positive samples (Figure \ref{fig:failures_of_pge_on_pattern_12_negative_sample:POSITIVE}), PGExplainer fails by misattributing formyl anions. In the former, this causes IQR criterion failure (as null explanation was expected); in the latter, formyl anions are attributed alongside legitimate occurrences of Pattern 12 in molecule, causing AUROC to drop.}
    \label{fig:failures_of_pge_on_pattern_12}
\end{figure}

%% file: figures/shap_improvements.tex
\begin{figure}
\begin{subfigure}[t]{0.45\linewidth}
    \begin{tikzpicture}[scale=0.6]
    \begin{axis}[
        ymajorgrids=true,
        xmajorgrids=false,
        grid style={dashed},
    	ylabel={SHAP AUROC Perfection rate},
        xlabel={Pattern Size},
    	enlarge x limits=0.05,
        legend pos=south east,
    ]
    \addplot[
    only marks,
    mark=*,
    mark options={
        draw=blue!60,
        fill=blue!60
    }
    ] table[
        x=Pattern Size,
        y=SHAP 100,
        col sep=comma
    ] {figures/data_other/shap_improvement.csv};
    \end{axis}
    \end{tikzpicture}
\caption{Performance of Shapley Value Sampling degrades relative to the pattern size.}
\label{fig:svs_performance:A}
\end{subfigure}
\hfill 
\begin{subfigure}[t]{0.45\linewidth}
    \begin{tikzpicture}[scale=0.6]
    \begin{axis}[
        ybar,
        bar width=5pt,
        ymajorgrids=true,
        xmajorgrids=false,
        grid style={dashed},
        xtick={0,1,...,13},
        xticklabels={1,2,...,14},
    	ylabel={Ideal AUROC rate},
        xlabel={Pattern Number},
    	enlarge x limits=0.05,
        legend pos=south east,
    ]
    \addplot table[
        x=Index,
        y=SHAP 100,
        col sep=comma
    ] {figures/data_other/shap_improvement.csv};
    \addplot table[
        x=Index,
        y=SHAP 500,
        col sep=comma
    ] {figures/data_other/shap_improvement.csv};

    \legend{100 samples, 500 samples}
    \end{axis}
    \end{tikzpicture}
\caption{Increasing number of samples helps to recover perfectly majority of motifs.}
\label{fig:svs_performance:B}
\end{subfigure}
\caption{Shapley Value Sampling's ideal attribution rate is dependent on the size of the motif (Figure \ref{fig:svs_performance:A}). Increasing of the sample size helps it achieve better performance (Figure \ref{fig:svs_performance:B}). Here, we define the ``Pattern Size'' as number of heavy atoms in a molecule after substituting all wildcards as carbons.}
\label{fig:svs_performance}
\end{figure}

%% file: tables/shap_perfection.tex
\begin{table}
\scriptsize
\centering
\caption{Comparison of amount of perfect (i.e. equal to 1.0) AUROCs for Shapley Value Sampling and Integrated Gradients.}
\label{tab:shap_perfection}
\begin{tabular}{ccccccc}
\toprule
\multirow{2}{*}{\textbf{Pattern No.}} & \multicolumn{3}{c}{\textbf{Shapley Value Sampling}} & \multicolumn{3}{c}{\textbf{Integrated Gradients}}  \\
\cmidrule(lr){2-4} \cmidrule(lr){5-7}
& \textbf{Perfect} & \textbf{Total} & \textbf{Perfection Rate} & \textbf{Perfect} & \textbf{Total} & \textbf{Perfection Rate} \\
\toprule 
1 & 85 & 100 & 0.85 & 7621 & 7660 & 0.99 \\ 
2 & 57 & 94 & 0.61 & 236 & 236 & 1.00 \\
3 & 3 & 22 & 0.14 & 22 & 22 & 1.00 \\ 
4 & 60 & 100 & 0.60 & 964 & 1137 & 0.85 \\ 
5 & 20 & 31 & 0.65 & 29 & 31 & 0.94 \\ 
6 & 78 & 100 & 0.78 & 9984 & 9984 & 1.00 \\ 
7 & 97 & 100 & 0.97 & 9990 & 9990 & 1.00 \\
8 & 88 & 100 & 0.88 & 9996 & 9996 & 1.00 \\
9 & 97 & 100 & 0.97 & 9989 & 9989 & 1.00 \\ 
10 & 95 & 100 & 0.95 & 9989 & 9989 & 1.00 \\ 
11 & 98 & 100 & 0.98 & 9999 & 9999 & 1.00 \\ 
12 & 100 & 100 & 1.00 & 9999 & 9999 & 1.00 \\ 
13 & 99 & 100 & 0.99 & 9997 & 9997 & 1.00 \\ 
14 & 100 & 100 & 1.00 & 10000 & 10000 & 1.00 \\
 \bottomrule   
\end{tabular}    

\end{table}

%MoleculeCDetector (SHAP): 85/100 (85.00%)
%MoleculeCDetector (IG): 7621/7660 (99.49%)
%MoleculeEDetector (SHAP): 57/94 (60.64%)
%MoleculeEDetector (IG): 236/236 (100.00%)
%MoleculeFDetector (SHAP): 3/22 (13.64%)
%MoleculeFDetector (IG): 22/22 (100.00%)
%MoleculeGDetector (SHAP): 60/100 (60.00%)
%MoleculeGDetector (IG): 964/1137 (84.78%)
%MoleculeHDetector (SHAP): 20/31 (64.52%)
%MoleculeHDetector (IG): 29/31 (93.55%)
%MoleculeIDetector (SHAP): 78/100 (78.00%)
%MoleculeIDetector (IG): 9984/9984 (100.00%)
%MoleculeJDetector (SHAP): 97/100 (97.00%)
%MoleculeJDetector (IG): 9990/9990 (100.00%)
%MoleculeKDetector (SHAP): 88/100 (88.00%)
%MoleculeKDetector (IG): 9996/9996 (100.00%)
%MoleculeLDetector (SHAP): 97/100 (97.00%)
%MoleculeLDetector (IG): 9989/9989 (100.00%)
%MoleculeMDetector (SHAP): 95/100 (95.00%)
%MoleculeMDetector (IG): 9989/9989 (100.00%)
%MoleculeNDetector (SHAP): 98/100 (98.00%)
%MoleculeNDetector (IG): 9999/9999 (100.00%)
%MoleculeODetector (SHAP): 100/100 (100.00%)
%MoleculeODetector (IG): 9999/9999 (100.00%)
%MoleculePDetector (SHAP): 99/100 (99.00%)
%MoleculePDetector (IG): 9997/9997 (100.00%)
%MoleculeQDetector (SHAP): 100/100 (100.00%)
%MoleculeQDetector (IG): 10000/10000 (100.00%)

%% file: figures/activated_nodes_end.tex
\begin{figure}
    \centering
    \begin{subfigure}[t]{0.45\linewidth}
        \begin{tikzpicture}[scale=0.6]
\begin{axis}[
    ybar,
    bar width=5pt,
    ymajorgrids=true,
    xmajorgrids=false,
    grid style={dashed},
    xtick={0,1,...,13},
    xticklabels={1,2,...,14},
	ylabel={Activated Nodes},
    xlabel={Pattern Number},
	enlarge x limits=0.05,
    legend pos=north west,
]
\addplot [
    draw=blue,
    pattern=north east lines,
    bar shift=-4pt,
    pattern color=blue!30,
] table[
    x=index,
    y=Positive,
    col sep=comma,
] {figures/data_other/activation_total_means.csv};
\addplot [
    bar shift=-4pt,
    draw=blue,
    fill=blue!60,
] table[
    x=index,
    y=Positive,
    col sep=comma,
] {figures/data_other/activation_means.csv};
\addplot [
    draw=red,
    pattern=north east lines,
    bar shift=4pt,
    pattern color=red!60,
]  table[
    x=index,
    y=Negative,
    col sep=comma,
] {figures/data_other/activation_total_means.csv};
\addplot [
    bar shift=4pt,
    draw=red,
    fill=red!30
] table[
    x=index,
    y=Negative,
    col sep=comma,
] {figures/data_other/activation_means.csv};
\legend{Total (positive), Activated (positive), Total (negative), Activated (negative)}
\end{axis}
\end{tikzpicture}
\caption{Mean numbers of activated nodes after passing through the last MPNN layer for each pattern in negative and positive samples compared to mean numbers of \textit{all} nodes in negative and positive samples.}
\label{fig:activated_nodes_end:A}
\end{subfigure}\hfill %
\begin{subfigure}[t]{0.45\linewidth}
        \begin{tikzpicture}[scale=0.6]
\begin{axis}[
    ybar,
    bar width=5pt,
    ymajorgrids=true,
    xmajorgrids=false,
    grid style={dashed},
    xtick={0,1,...,13},
    xticklabels={1,2,...,14},
	ylabel={AUROC upper bound},
    xlabel={Pattern Number},
	enlarge x limits=0.05,
    legend pos=north west,
]
\addplot table[
    x=index,
    y=mean_auroc,
    col sep=comma
] {figures/data_other/mean_aurocs.csv};
\legend{Positive,Negative}
\end{axis}
\end{tikzpicture}
    \caption{Loose upper bound on AUROC of any explainer that classifies edges based on last-layer activations of edge terminal nodes (assuming oracle-like performance if any of the edge's terminals is non-zero, and predicting that an edge is unimportant otherwise).}
    \label{fig:activated_nodes_end:B}
    \end{subfigure}

    \caption{PGExplainer's design choice of attributing edges based on their terminal nodes' activations greatly limits its performance in WOMBAT. Figure \ref{fig:activated_nodes_end:A} shows that, on average, total number of nodes with non-zero activations is very limited, limiting PGExplainer's ability to discriminate between important and unimportant edges. To formalise this intuition, we present a loose upper bound to AUROC it could achieve in WOMBAT for every Pattern (Figure \ref{fig:activated_nodes_end:B}).}
    \label{fig:activated_nodes_end}
\end{figure}

%% file: appendix_architectures.tex
\section{Impact of Whitebox Architecture}
\label{sec:appendix_arch_impact}

\subsection{Decision boundary sharpness}
\label{sec:appendix_arch_sharpness}

We further investigated whether the ``sharp'' decision boundaries of our whitebox architecture impact optimization of PGExplainer and GNNExplainer. To address this, we implemented a modified MPNN formulation that replaces product operators with a minimum function: 

\begin{equation}
m'_{ij} = \min\left(\text{ReLU}(x_i^T W_{\text{from}}), \; \text{ReLU}(x_j^T W_{\text{to}}), \; \text{ReLU}(e_{ij}^T W_{e})\right)
\end{equation}
\begin{equation}
r_2' = \text{ReLU}\left(1 - \prod_{v_i \in V} \text{ReLU}\left(1 - \min_{j} ( \text{ReLU}(x_{i,j}))\right)\right)
\end{equation}

While this architecture maintains perfect accuracy on Patterns 1 and 2, it did not improve explainability. Ten independent PGExplainer runs (varying epoch counts) and a random search of over 11,000 GNNExplainer configurations all failed to explain Pattern 14.

\subsection{GINEConv whitebox experiment}
\label{sec:appendix_gineconv}

WOMBAT's custom architecture facilitates whitebox construction, but may raise concerns about whether our findings transfer to more conventional MPNN architectures. To explore this question, we construct an additional whitebox designed to detect the simultaneous presence of fluorine and a carbonyl group, a task inspired by~\cite{sanchez2020evaluating}. The model uses the GINEConv layer~\cite{hu2019strategies} implemented in PyTorch Geometric~\cite{Fey/etal/2025}, and is released alongside WOMBAT to facilitate further research. Note that we do not claim to have created a whitebox network for~\cite{sanchez2020evaluating}, as we have not performed a validation on the dataset described there and our task definitions and valence assumptions are likely to differ.

\paragraph{Task definition.} We check for simultaneous existence of two SMARTS patterns in a structure:

\begin{enumerate}
    \item \texttt{[!\#1]F} -- Fluorine atom connected to some other nonhydrogen atom.
    \item \texttt{[\#6]=O} -- Oxygen atom connected via a double bond to a carbon atom. 
\end{enumerate}

\paragraph{Input encoding.} Input molecules are encoded as all other molecules in WOMBAT, albeit our network discards the hydrogen OHE as it is not needed to make a prediction. Furthermore, we assume that all molecules satisfy WOMBAT's constraints on valence.

\paragraph{Architecture overview.} Our network consists of the following:

\begin{enumerate}
    \item An embedding layer;
    \item a GINEConv layer;
    \item an MLP-based readout layer.
\end{enumerate}

where a GINEConv's layer is defined as follows:

\begin{equation}
    \label{eq:gineconv}
    x_i' = \mathrm{MLP}\left( (1+\varepsilon) x_i + \sum_{j\in\mathcal{N}(i)} \mathrm{ReLU}(x_j + e_{ji}) \right)
\end{equation}

where \(x_i\) means a representation of the \(i\)-th node before the layer and \(x_i'\) representation of the \(i\)-th node after the layer. 

\paragraph{Hyperparameters and network calibration.} In the Equation \ref{eq:gineconv}, we set \(\varepsilon = 0\). Furthermore, we treat prediction above the threshold of \(0.5\) as positive, and below that threshold as negative.

\paragraph{Notation.} For brevity, we define a vector \(a_i = (1+\varepsilon) x_i + \sum_{j\in\mathcal{N}(i)} \mathrm{ReLU}(x_j + e_{ji})\), meaning aggregated messages received by node \(v_i\) before MLP.

\paragraph{Embeddings.} We define the embeddings as follows:

\begin{itemize}
    \item Carbon atoms get embedded as \(\mathrm{embedding}_\mathrm{C} = \begin{bmatrix} 0 & 0 & -10 & 1 & 0 \end{bmatrix}\)
    \item Oxygen atoms get embedded as \(\mathrm{embedding}_\mathrm{O} = \begin{bmatrix} 0 & 0 & 0 & 0 & 0 \end{bmatrix}\)
    \item Fluorine atoms get embedded as \(\mathrm{embedding}_\mathrm{F} = \begin{bmatrix} 1 & -1 & -10 & 0 & 0 \end{bmatrix}\)
    \item Double bonds get embedded as \(\mathrm{embedding}_\mathrm{double} = \begin{bmatrix} 0 & 0 & 0 & 0 & 1 \end{bmatrix}\)
    \item All other atoms and bonds get embedded as \(\mathrm{embedding}_\mathrm{other} = \begin{bmatrix} 0 & 0 & 0 & -10 & 0 \end{bmatrix}\)
\end{itemize}

\paragraph{MLP layer in GINEConv.} The MLP performs a linear projection from \(\mathbb{R}^5\) to \(\mathbb{R}^2\) and then applies \(\mathrm{tanh}\) activation function. 

The weights for the perceptrons are as follows:

\begin{enumerate}
    \item \(\begin{bmatrix}
        0 & 0 & 1 & 1 & 1
    \end{bmatrix}\) and bias of \(-1\).
    \item \(\begin{bmatrix}
        1 & 1 & 0 & 0 & 0
    \end{bmatrix}\) and bias of \(0\).
\end{enumerate}

The first perceptron detects \(\mathrm{C=O}\) pattern, and the second one detects \(\mathrm{F-*}\) pattern. We will now provide the rationale for weights of the first perceptron.

Note that in the case of the first perceptron, it won't activate in carbon atoms as \(\mathrm{embedding}_{\mathrm{C}}[2] = -10\)\footnote{We assume 0-based indexing.}. It can however be activated in oxygen atoms (\(\mathrm{embedding}_{\mathrm{O}}[2] = 0\)). 

Observe that:
\[\mathrm{ReLU}(\mathrm{embedding}_{\mathrm{C}} + \mathrm{embedding}_{\mathrm{double}}) = \begin{bmatrix}
        0 & 0 & 0 & 1 & 1
\end{bmatrix}\]

but:

\[\mathrm{ReLU}(\mathrm{embedding}_{\mathrm{C}} + \mathrm{embedding}_{\mathrm{single}}) = \begin{bmatrix}
        0 & 0 & 0 & 0 & 0
\end{bmatrix}\]

as \(\mathrm{embedding}_{\mathrm{single}} = \mathrm{embedding}_{\mathrm{other}}\), which effectively ``vetoes'' the carbon signal. As such, for an oxygen atom to achieve the following activation:

\[ 
a_i = \begin{bmatrix}
    0 & 0 & 0 & 1 & 1
\end{bmatrix}
\]

it needs to be an oxygen atom, and it needs to be connected to a carbon atom with a double bond. Note that, if the oxygen atom is connected with a double bond to the carbon atom, \(\mathrm{MLP}(a_{\mathrm{oxygen}})[0] = \mathrm{tanh}(2 - 1) = \mathrm{tanh}(1)\) and otherwise \(\mathrm{MLP}(a_{\mathrm{oxygen}})[0] \leq \mathrm{tanh}(0)\).

Now, we will discuss the second perceptron. Note that:

\[
\mathrm{ReLU}(\begin{bmatrix}
    1 & -1 & -10 & 0 & 0 
\end{bmatrix}) = \begin{bmatrix}
    1 & 0 & 0 & 0 & 0
\end{bmatrix}
\]

Observe that all atoms neighbouring with a fluorine atom will receive a non-zero message from it and subsequently activate the fluorine perceptron.

Note that technically this whitebox is not perfect: in some cases the veto signal could technically be ``overwhelmed'' by many carbon atoms forming double bonds with non-oxygen atoms. While this could be partially rectified by setting the veto weight to a smaller value (e.g. \(-100\)), such cases are unlikely to happen and -- furthermore -- setting this weight to \(-10\) is enough to pass our simplified validation.

At the same time, note that fluorine atom won't activate this perceptron on its own, as \(1 \cdot 1 + 1 \cdot (-1) + 0 = 0\). This prevents fluoride ions from activating this perceptron when it fails our predefined SMARTS (i.e. the fluorine atom must be connected to something else). 

\paragraph{Readout.} Max readout is performed to obtain the \(x \in \mathbb{R}^2\)
 vector representing the graph. We then make a final prediction by using one linear MLP layer (with one perceptron) with the following weights:

 \[
 \begin{bmatrix}
     1 & 1
 \end{bmatrix}
 \]

 and a bias term of \(-1.51\), meant to calibrate the network. Sigmoid activation is used to obtain the final prediction.

 \paragraph{Validation.} We experimentally validated the network on 153173 positives and 846826 negatives randomly sampled from the Pattern 12 validation set (SMILES-wise; labels were generated to match the pattern from this task) and found no misclassifications. Note that this is a ``simplified'' validation pipeline, as the ``regular'' one would require filtering entire PubChem and finding Tversky negatives. We omitted it due to time constraints.

\paragraph{Quantitative results.} None of the explainers produce a perfect explanation, with the exception of Saliency (3 out of 9,999 cases). Their mean AUROC is also substantially lower than in case of WOMBAT's other whiteboxes. We present the full results in Table \ref{fig:app_gineconv_quantitative}.

\begin{table}
\centering
\caption{Explainer performance on the GINEConv whitebox detecting the simultaneous presence of fluorine and a carbonyl group. Mean AUROC and perfect explanations (AUROC = 1) are evaluated on positive molecules; IQR success rates are evaluated on negative molecules. Only Saliency produces any perfect explanations, in 3 of 9,999 cases. PGExplainer's perfect IQR success rate reflects collapsed, uniformly zero attributions. Conversely, non-uniform attributions on negative molecules may faithfully reflect the contribution of one of the two required substructures.
} \label{fig:app_gineconv_quantitative}
\begin{scriptsize}
    
\begin{tabular}{ccccc}
\toprule
Explainer & Mean AUROC\(\uparrow\) & Perfect explanations & Perfect rate & Mean Success Rate (IQR criterion) \\
\toprule

PGExplainer &	0.5000 &	0 / 9,999	& 0.00\% & 1.0000 \\
Integrated Gradients &	0.7077 &	0 / 9,999 &	0.00\% & 0.1478 \\ 
Saliency &  0.7884 & 3 / 9,999 & 0.03\% & 0.0590 \\ 
SHAP Sampling & 0.6494 & 0 / 100 & 0.00\% & 0.3600 \\ 
Input \(\times\) Gradient &  0.7077 & 0 / 9999 & 0.00\% & 0.2274 \\ 
GNN Explainer & 0.7075 & 0 / 9999 & 0.00\% & 0.1516 \\ 
SubgraphX & 0.6560 &  0 / 100 & 0.00\% & N/A \\

 \bottomrule
\end{tabular}
\end{scriptsize}

\end{table}

\paragraph{Qualitative results.} For the evaluation, we’ve used the same pipeline as for WOMBAT’s benchmarks. For qualitative analysis, we randomly sampled 20 SMILES from positive and 20 SMILES from negative set and observed the following:

\begin{itemize}
    \item IG, SVS, Saliency, Input x Gradient consistently fail to properly attribute the neighbouring atom in F-* substructure. This atom is being relied on topologically (as its existence is crucial for activation), but other than participating in message passing it does not ``add'' anything to the final score itself. This may be partially attributed to the fact that we explain only node/edge features. Formulation based on masking graph connectivity itself may help with this issue, as masking the edge would lower the network's prediction. This was not a problem in other whiteboxes, as masking an edge feature vector automatically scaled down all messages sent through that edge, due to our MPNN formulation in Equation \ref{eq:message}.
    \item Similarly, IG, SVS and Input \(\times\) Gradient fail to attribute the oxygen atom in C=O substructure. Again, this atom is being relied on topologically (as its existence is crucial for activation), but other than participating in message passing it does not “add” anything to the final score itself (note that embedding of oxygen atom is a vector of zeroes). In case of IG and SVS, this may also be partially caused due to the baseline being a zero-vector, as in case of this network there is no difference between oxygen’s embedding when it’s masked and when it’s not. This could potentially be addressed by performing IG directly on an atom's embedding space, not on the space of inputs to the GNN, and then using a different baseline for the oxygen atom. This interpolation would likely not fix everything, as failure to attribute neighbouring atom of fluorine does not share this failure mode (carbon atoms, which often neighbour fluorine atoms, do not have zero-embeddings, yet are not properly credited).
    \item Saliency is the only explainer to attribute the oxygen atom in the C=O substructure, as other entries in one hot encoding (that are zeros) accumulate gradients. The oxygen entry itself accumulates zero gradients, as oxygen embedding is a vector of zeroes (this also shows why Input \(\times\) Gradient does not attribute that oxygen). As such, this attribution seems to be accidental, especially given the fact that saliency fails to attribute neighbours of fluorine atoms.
    \item PGExplainer collapsed during training and produces null attributions. However, during our test run on a much smaller number of molecules to explain (and in its training dataset) it managed to converge. Qualitative analysis of that checkpoint shows that it properly attributes C=O motif, while not attributing F-* motif. That said, it sometimes also attributes motifs that are not C=O, like S=O or CON.
    \item GNNExplainer, similarly to Input x Gradient and other explainers, only attributes fluorine atoms belonging to F-* substructures and carbon atoms belonging to the C=O substructures.
    \item SubgraphX is severely disadvantaged, as the explanation is not a connected subgraph (C=O and F-* motif can be on different sides of the molecule). As such, it sometimes produces giant attributions that span through a majority of the molecule to connect the two motifs, while sometimes attributing only one of them. It also fails to attribute oxygen atoms in C=O substructures.
    \item On negative samples, substantially more explainers fail the IQR criterion. PGExplainer is the exception because its null attributions are necessarily uniform and therefore always satisfy the criterion. We attribute the lower success rates to negative molecules containing one of the two required substructures. Although such molecules remain below the classification threshold, the present substructure genuinely increases the network’s output, and faithful explainers may assign it non-uniform attribution. Consequently, failure of the IQR criterion does not necessarily indicate either shortcut learning or explainer failure.
\end{itemize}

\subsection{Differing implementations}
\label{sec:appendix_impls}

WOMBAT whiteboxes for given patterns are \textit{not unique}. In other words, there may exist multiple (or -- in fact -- infinitely many) whiteboxes that are able to detect a given pattern. Appending layers that propagate signal without affecting the readout leaves the \(r_2\) decision function identical (within space of one-hot-encoded inputs), and there are many non-trivial reformulations besides. Note that implementation changes can also change a model's logits (if using the \(r_1\) readout, which is not bounded) without the model losing its correctness (i.e. making correct predictions for a predetermined threshold of 0.5).

To illustrate this point, we will present 4 different whiteboxes for Pattern 12 detection. The first of them is the original whitebox for Pattern 12, while the remaining three are new. For the weight notation, please refer to Section \ref{sec:notation}. Unless stated otherwise, all results will be stated for the readout function \(r_2\). All new whiteboxes were validated on a subset of randomly sampled 1 million molecules from Pattern 12’s validation set.

\subsubsection{The original implementation}

\paragraph{Layer 1.} The first layer aggregates basic chemical features. 

\begin{minipage}{0.55\linewidth}
\begin{enumerate}
    \item \textbf{A1:} Carbon with exactly 1 hydrogen, that is connected via double bond to at least one oxygen atom.
    \item \textbf{B1:} Carbon with exactly 1 hydrogen, that is connected via a single bond to at least one other atom.
\end{enumerate}
\end{minipage}%
\begin{minipage}{0.1\linewidth}
    \begin{center}
        \rule{1pt}{100pt}
    \end{center}
\end{minipage}%
\begin{minipage}{0.35\linewidth}
    \begin{center}
    \includegraphics[width=\linewidth]{files/activation_visualisation/pattern_12/layer_1.png}
    \end{center}
\end{minipage}

\paragraph{Readout.} The final graph-level readout function simply checks for the existence of at least one node that is A1-active and B1-active at the same time.

\paragraph{Observations.} This whitebox exposes PGExplainer’s weakness, i.e. being capable of performing shortcut learning to make explanations that are not always faithful. This is discussed in more detail in the Section \ref{sec:appendix_pge_fail_p12}.

\subsubsection{The second whitebox (pro-PGExplainer)}

This whitebox consists of 2 layers.

\paragraph{Layer 1.} The first layer aggregates basic chemical features. 

\begin{minipage}{0.55\linewidth}
\begin{enumerate}
    \item \textbf{A1:} Oxygen that is connected via a double bond to at least one carbon.
    \item \textbf{B1:} Carbon with exactly 1 hydrogen, that is connected via a single bond to at least one other atom.
\end{enumerate}
\end{minipage}%
\begin{minipage}{0.1\linewidth}
    \begin{center}
        \rule{1pt}{100pt}
    \end{center}
\end{minipage}%
\begin{minipage}{0.35\linewidth}
    \begin{center}
    \includegraphics[width=\linewidth]{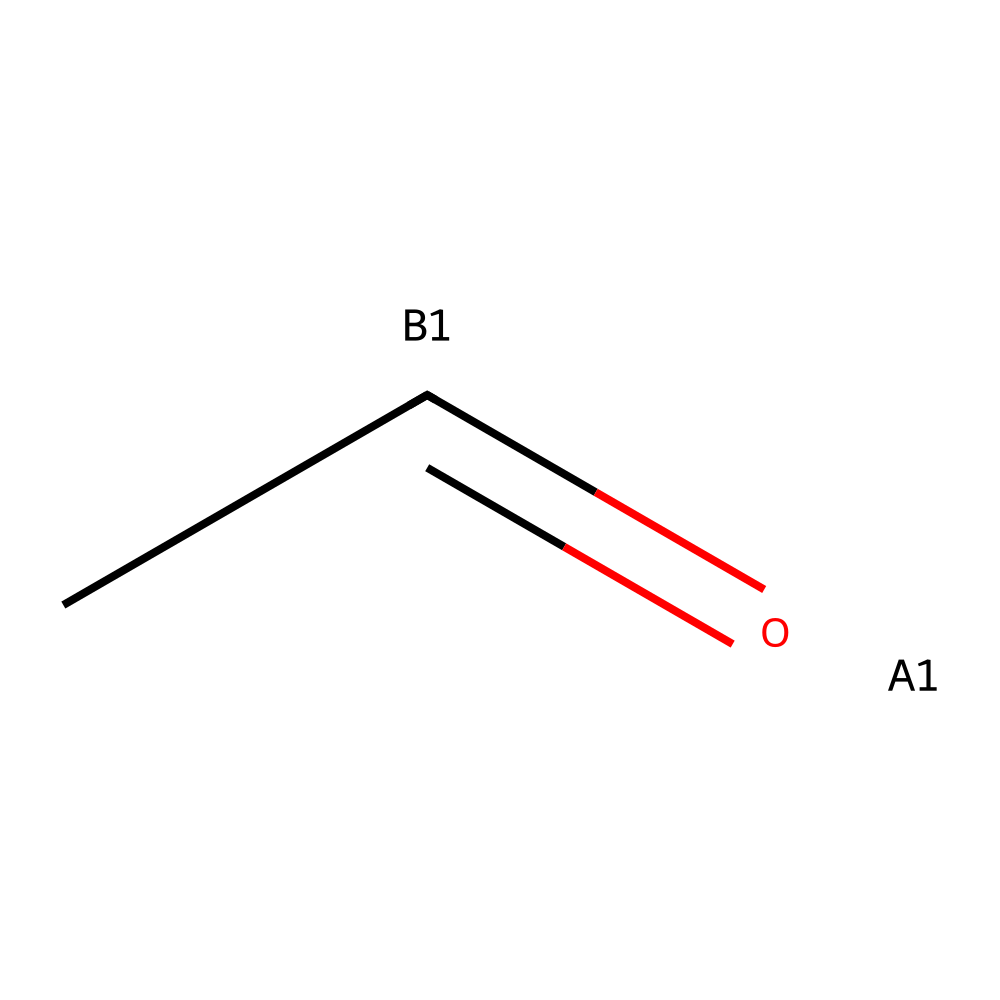}
    \end{center}
\end{minipage}

\paragraph{Layer 2.} The second layer aggregates low-level chemical features into higher-level ones. 

\begin{minipage}{0.55\linewidth}
\begin{enumerate}
    \item \textbf{A2:} B1 with a double bond to A1.
\end{enumerate}
\end{minipage}%
\begin{minipage}{0.1\linewidth}
    \begin{center}
        \rule{1pt}{100pt}
    \end{center}
\end{minipage}%
\begin{minipage}{0.35\linewidth}
    \begin{center}
    \includegraphics[width=\linewidth]{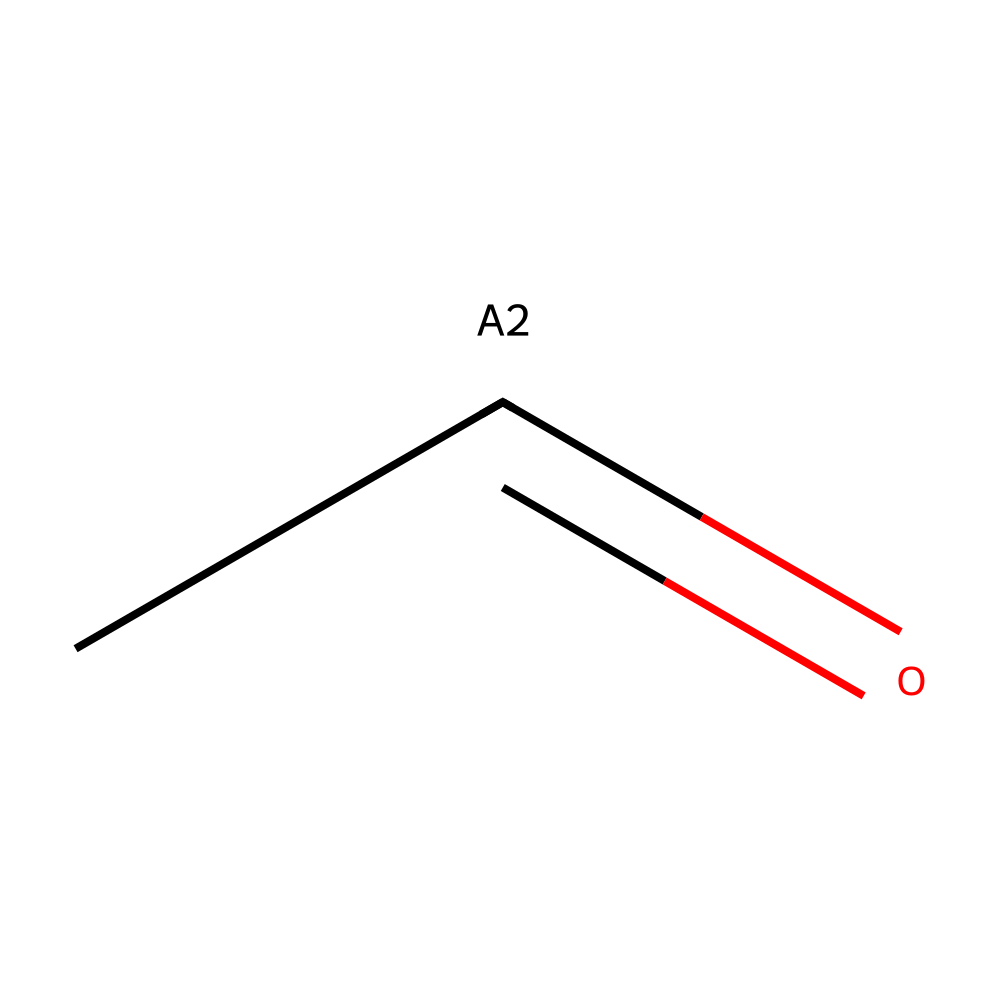}
    \end{center}
\end{minipage}

\paragraph{Readout.} The final graph-level readout function simply checks for the existence of at least one node that is A2-active.

\paragraph{Observations.} This whitebox changes the implementation so that PGExplainer cannot overfit the way it did for the first implementation. Here, the only rule for PGExplainer to ``learn'' is that if an edge has a terminal in A2-active node, it needs to be attributed, which is generally trivial for an MLP to learn\footnote{This is in opposition to the \(\mathrm{A1} \land \mathrm{B1}\) rule PGExplainer should have learned for the original implementation, but failed because of the heuristic \(\mathrm{A1} \implies \mathrm{B1}\) which was true for \textit{most} but not all molecules}. Indeed, PGExplainer always scored AUROC=1.0 for that whitebox. 

\subsubsection{The third whitebox (anti-PGExplainer)}

This whitebox consists of 3 layers.

\paragraph{Layer 1.} The first layer aggregates basic chemical features and is identical to the first layer from the pro-PGExplainer whitebox.

\begin{minipage}{0.55\linewidth}
\begin{enumerate}
    \item \textbf{A1:} Oxygen that is connected via a double bond to at least one carbon.
    \item \textbf{B1:} Carbon with exactly 1 hydrogen, that is connected via a single bond to at least one other atom.
\end{enumerate}
\end{minipage}%
\begin{minipage}{0.1\linewidth}
    \begin{center}
        \rule{1pt}{100pt}
    \end{center}
\end{minipage}%
\begin{minipage}{0.35\linewidth}
    \begin{center}
    \includegraphics[width=\linewidth]{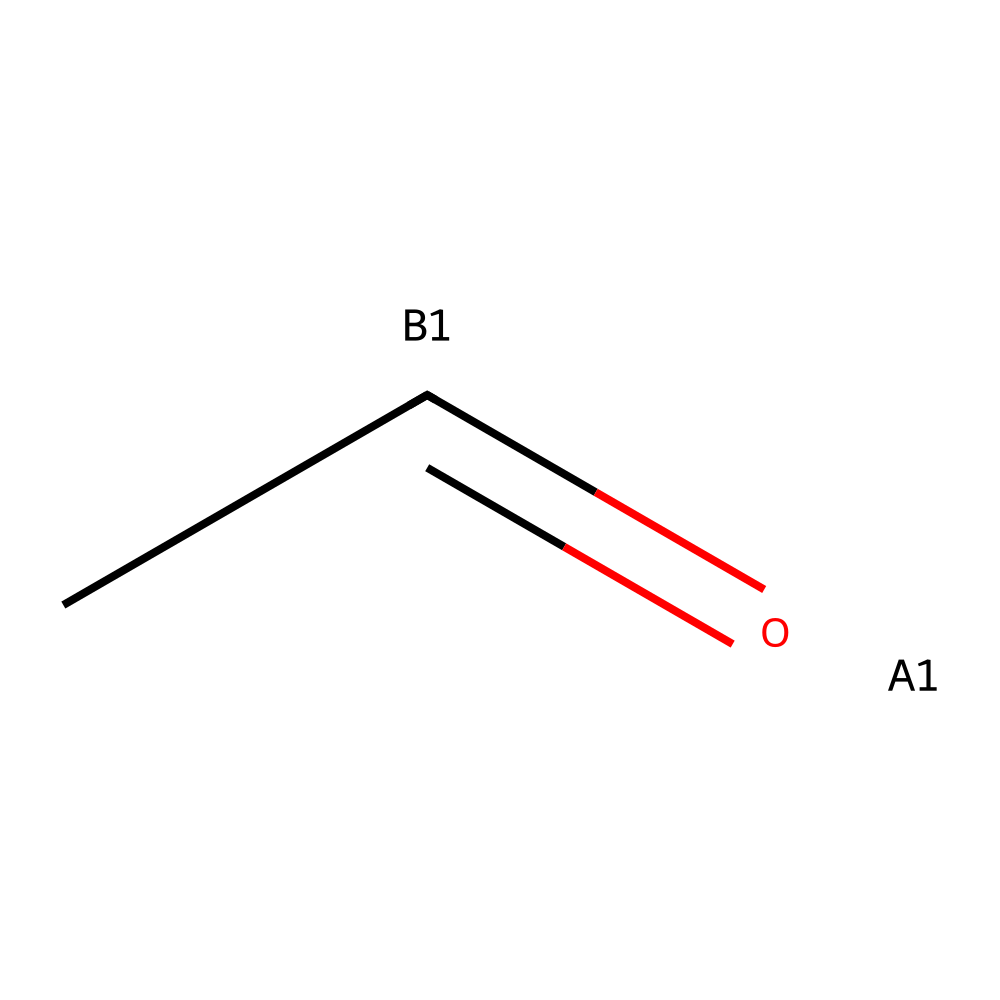}
    \end{center}
\end{minipage}

\paragraph{Layer 2.} The second layer aggregates low-level chemical features into higher-level ones. 

\begin{minipage}{0.55\linewidth}
\begin{enumerate}
    \item \textbf{A2:} B1 with a double bond to A1.
    \item \textbf{B2:} Activation in \textit{any} node (this is done by setting the bias term to 1 and was not done in any other whitebox).
\end{enumerate}
\end{minipage}%
\begin{minipage}{0.1\linewidth}
    \begin{center}
        \rule{1pt}{100pt}
    \end{center}
\end{minipage}%
\begin{minipage}{0.35\linewidth}
    \begin{center}
    \includegraphics[width=\linewidth]{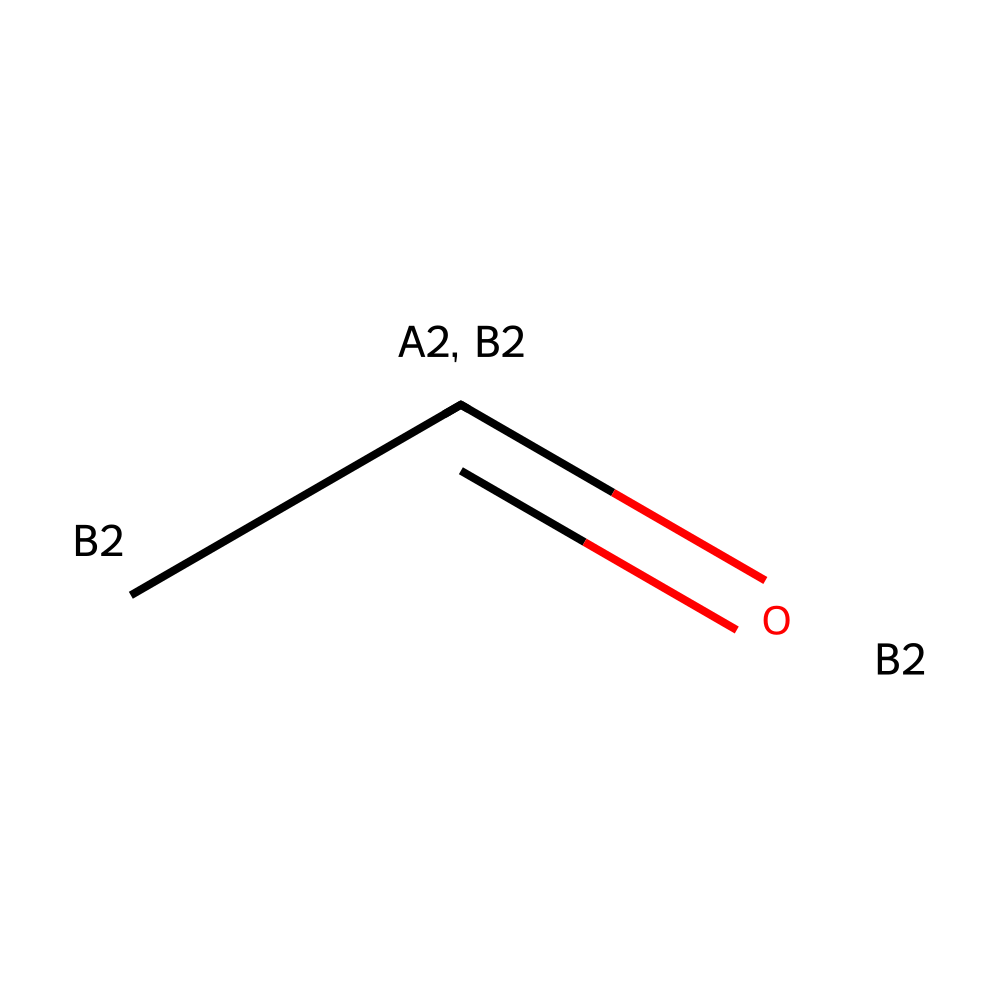}
    \end{center}
\end{minipage}

\paragraph{Layer 3.} The third layer ``moves`` the A2 activation to the nodes bordering the carbon atom.

\begin{minipage}{0.55\linewidth}
\begin{enumerate}
    \item \textbf{A3:} B2 connected with A2 via any bond.
\end{enumerate}
\end{minipage}%
\begin{minipage}{0.1\linewidth}
    \begin{center}
        \rule{1pt}{100pt}
    \end{center}
\end{minipage}%
\begin{minipage}{0.35\linewidth}
    \begin{center}
    \includegraphics[width=\linewidth]{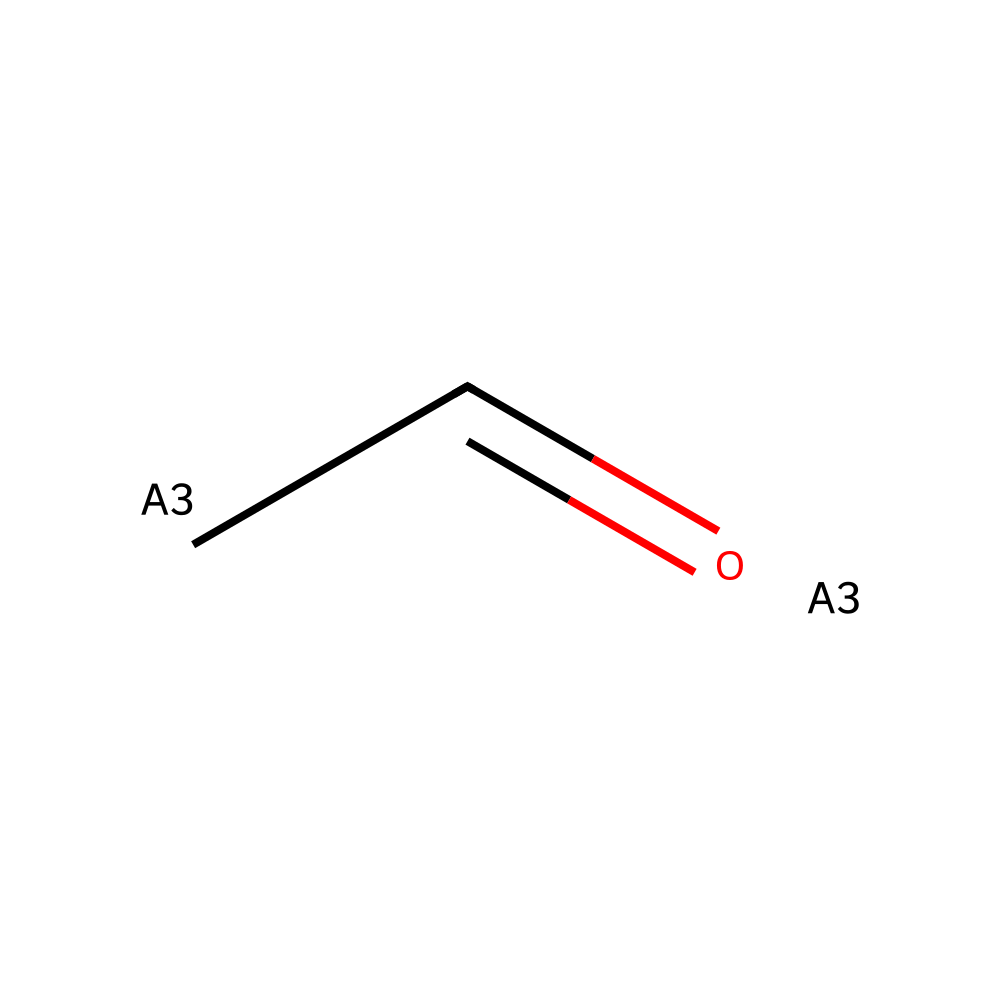}
    \end{center}
\end{minipage}

\paragraph{Readout.} The final graph-level readout function simply checks for the existence of at least one node that is A3-active.

\paragraph{Observations.} This whitebox is meant to further show PGExplainer’s implementation reliance, as it effectively ``spreads around'' the activation that essentially tells that the motif is present to adjacent nodes, making PGE’s reliance on last-layer activations troublesome. That is because -- this time -- merely attributing all edges which have at least one A3-activated terminal can yield false positives (as the carbon's neighbour will be A3-activated, meaning edges connecting it to its irrelevant neighbours would get attributions). Indeed, under that implementation scheme, PGE’s failures are not isolated cases (although it does not return null attributions; it achieves a mean AUROC of 0.983, but ideal attributions only for 5\% of molecules). Note that -- in this case -- the final representations are too ambiguous from the perspective of PGExplainer (i.e. its imperfect performance is \textit{not} an artifact of optimisation process causing it to learn some heuristic). 

\subsubsection{The fourth whitebox (anti-IG)}

This whitebox consists of 6 layers.

\paragraph{Layer 1.} The first layer aggregates basic chemical features. Note the presence of C1 activation, which carries almost no information but will be computationally used to ``smear'' attributions. 

\begin{minipage}{0.55\linewidth}
\begin{enumerate}
    \item \textbf{A1:} Carbon  with exactly 1 hydrogen, that is connected via a single bond to at least one other atom.
    \item \textbf{B1:} Oxygen, that is connected via any bond to at least one other atom.
    \item \textbf{C1:} Any atom connected to any atom via any bond (note that this is possible without setting positive bias)
\end{enumerate}
\end{minipage}%
\begin{minipage}{0.1\linewidth}
    \begin{center}
        \rule{1pt}{100pt}
    \end{center}
\end{minipage}%
\begin{minipage}{0.35\linewidth}
    \begin{center}
    \includegraphics[width=\linewidth]{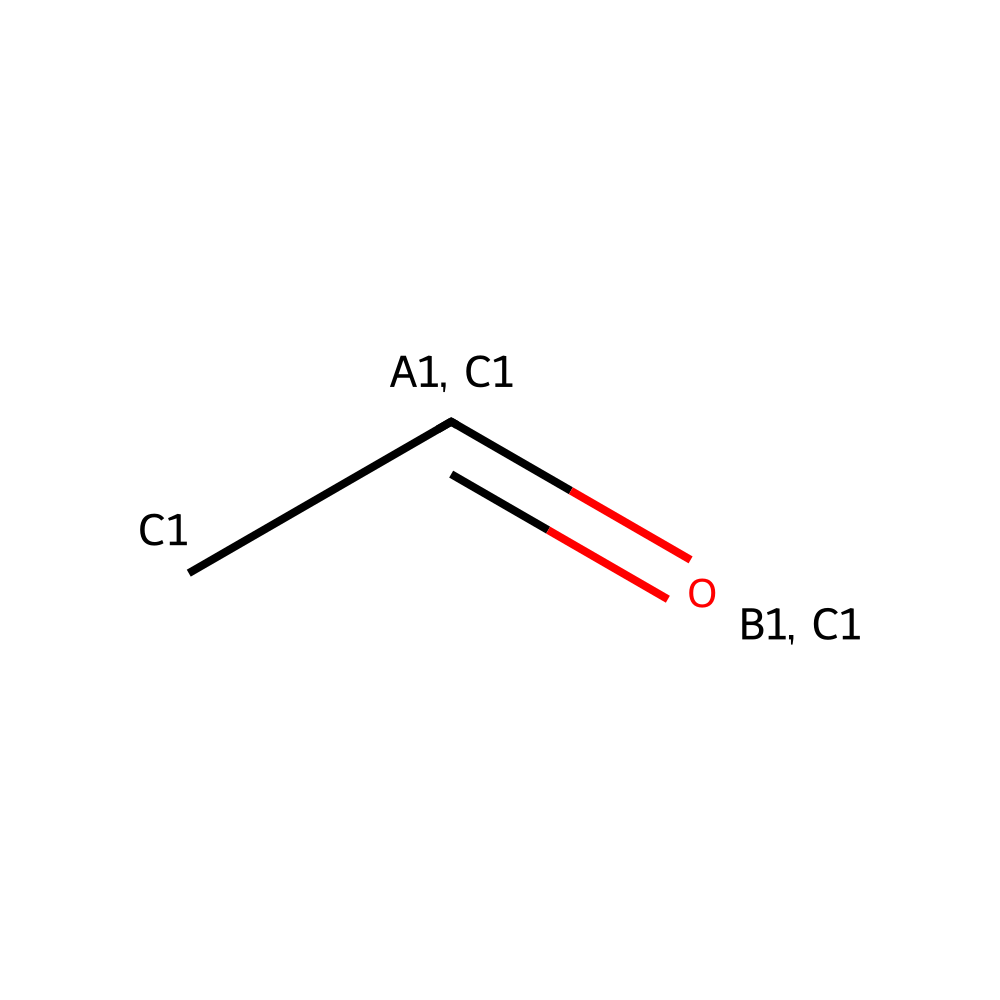}
    \end{center}
\end{minipage}

\paragraph{Layer 2.} The second layer does nothing, except preserving the A1, B1 and C1 activations further. To preserve A1 and B1 activations, we use predicate checking for connection to C1 (every non-isolated atom is C1-activated).

\begin{minipage}{0.55\linewidth}
\begin{enumerate}
    \item \textbf{A2:} A1-activated node connected via any bond to at least one C1-activated atom.
    \item \textbf{B2:} B1-activated node connected via any bond to at least one C1-activated atom.
    \item \textbf{C2:} C1-activated node connected via any bond to at least one C1-activated atom.
\end{enumerate}
\end{minipage}%
\begin{minipage}{0.1\linewidth}
    \begin{center}
        \rule{1pt}{100pt}
    \end{center}
\end{minipage}%
\begin{minipage}{0.35\linewidth}
    \begin{center}
    \includegraphics[width=\linewidth]{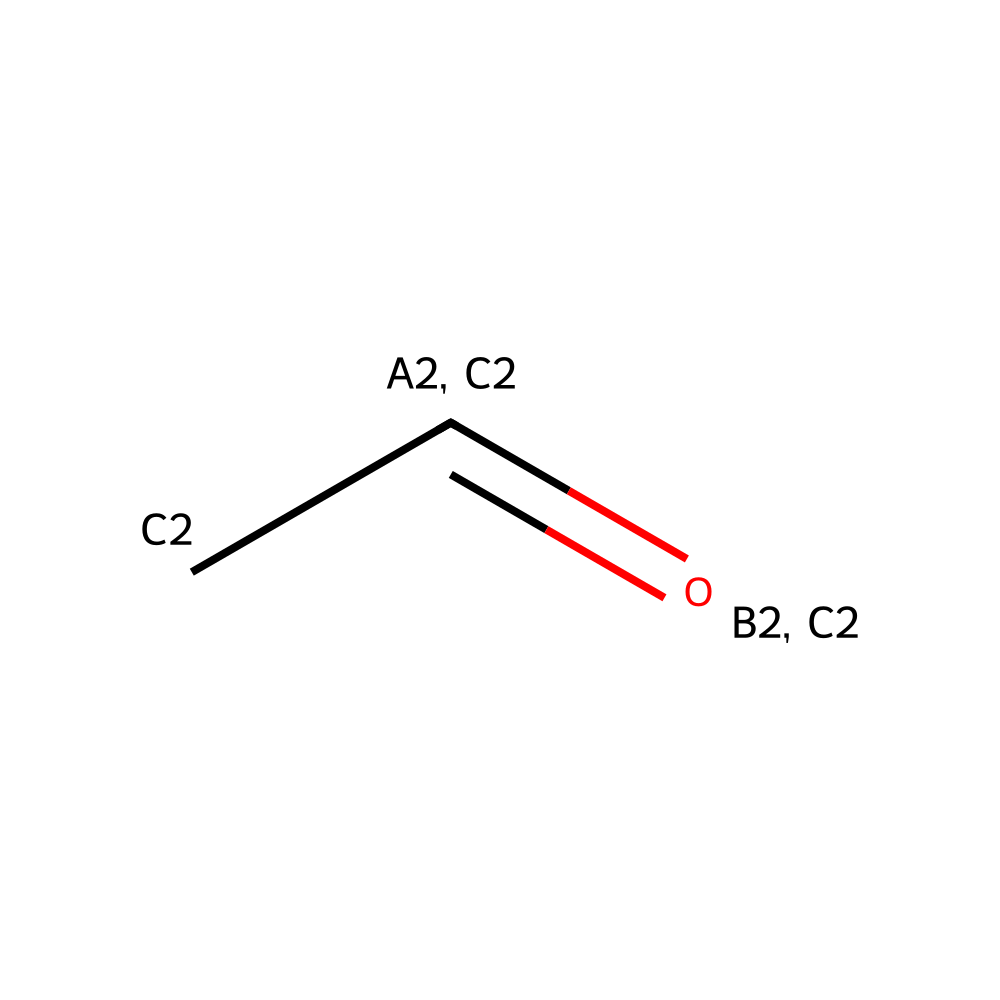}
    \end{center}
\end{minipage}

\paragraph{Layer 3.} The third layer also does nothing, except further activation preservation.

\begin{minipage}{0.55\linewidth}
\begin{enumerate}
    \item \textbf{A3:} A2-activated node connected via any bond to at least one C2-activated atom.
    \item \textbf{B3:} B2-activated node connected via any bond to at least one C2-activated atom.
    \item \textbf{C3:} C2-activated node connected via any bond to at least one C2-activated atom.
\end{enumerate}
\end{minipage}%
\begin{minipage}{0.1\linewidth}
    \begin{center}
        \rule{1pt}{100pt}
    \end{center}
\end{minipage}%
\begin{minipage}{0.35\linewidth}
    \begin{center}
    \includegraphics[width=\linewidth]{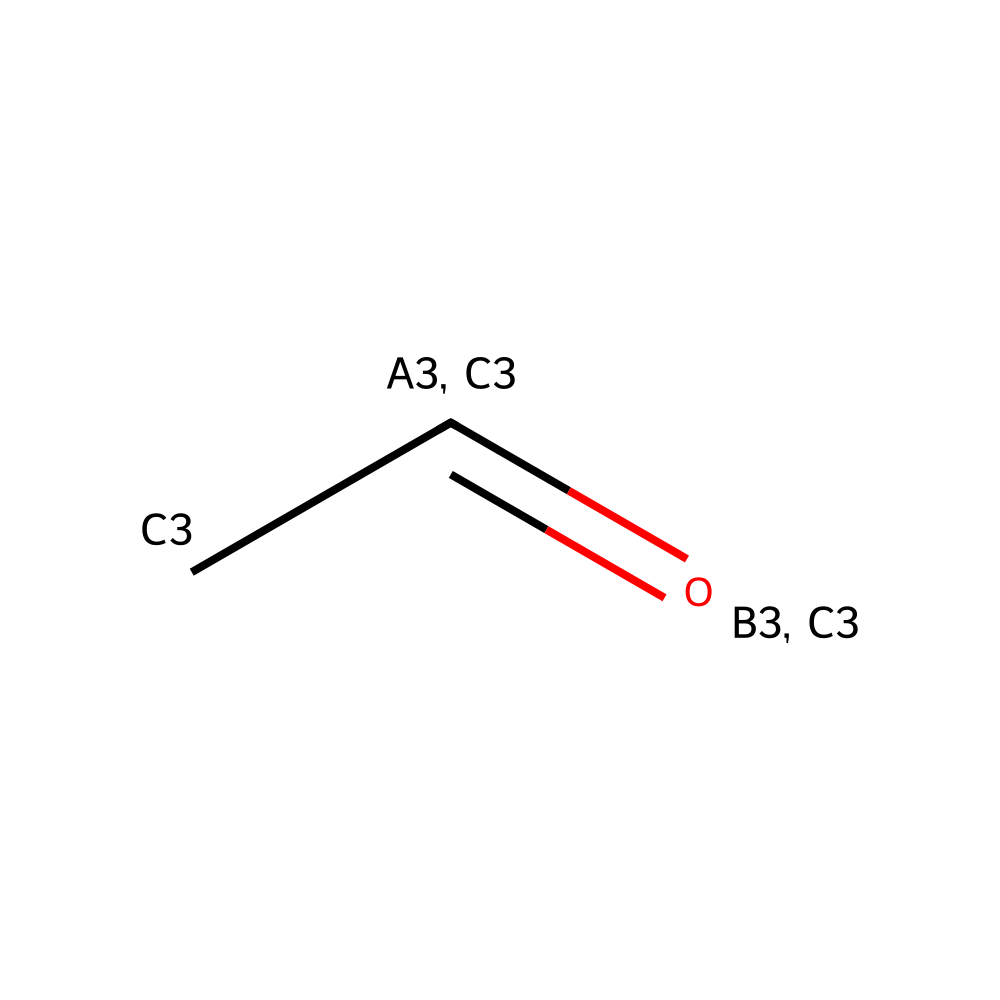}
    \end{center}
\end{minipage}

\paragraph{Layer 4.} The fourth layer likewise only preserves activations.

\begin{minipage}{0.55\linewidth}
\begin{enumerate}
    \item \textbf{A4:} A3-activated node connected via any bond to at least one C3-activated atom.
    \item \textbf{B4:} B3-activated node connected via any bond to at least one C3-activated atom.
    \item \textbf{C4:} C3-activated node connected via any bond to at least one C3-activated atom.
\end{enumerate}
\end{minipage}%
\begin{minipage}{0.1\linewidth}
    \begin{center}
        \rule{1pt}{100pt}
    \end{center}
\end{minipage}%
\begin{minipage}{0.35\linewidth}
    \begin{center}
    \includegraphics[width=\linewidth]{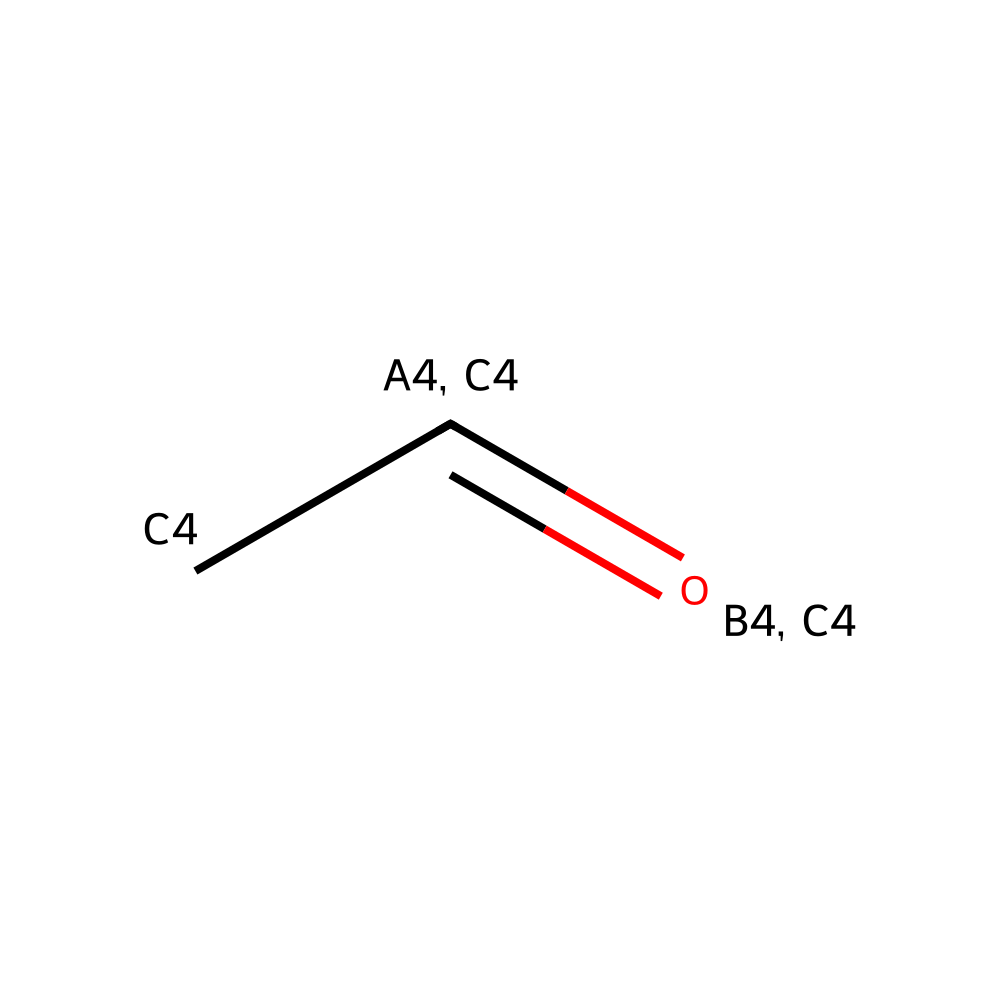}
    \end{center}
\end{minipage}

\paragraph{Layer 5.} The fifth layer likewise only preserves activations.

\begin{minipage}{0.55\linewidth}
\begin{enumerate}
    \item \textbf{A5:} A4-activated node connected via any bond to at least one C4-activated atom.
    \item \textbf{B5:} B4-activated node connected via any bond to at least one C4-activated atom.
    \item \textbf{C5:} C4-activated node connected via any bond to at least one C4-activated atom.
\end{enumerate}
\end{minipage}%
\begin{minipage}{0.1\linewidth}
    \begin{center}
        \rule{1pt}{100pt}
    \end{center}
\end{minipage}%
\begin{minipage}{0.35\linewidth}
    \begin{center}
    \includegraphics[width=\linewidth]{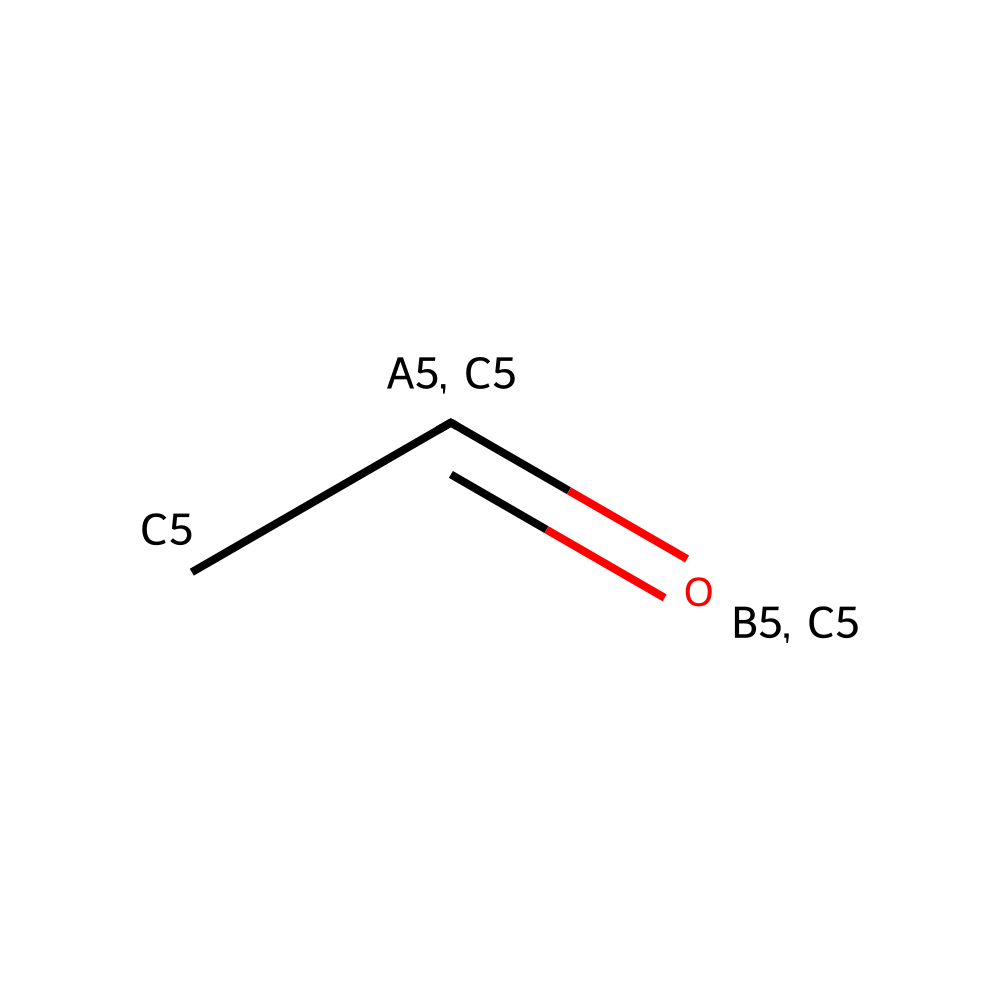}
    \end{center}
\end{minipage}

\paragraph{Layer 6.} The sixth layer combines chemical features into a higher-level one.

\begin{minipage}{0.55\linewidth}
\begin{enumerate}
    \item \textbf{A6:} A5-activated node connected via a double bond to at least one B5-activated atom.
\end{enumerate}
\end{minipage}%
\begin{minipage}{0.1\linewidth}
    \begin{center}
        \rule{1pt}{100pt}
    \end{center}
\end{minipage}%
\begin{minipage}{0.35\linewidth}
    \begin{center}
    \includegraphics[width=\linewidth]{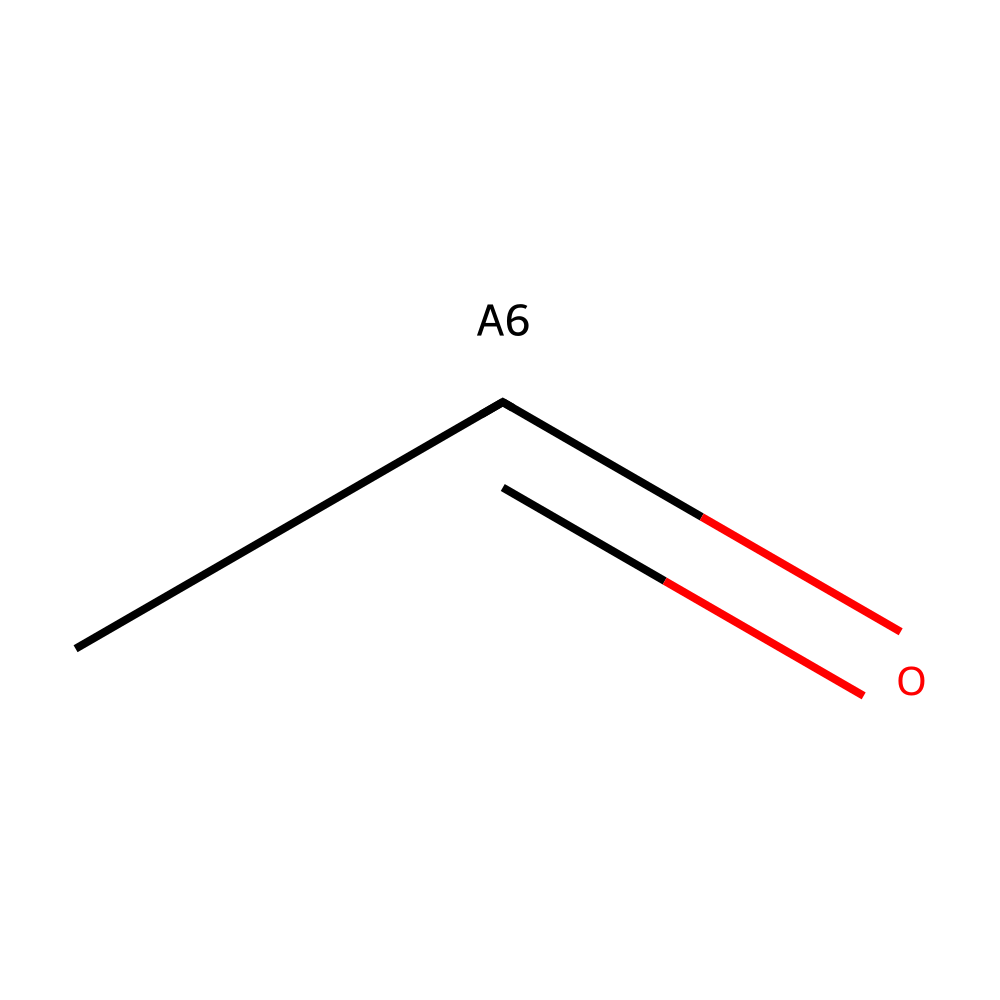}
    \end{center}
\end{minipage}

\paragraph{Readout.} The final graph-level readout function simply checks for the existence of at least one node that is A6-active.

\paragraph{Observations.} The additional layers preserve the motif recognition rule on one-hot-encoded molecular inputs: every non-isolated node is C1-active, so the propagation steps preserve which nodes satisfy the A1 and B1 conditions. However, they introduce additional computational dependencies on neighbouring features through the \(\mathrm{C}_i\) activations, allowing non-motif atoms to contribute gradients along IG's interpolation path. This effectively ``smears'' the resulting attributions: despite a mean AUROC of \(0.97\), IG achieves perfect attributions on only \(7.60\%\) of the evaluated molecules. We illustrate the network's layer-by-layer activations on a larger molecule in Figure~\ref{fig:appendix_smoothing_pattern_12_activations_big} and the corresponding IG attribution in Figure~\ref{fig:leaky_ig}. Note that attributed non-motif nodes are \textit{not} logically needed, as the substructure alone is both necessary and sufficient for the network to predict its existence.

\begin{figure}
    \begin{subfigure}[t]{0.5\linewidth}
        \includegraphics[width=\linewidth]{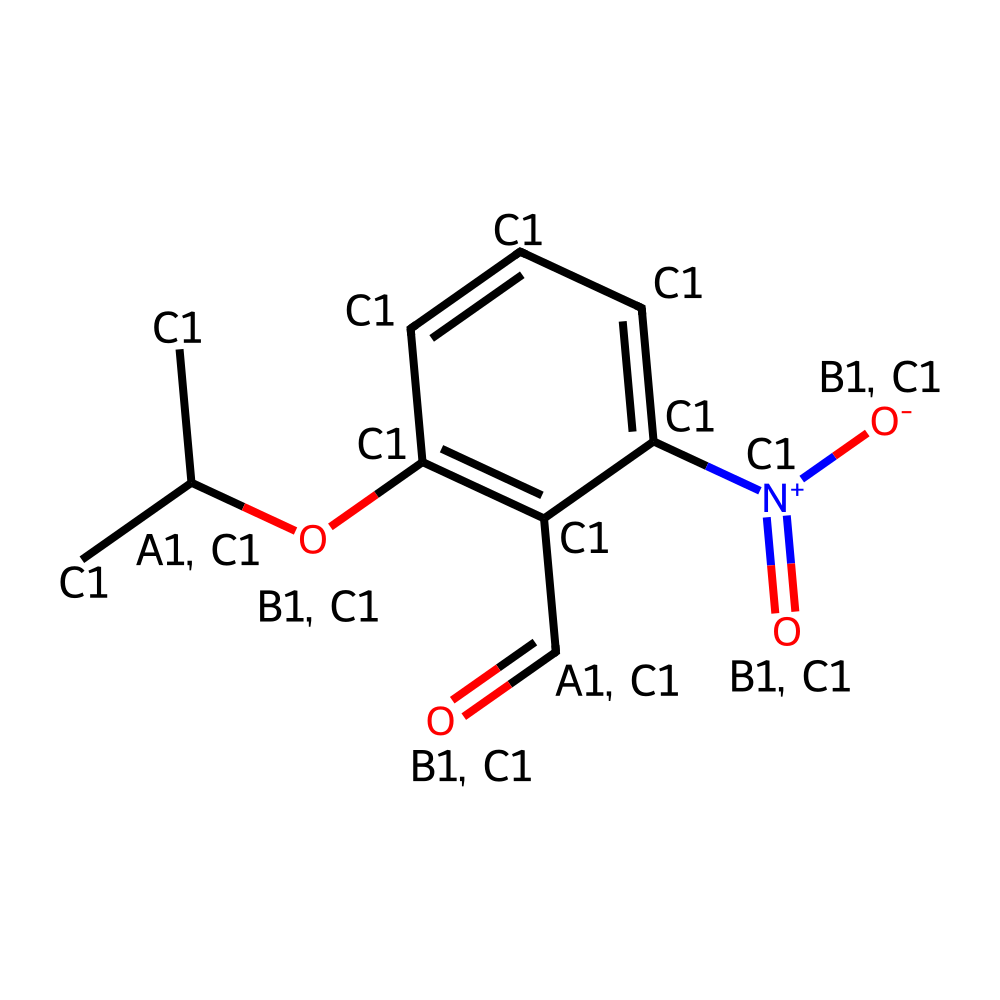}
    \end{subfigure}
    \hfill 
    \begin{subfigure}[t]{0.5\linewidth}
        \includegraphics[width=\linewidth]{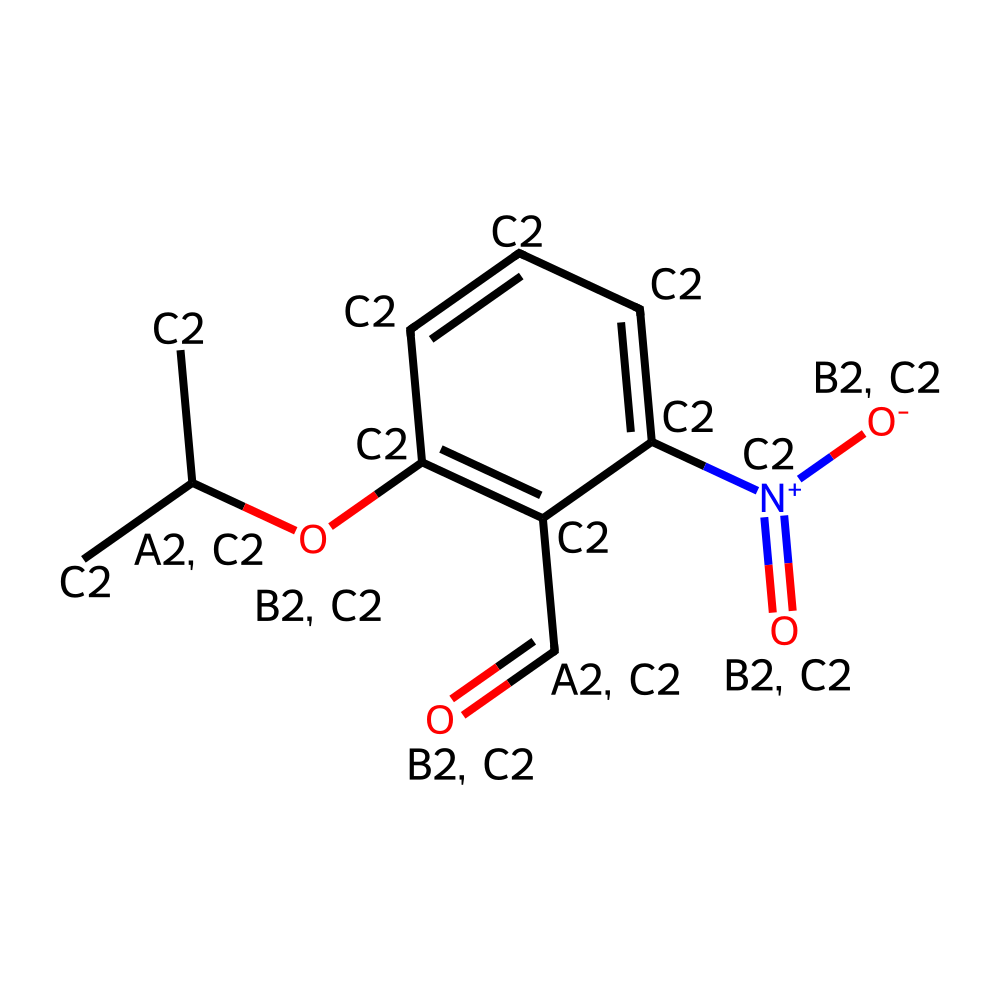}
    \end{subfigure} \\ 
    \begin{subfigure}[t]{0.5\linewidth}
        \includegraphics[width=\linewidth]{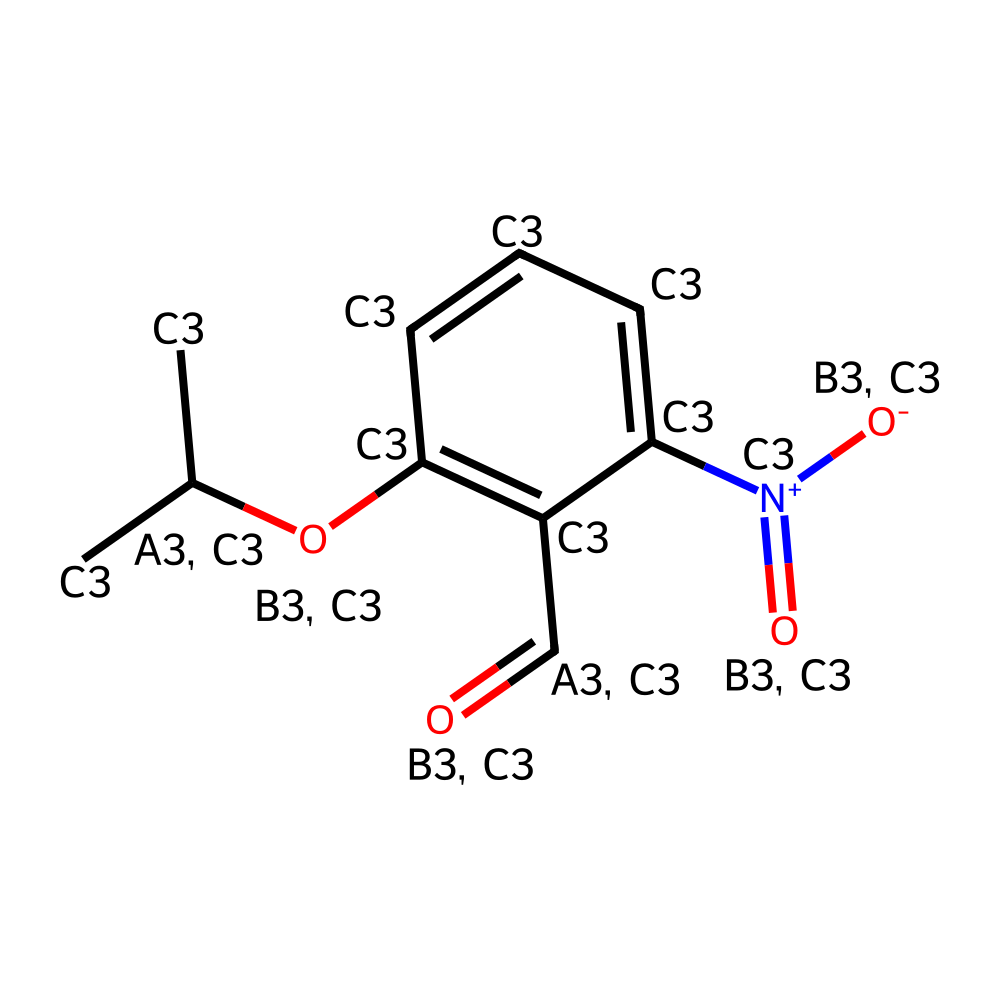}
    \end{subfigure}
    \hfill 
    \begin{subfigure}[t]{0.5\linewidth}
        \includegraphics[width=\linewidth]{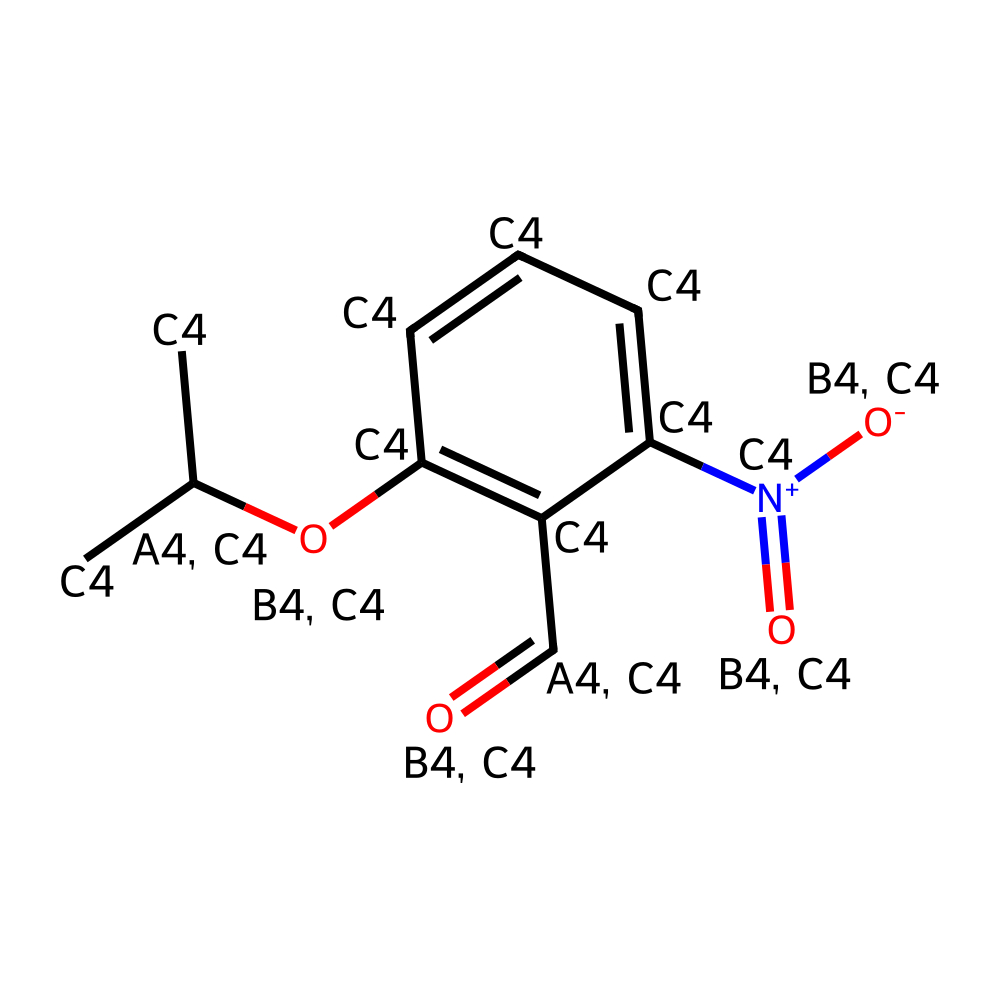}
    \end{subfigure} \\ 
        \begin{subfigure}[t]{0.5\linewidth}
        \includegraphics[width=\linewidth]{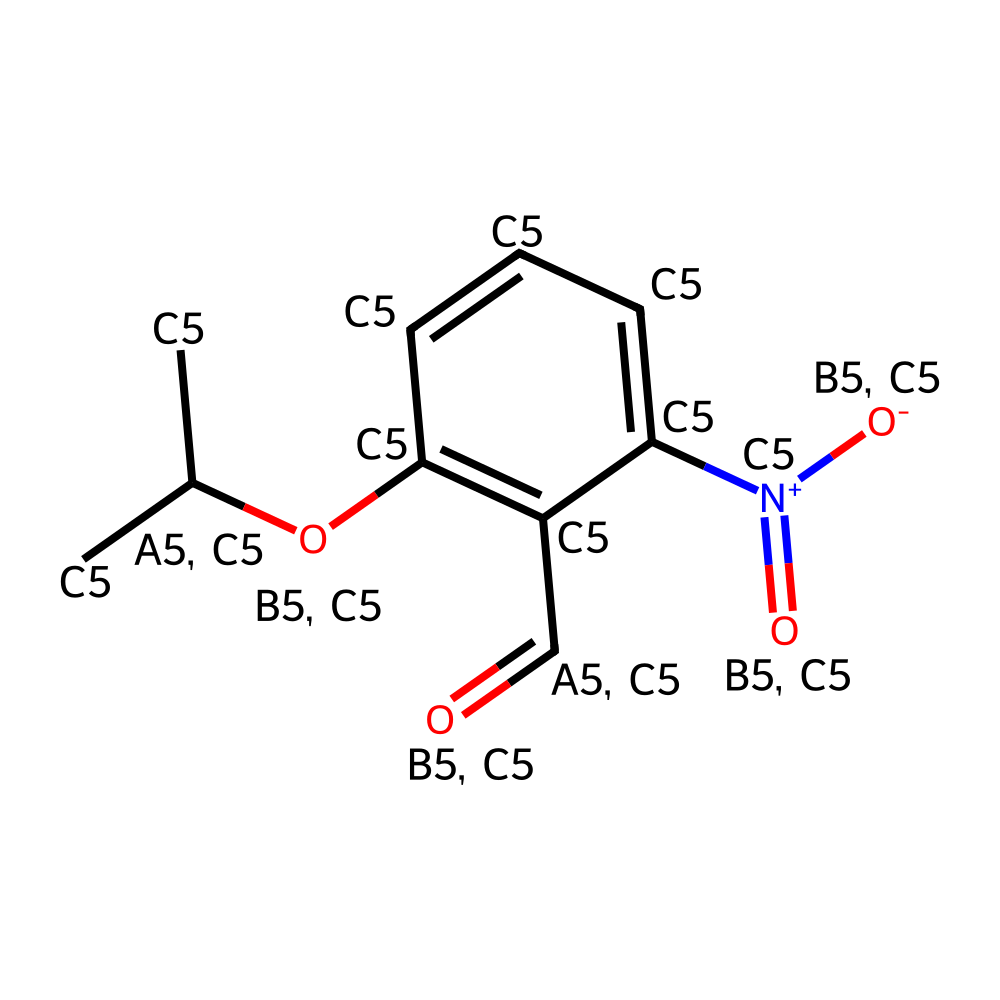}
    \end{subfigure}
    \hfill 
    \begin{subfigure}[t]{0.5\linewidth}
        \includegraphics[width=\linewidth]{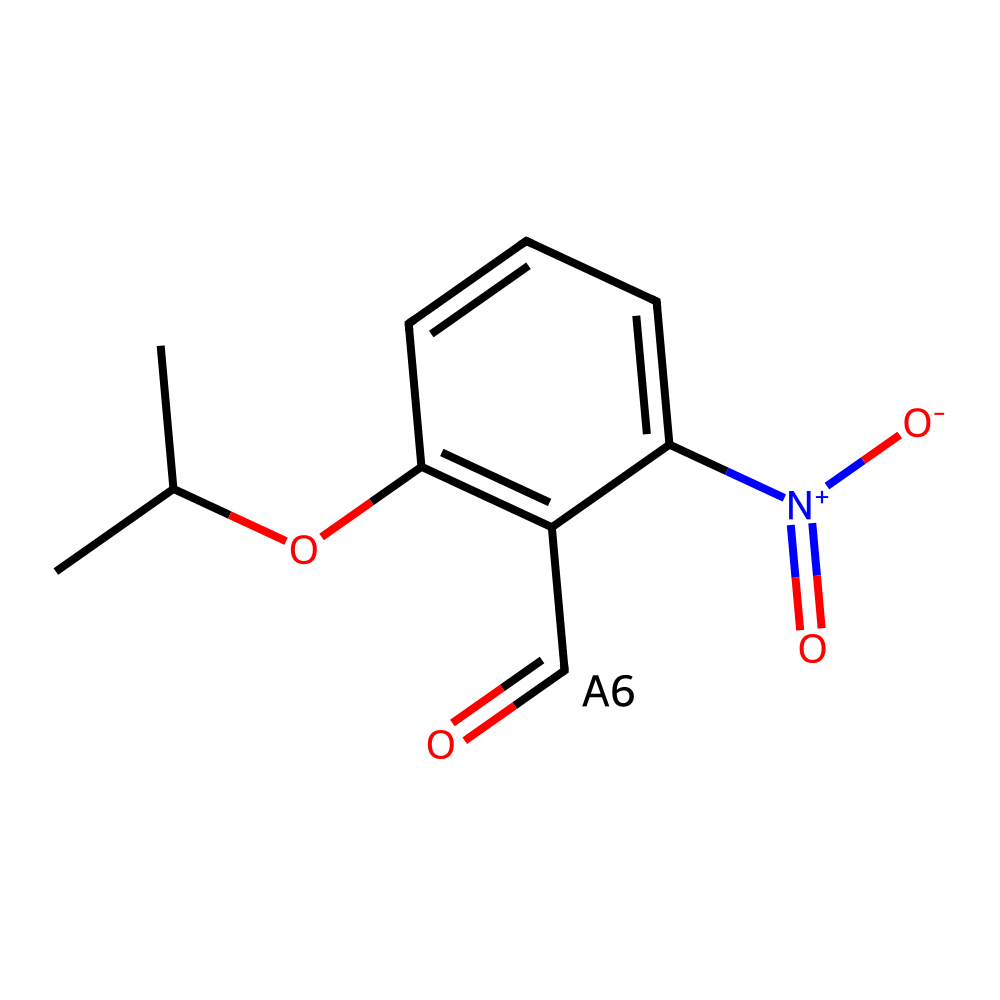}
    \end{subfigure} \\ 
    \caption{Layer-by-layer activations of the anti-IG Pattern 12 on an example molecule.}
    \label{fig:appendix_smoothing_pattern_12_activations_big}
\end{figure}

%% file: appendix_additional_results.tex
\section{Additional experimental results}

In this section we present results presenting AUROC for readout $r_1$ in Table~\ref{tab:model_eval_auroc_r1} and for readout $r_2$ in Table~\ref{tab:model_eval_auroc_r2}, and combined results colorcoded in Table~\ref{tab:model_eval_auroc_continuous_custom_palette}. Then we present results for mean Average Precision for readout $r_1$ in Table~\ref{tab:model_eval_ap_r1}, for readout $r_2$ in Table~\ref{tab:model_eval_ap_r2}, and combined in Table~\ref{tab:model_eval_ap_continuous_custom_palette}.

\input{tables/big_table_aurocs_readout_1}
\input{tables/big_table_aurocs_readout_2}
\input{tables/big_table_aps_readout_1}
\input{tables/big_table_aps_readout_2}
\input{tables/big_table_aurocs_pos_combined}
\input{tables/big_table_asp_pos_combined}

%% file: tables/big_table_aurocs_readout_1.tex
\begin{table}
\setlength{\tabcolsep}{3pt}

\caption{Mean AUROCs of all explainers across different whiteboxes (using readout function from Equation \ref{eqn:readout_1}) when evaluated on positive examples.}
\label{tab:model_eval_auroc_r1}
\begin{tiny}
\centering
\begin{tabular}{cccccccc}
 & \multicolumn{7}{c}{\textbf{mean AUROC \(\uparrow\)}} \\
\toprule
\textbf{No.}& \textbf{GNN Expl}. & \textbf{Input x Grad.} & \textbf{IG} & \textbf{PG Expl.} & \textbf{SHAP Sampl.} & \textbf{Saliency }& \textbf{SubgraphX} \\
\toprule
Pattern 1 & $0.500 \pm 0.000$ & $0.994 \pm 0.000$ & $0.994 \pm 0.000$ & $0.500 \pm 0.000$ & $0.989 \pm 0.002$ & $0.994 \pm 0.000$ & $0.935 \pm 0.011$ \\
Pattern 2 & $0.500 \pm 0.000$ & $0.997 \pm 0.002$ & $0.997 \pm 0.002$ & $0.500 \pm 0.000$ & $0.975 \pm 0.003$ & $0.997 \pm 0.002$ & $0.960 \pm 0.012$ \\
Pattern 3 & $0.500 \pm 0.000$ & $1.000 \pm 0.000$ & $1.000 \pm 0.000$ & $0.500 \pm 0.000$ & $0.966 \pm 0.006$ & $1.000 \pm 0.000$ & $0.916 \pm 0.026$ \\
Pattern 4 & $0.500 \pm 0.000$ & $0.982 \pm 0.002$ & $0.982 \pm 0.002$ & $0.500 \pm 0.000$ & $0.980 \pm 0.003$ & $0.986 \pm 0.002$ & $0.798 \pm 0.019$ \\
Pattern 5 & $0.500 \pm 0.000$ & $0.987 \pm 0.006$ & $0.989 \pm 0.005$ & $0.500 \pm 0.000$ & $0.939 \pm 0.030$ & $0.991 \pm 0.004$ & $0.857 \pm 0.032$ \\
Pattern 6 & $0.500 \pm 0.000$ & $0.942 \pm 0.001$ & $0.942 \pm 0.001$ & $0.500 \pm 0.000$ & $0.993 \pm 0.003$ & $0.942 \pm 0.001$ & $0.914 \pm 0.016$ \\
Pattern 7 & $0.500 \pm 0.000$ & $0.982 \pm 0.001$ & $0.982 \pm 0.001$ & $0.500 \pm 0.000$ & $0.995 \pm 0.003$ & $0.982 \pm 0.001$ & $0.939 \pm 0.014$ \\
Pattern 8 & $0.500 \pm 0.000$ & $0.973 \pm 0.001$ & $0.973 \pm 0.001$ & $0.500 \pm 0.000$ & $0.990 \pm 0.004$ & $0.973 \pm 0.001$ & $0.888 \pm 0.019$ \\
Pattern 9 & $0.500 \pm 0.000$ & $0.985 \pm 0.001$ & $0.985 \pm 0.001$ & $0.500 \pm 0.000$ & $0.997 \pm 0.001$ & $0.985 \pm 0.001$ & $0.941 \pm 0.013$ \\
Pattern 10 & $0.500 \pm 0.000$ & $0.870 \pm 0.001$ & $0.870 \pm 0.001$ & $0.500 \pm 0.000$ & $0.996 \pm 0.002$ & $0.870 \pm 0.001$ & $0.874 \pm 0.016$ \\
Pattern 11 & $0.500 \pm 0.000$ & $0.939 \pm 0.001$ & $0.939 \pm 0.001$ & $0.500 \pm 0.000$ & $0.997 \pm 0.003$ & $0.939 \pm 0.001$ & $0.888 \pm 0.018$ \\
Pattern 12 & $0.980 \pm 0.001$ & $0.980 \pm 0.001$ & $0.980 \pm 0.001$ & $1.000 \pm 0.000$ & $1.000 \pm 0.000$ & $0.980 \pm 0.001$ & $0.824 \pm 0.024$ \\
Pattern 13 & $0.500 \pm 0.000$ & $0.985 \pm 0.001$ & $0.985 \pm 0.001$ & $0.500 \pm 0.000$ & $1.000 \pm 0.001$ & $0.985 \pm 0.001$ & $0.905 \pm 0.017$ \\
Pattern 14 & $0.500 \pm 0.000$ & $0.980 \pm 0.001$ & $0.980 \pm 0.001$ & $0.500 \pm 0.000$ & $1.000 \pm 0.000$ & $0.980 \pm 0.001$ & $0.943 \pm 0.014$ \\
\midrule
\textbf{Mean} & $0.534$ & $0.971$ & $0.971$ & $0.536$ & $0.987$ & $0.972$ & $0.899$\\
\bottomrule
\end{tabular}

\end{tiny}
\end{table}

%GNN Explainer_auroc: 0.534 ± 0.034
%Input x Gradient_auroc: 0.971 ± 0.009
%Integrated Gradients_auroc: 0.971 ± 0.009
%PG Explainer_auroc: 0.536 ± 0.036
%SHAP Sampling_auroc: 0.987 ± 0.005
%Saliency_auroc: 0.972 ± 0.009
%SubgraphX_auroc: 0.899 ± 0.013

%% file: tables/big_table_aurocs_readout_2.tex
\begin{table}
\setlength{\tabcolsep}{3pt}
\caption{Mean AUROCs of all explainers across different whiteboxes (using readout function from Equation \ref{eqn:readout_2}) when evaluated on positive examples.}
\label{tab:model_eval_auroc_r2}
\begin{tiny}
\centering
\begin{tabular}{cccccccc}
 & \multicolumn{7}{c}{\textbf{mean AUROC \(\uparrow\)}} \\
\toprule
\textbf{No.}& \textbf{GNN Expl}. & \textbf{Input x Grad.} & \textbf{IG} & \textbf{PG Expl.} & \textbf{SHAP Sampl.} & \textbf{Saliency }& \textbf{SubgraphX} \\
\toprule
1 & $0.500 \pm 0.000$ & $0.500 \pm 0.000$ & $1.000 \pm 0.000$ & $0.500 \pm 0.000$ & $0.992 \pm 0.002$ & $0.500 \pm 0.000$ & $0.929 \pm 0.011$ \\
2 & $0.500 \pm 0.000$ & $0.500 \pm 0.000$ & $1.000 \pm 0.000$ & $0.500 \pm 0.000$ & $0.980 \pm 0.003$ & $0.500 \pm 0.000$ & $0.957 \pm 0.012$ \\
3 & $0.500 \pm 0.000$ & $0.500 \pm 0.000$ & $1.000 \pm 0.000$ & $0.500 \pm 0.000$ & $0.964 \pm 0.004$ & $0.500 \pm 0.000$ & $0.895 \pm 0.027$ \\
4 & $0.500 \pm 0.000$ & $0.500 \pm 0.000$ & $0.992 \pm 0.001$ & $0.500 \pm 0.000$ & $0.975 \pm 0.004$ & $0.500 \pm 0.000$ & $0.793 \pm 0.019$ \\
 5 & $0.500 \pm 0.000$ & $0.500 \pm 0.000$ & $0.995 \pm 0.003$ & $0.500 \pm 0.000$ & $0.976 \pm 0.007$ & $0.500 \pm 0.000$ & $0.854 \pm 0.031$ \\
 6 & $0.500 \pm 0.000$ & $0.500 \pm 0.000$ & $1.000 \pm 0.000$ & $0.500 \pm 0.000$ & $0.982 \pm 0.005$ & $0.500 \pm 0.000$ & $0.898 \pm 0.016$ \\
 7 & $0.500 \pm 0.000$ & $0.500 \pm 0.000$ & $1.000 \pm 0.000$ & $0.500 \pm 0.000$ & $0.997 \pm 0.002$ & $0.500 \pm 0.000$ & $0.938 \pm 0.014$ \\
 8 & $0.500 \pm 0.000$ & $0.500 \pm 0.000$ & $1.000 \pm 0.000$ & $0.500 \pm 0.000$ & $0.988 \pm 0.004$ & $0.500 \pm 0.000$ & $0.889 \pm 0.019$ \\
 9 & $0.500 \pm 0.000$ & $0.500 \pm 0.000$ & $1.000 \pm 0.000$ & $0.500 \pm 0.000$ & $0.999 \pm 0.001$ & $0.500 \pm 0.000$ & $0.935 \pm 0.013$ \\
 10 & $0.500 \pm 0.000$ & $0.500 \pm 0.000$ & $1.000 \pm 0.000$ & $0.500 \pm 0.000$ & $0.997 \pm 0.002$ & $0.500 \pm 0.000$ & $0.874 \pm 0.016$ \\
 11 & $0.500 \pm 0.000$ & $0.500 \pm 0.000$ & $1.000 \pm 0.000$ & $0.500 \pm 0.000$ & $0.997 \pm 0.002$ & $0.500 \pm 0.000$ & $0.889 \pm 0.018$ \\
 12 & $1.000 \pm 0.000$ & $0.500 \pm 0.000$ & $1.000 \pm 0.000$ & $1.000 \pm 0.000$ & $1.000 \pm 0.000$ & $0.500 \pm 0.000$ & $0.824 \pm 0.024$ \\
 13 & $0.500 \pm 0.000$ & $0.500 \pm 0.000$ & $1.000 \pm 0.000$ & $0.500 \pm 0.000$ & $1.000 \pm 0.000$ & $0.500 \pm 0.000$ & $0.903 \pm 0.018$ \\
 14 & $0.500 \pm 0.000$ & $0.500 \pm 0.000$ & $1.000 \pm 0.000$ & $0.500 \pm 0.000$ & $1.000 \pm 0.000$ & $0.500 \pm 0.000$ & $0.942 \pm 0.015$ \\
\midrule
\textbf{Mean} & $0.536$ & $0.500$ & $0.999$ & $0.536$ & $0.989$ & $0.500$ & $0.894$\\
\bottomrule
\end{tabular}

\end{tiny}
\end{table}

%PG Explainer_auroc: 0.536 ± 0.036
%Integrated Gradients_auroc: 0.999 ± 0.001
%Saliency_auroc: 0.500 ± 0.000
%SHAP Sampling_auroc: 0.989 ± 0.003
%Input x Gradient_auroc: 0.500 ± 0.000
%GNN Explainer_auroc: 0.536 ± 0.036
%SubgraphX_auroc: 0.894 ± 0.012

%% file: tables/big_table_aps_readout_1.tex
\begin{table}
\setlength{\tabcolsep}{2.1pt}
\caption{Mean APs of all explainers across different whiteboxes (using readout function from Equation \ref{eqn:readout_1}) when evaluated on positive examples.}
\label{tab:model_eval_ap_r1}
\begin{tiny}
\centering
\begin{tabular}{ccccccccc}
 & \multicolumn{8}{c}{\textbf{mean AP \(\uparrow\)}} \\
\toprule
\textbf{No.}& \textbf{Baseline} & \textbf{GNN Expl}. & \textbf{Input x Grad.} & \textbf{IG} & \textbf{PG Expl.} & \textbf{SHAP Sampl.} & \textbf{Saliency }& \textbf{SubgraphX} \\
\toprule
 1 & $0.406 \pm 0.001$ & $0.406 \pm 0.001$ & $0.995 \pm 0.000$ & $0.995 \pm 0.000$ & $0.406 \pm 0.001$ & $0.987 \pm 0.003$ & $0.995 \pm 0.000$ & $0.886 \pm 0.018$ \\
 2 & $0.581 \pm 0.012$ & $0.581 \pm 0.012$ & $0.998 \pm 0.001$ & $0.998 \pm 0.001$ & $0.581 \pm 0.012$ & $0.979 \pm 0.003$ & $0.998 \pm 0.001$ & $0.935 \pm 0.018$ \\
 3 & $0.678 \pm 0.020$ & $0.678 \pm 0.020$ & $1.000 \pm 0.000$ & $1.000 \pm 0.000$ & $0.678 \pm 0.020$ & $0.980 \pm 0.003$ & $1.000 \pm 0.000$ & $0.929 \pm 0.022$ \\
 4 & $0.462 \pm 0.005$ & $0.462 \pm 0.005$ & $0.980 \pm 0.002$ & $0.980 \pm 0.002$ & $0.462 \pm 0.005$ & $0.978 \pm 0.004$ & $0.984 \pm 0.002$ & $0.687 \pm 0.031$ \\
 5 & $0.587 \pm 0.041$ & $0.587 \pm 0.041$ & $0.995 \pm 0.003$ & $0.995 \pm 0.002$ & $0.587 \pm 0.041$ & $0.972 \pm 0.008$ & $0.995 \pm 0.002$ & $0.786 \pm 0.048$ \\
 6 & $0.294 \pm 0.001$ & $0.294 \pm 0.001$ & $0.928 \pm 0.001$ & $0.928 \pm 0.001$ & $0.294 \pm 0.001$ & $0.991 \pm 0.004$ & $0.928 \pm 0.001$ & $0.857 \pm 0.026$ \\
 7 & $0.297 \pm 0.001$ & $0.297 \pm 0.001$ & $0.975 \pm 0.001$ & $0.975 \pm 0.001$ & $0.297 \pm 0.001$ & $0.993 \pm 0.004$ & $0.975 \pm 0.001$ & $0.873 \pm 0.028$ \\
 8 & $0.270 \pm 0.001$ & $0.270 \pm 0.001$ & $0.959 \pm 0.001$ & $0.959 \pm 0.001$ & $0.270 \pm 0.001$ & $0.983 \pm 0.006$ & $0.959 \pm 0.001$ & $0.765 \pm 0.035$ \\
 9 & $0.296 \pm 0.001$ & $0.296 \pm 0.001$ & $0.977 \pm 0.001$ & $0.977 \pm 0.001$ & $0.296 \pm 0.001$ & $0.997 \pm 0.001$ & $0.977 \pm 0.001$ & $0.872 \pm 0.024$ \\
 10 & $0.266 \pm 0.001$ & $0.266 \pm 0.001$ & $0.837 \pm 0.002$ & $0.837 \pm 0.002$ & $0.266 \pm 0.001$ & $0.994 \pm 0.003$ & $0.837 \pm 0.002$ & $0.814 \pm 0.025$ \\
 11 & $0.210 \pm 0.001$ & $0.210 \pm 0.001$ & $0.915 \pm 0.002$ & $0.915 \pm 0.002$ & $0.210 \pm 0.001$ & $0.996 \pm 0.003$ & $0.915 \pm 0.002$ & $0.780 \pm 0.032$ \\
 12 & $0.145 \pm 0.001$ & $0.967 \pm 0.001$ & $0.967 \pm 0.001$ & $0.967 \pm 0.001$ & $1.000 \pm 0.000$ & $1.000 \pm 0.000$ & $0.967 \pm 0.001$ & $0.657 \pm 0.042$ \\
 13 & $0.136 \pm 0.001$ & $0.136 \pm 0.001$ & $0.979 \pm 0.001$ & $0.979 \pm 0.001$ & $0.136 \pm 0.001$ & $0.999 \pm 0.001$ & $0.979 \pm 0.001$ & $0.720 \pm 0.034$ \\
 14 & $0.173 \pm 0.001$ & $0.173 \pm 0.001$ & $0.969 \pm 0.001$ & $0.969 \pm 0.001$ & $0.173 \pm 0.001$ & $1.000 \pm 0.000$ & $0.969 \pm 0.001$ & $0.868 \pm 0.030$ \\
\midrule
\textbf{Mean} & $0.343$ & $0.402$ & $0.962$ & $0.962$ & $0.404$ & $0.989$ & $0.963$ & $0.816$ \\
\bottomrule
\end{tabular}

\end{tiny}
\end{table}

%% file: tables/big_table_aps_readout_2.tex
\begin{table}
\setlength{\tabcolsep}{2.1pt}
\caption{Mean APs of all explainers across different whiteboxes (using readout function from Equation \ref{eqn:readout_2}).}
\label{tab:model_eval_ap_r2}
\begin{tiny}
\centering
\begin{tabular}{ccccccccc}
 & \multicolumn{8}{c}{\textbf{mean AP \(\uparrow\)}} \\
\toprule
\textbf{No.}& \textbf{Baseline} & \textbf{GNN Expl}. & \textbf{Input x Grad.} & \textbf{IG} & \textbf{PG Expl.} & \textbf{SHAP Sampl.} & \textbf{Saliency }& \textbf{SubgraphX} \\
\toprule
 1 & $0.406 \pm 0.001$ & $0.406 \pm 0.001$ & $0.406 \pm 0.001$ & $1.000 \pm 0.000$ & $0.406 \pm 0.001$ & $0.991 \pm 0.002$ & $0.406 \pm 0.001$ & $0.872 \pm 0.018$ \\
 2 & $0.581 \pm 0.012$ & $0.581 \pm 0.012$ & $0.581 \pm 0.012$ & $1.000 \pm 0.000$ & $0.581 \pm 0.012$ & $0.983 \pm 0.003$ & $0.581 \pm 0.012$ & $0.933 \pm 0.018$ \\
 3 & $0.678 \pm 0.020$ & $0.678 \pm 0.020$ & $0.678 \pm 0.020$ & $1.000 \pm 0.000$ & $0.678 \pm 0.020$ & $0.978 \pm 0.003$ & $0.678 \pm 0.020$ & $0.911 \pm 0.022$ \\
 4 & $0.462 \pm 0.005$ & $0.462 \pm 0.005$ & $0.462 \pm 0.005$ & $0.991 \pm 0.001$ & $0.462 \pm 0.005$ & $0.967 \pm 0.006$ & $0.462 \pm 0.005$ & $0.682 \pm 0.031$ \\
 5 & $0.587 \pm 0.041$ & $0.587 \pm 0.041$ & $0.587 \pm 0.041$ & $1.000 \pm 0.000$ & $0.587 \pm 0.041$ & $0.984 \pm 0.005$ & $0.587 \pm 0.041$ & $0.783 \pm 0.047$ \\
 6 & $0.294 \pm 0.001$ & $0.294 \pm 0.001$ & $0.294 \pm 0.001$ & $1.000 \pm 0.000$ & $0.294 \pm 0.001$ & $0.976 \pm 0.006$ & $0.294 \pm 0.001$ & $0.846 \pm 0.025$ \\
 7 & $0.297 \pm 0.001$ & $0.297 \pm 0.001$ & $0.297 \pm 0.001$ & $1.000 \pm 0.000$ & $0.297 \pm 0.001$ & $0.996 \pm 0.003$ & $0.297 \pm 0.001$ & $0.870 \pm 0.028$ \\
 8 & $0.270 \pm 0.001$ & $0.270 \pm 0.001$ & $0.270 \pm 0.001$ & $1.000 \pm 0.000$ & $0.270 \pm 0.001$ & $0.981 \pm 0.006$ & $0.270 \pm 0.001$ & $0.764 \pm 0.035$ \\
 9 & $0.296 \pm 0.001$ & $0.296 \pm 0.001$ & $0.296 \pm 0.001$ & $1.000 \pm 0.000$ & $0.296 \pm 0.001$ & $0.999 \pm 0.001$ & $0.296 \pm 0.001$ & $0.869 \pm 0.025$ \\
 10 & $0.266 \pm 0.001$ & $0.266 \pm 0.001$ & $0.266 \pm 0.001$ & $1.000 \pm 0.000$ & $0.266 \pm 0.001$ & $0.996 \pm 0.002$ & $0.266 \pm 0.001$ & $0.814 \pm 0.025$ \\
 11 & $0.210 \pm 0.001$ & $0.210 \pm 0.001$ & $0.210 \pm 0.001$ & $1.000 \pm 0.000$ & $0.210 \pm 0.001$ & $0.996 \pm 0.003$ & $0.210 \pm 0.001$ & $0.784 \pm 0.032$ \\
 12 & $0.145 \pm 0.001$ & $1.000 \pm 0.000$ & $0.145 \pm 0.001$ & $1.000 \pm 0.000$ & $1.000 \pm 0.000$ & $1.000 \pm 0.000$ & $0.145 \pm 0.001$ & $0.658 \pm 0.042$ \\
 13 & $0.136 \pm 0.001$ & $0.136 \pm 0.001$ & $0.136 \pm 0.001$ & $1.000 \pm 0.000$ & $0.136 \pm 0.001$ & $1.000 \pm 0.000$ & $0.136 \pm 0.001$ & $0.717 \pm 0.035$ \\
 14 & $0.173 \pm 0.001$ & $0.173 \pm 0.001$ & $0.173 \pm 0.001$ & $1.000 \pm 0.000$ & $0.173 \pm 0.001$ & $1.000 \pm 0.000$ & $0.173 \pm 0.001$ & $0.867 \pm 0.030$ \\
\midrule
\textbf{Mean} & $0.343$ & $0.404$ & $0.343$ & $0.999$ & $0.404$ & $0.989$ & $0.343$ & $0.812$ \\
\bottomrule
\end{tabular}

\end{tiny}
\end{table}

%GNN Explainer_ap: 0.404 ± 0.063
%GNN Explainer_ap_baseline: 0.343 ± 0.046

%Input x Gradient_ap: 0.343 ± 0.046
%Input x Gradient_ap_baseline: 0.343 ± 0.046

%Integrated Gradients_ap: 0.999 ± 0.001
%Integrated Gradients_ap_baseline: 0.343 ± 0.046

%PG Explainer_ap: 0.404 ± 0.063
%PG Explainer_ap_baseline: 0.343 ± 0.046

%SHAP Sampling_ap: 0.989 ± 0.003
%SHAP Sampling_ap_baseline: 0.336 ± 0.046

%Saliency_ap: 0.343 ± 0.046
%Saliency_ap_baseline: 0.343 ± 0.046

%SubgraphX_ap: 0.812 ± 0.022
%SubgraphX_ap_baseline: 0.336 ± 0.046

%% file: tables/big_table_aurocs_pos_combined.tex
\begin{table}[htpb]
\setlength{\tabcolsep}{4.5pt}
\caption{Mean AUROCs of explainers across different whiteboxes evaluated on positive examples. Sub-columns \(r_1\) and \(r_2\) denote the readout functions utilized by the whitebox models (Equation \ref{eqn:readout_1} and Equation \ref{eqn:readout_2}, respectively).}
\label{tab:model_eval_auroc_continuous_custom_palette}
\centering
\begin{tiny}
% User-requested landmark colors
\definecolor{user_c1}{HTML}{E57373} % Red (Target: 0.50)
\definecolor{user_c2}{HTML}{FFCDD2} % Pale Red (Target: ~0.65)
\definecolor{user_c3}{HTML}{C5E1A5} % Light Green (Target: ~0.85)
\definecolor{user_c4}{HTML}{81C784} % Green (Target: 1.00)

\begin{tabular}{c cccccccccccccc}
\toprule
\multirow{2}{*}{\textbf{No.}} & \multicolumn{2}{c}{\textbf{GNN Expl.}} & \multicolumn{2}{c}{\textbf{Input $\times$ Grad.}} & \multicolumn{2}{c}{\textbf{IG}} & \multicolumn{2}{c}{\textbf{PG Expl.}} & \multicolumn{2}{c}{\textbf{SHAP Sampl.}} & \multicolumn{2}{c}{\textbf{Saliency}} & \multicolumn{2}{c}{\textbf{SubgraphX}} \\
\cmidrule(lr){2-3} \cmidrule(lr){4-5} \cmidrule(lr){6-7} \cmidrule(lr){8-9} \cmidrule(lr){10-11} \cmidrule(lr){12-13} \cmidrule(lr){14-15}
& \(r_1\) & \(r_2\) & \(r_1\) & \(r_2\) & \(r_1\) & \(r_2\) & \(r_1\) & \(r_2\) & \(r_1\) & \(r_2\) & \(r_1\) & \(r_2\) & \(r_1\) & \(r_2\) \\
\midrule
1  & \cellcolor{user_c1} 0.500 & \cellcolor{user_c1} 0.500 & \cellcolor{user_c4!96!user_c3} 0.994 & \cellcolor{user_c1} 0.500 & \cellcolor{user_c4!96!user_c3} 0.994 & \cellcolor{user_c4} 1.000 & \cellcolor{user_c1} 0.500 & \cellcolor{user_c1} 0.500 & \cellcolor{user_c4!93!user_c3} 0.989 & \cellcolor{user_c4!95!user_c3} 0.992 & \cellcolor{user_c4!96!user_c3} 0.994 & \cellcolor{user_c1} 0.500 & \cellcolor{user_c4!57!user_c3} 0.935 & \cellcolor{user_c4!53!user_c3} 0.929 \\
2  & \cellcolor{user_c1} 0.500 & \cellcolor{user_c1} 0.500 & \cellcolor{user_c4!98!user_c3} 0.997 & \cellcolor{user_c1} 0.500 & \cellcolor{user_c4!98!user_c3} 0.997 & \cellcolor{user_c4} 1.000 & \cellcolor{user_c1} 0.500 & \cellcolor{user_c1} 0.500 & \cellcolor{user_c4!83!user_c3} 0.975 & \cellcolor{user_c4!87!user_c3} 0.980 & \cellcolor{user_c4!98!user_c3} 0.997 & \cellcolor{user_c1} 0.500 & \cellcolor{user_c4!73!user_c3} 0.960 & \cellcolor{user_c4!71!user_c3} 0.957 \\
3  & \cellcolor{user_c1} 0.500 & \cellcolor{user_c1} 0.500 & \cellcolor{user_c4} 1.000 & \cellcolor{user_c1} 0.500 & \cellcolor{user_c4} 1.000 & \cellcolor{user_c4} 1.000 & \cellcolor{user_c1} 0.500 & \cellcolor{user_c1} 0.500 & \cellcolor{user_c4!77!user_c3} 0.966 & \cellcolor{user_c4!76!user_c3} 0.964 & \cellcolor{user_c4} 1.000 & \cellcolor{user_c1} 0.500 & \cellcolor{user_c4!44!user_c3} 0.916 & \cellcolor{user_c4!30!user_c3} 0.895 \\
4  & \cellcolor{user_c1} 0.500 & \cellcolor{user_c1} 0.500 & \cellcolor{user_c4!88!user_c3} 0.982 & \cellcolor{user_c1} 0.500 & \cellcolor{user_c4!88!user_c3} 0.982 & \cellcolor{user_c4!95!user_c3} 0.992 & \cellcolor{user_c1} 0.500 & \cellcolor{user_c1} 0.500 & \cellcolor{user_c4!87!user_c3} 0.980 & \cellcolor{user_c4!83!user_c3} 0.975 & \cellcolor{user_c4!91!user_c3} 0.986 & \cellcolor{user_c1} 0.500 & \cellcolor{user_c3!74!user_c2} 0.798 & \cellcolor{user_c3!72!user_c2} 0.793 \\
5  & \cellcolor{user_c1} 0.500 & \cellcolor{user_c1} 0.500 & \cellcolor{user_c4!91!user_c3} 0.987 & \cellcolor{user_c1} 0.500 & \cellcolor{user_c4!93!user_c3} 0.989 & \cellcolor{user_c4!97!user_c3} 0.995 & \cellcolor{user_c1} 0.500 & \cellcolor{user_c1} 0.500 & \cellcolor{user_c4!59!user_c3} 0.939 & \cellcolor{user_c4!84!user_c3} 0.976 & \cellcolor{user_c4!94!user_c3} 0.991 & \cellcolor{user_c1} 0.500 & \cellcolor{user_c4!5!user_c3} 0.857 & \cellcolor{user_c3!100!user_c2} 0.854 \\
6  & \cellcolor{user_c1} 0.500 & \cellcolor{user_c1} 0.500 & \cellcolor{user_c4!61!user_c3} 0.942 & \cellcolor{user_c1} 0.500 & \cellcolor{user_c4!61!user_c3} 0.942 & \cellcolor{user_c4} 1.000 & \cellcolor{user_c1} 0.500 & \cellcolor{user_c1} 0.500 & \cellcolor{user_c4!95!user_c3} 0.993 & \cellcolor{user_c4!88!user_c3} 0.982 & \cellcolor{user_c4!61!user_c3} 0.942 & \cellcolor{user_c1} 0.500 & \cellcolor{user_c4!43!user_c3} 0.914 & \cellcolor{user_c4!32!user_c3} 0.898 \\
7  & \cellcolor{user_c1} 0.500 & \cellcolor{user_c1} 0.500 & \cellcolor{user_c4!88!user_c3} 0.982 & \cellcolor{user_c1} 0.500 & \cellcolor{user_c4!88!user_c3} 0.982 & \cellcolor{user_c4} 1.000 & \cellcolor{user_c1} 0.500 & \cellcolor{user_c1} 0.500 & \cellcolor{user_c4!97!user_c3} 0.995 & \cellcolor{user_c4!98!user_c3} 0.997 & \cellcolor{user_c4!88!user_c3} 0.982 & \cellcolor{user_c1} 0.500 & \cellcolor{user_c4!59!user_c3} 0.939 & \cellcolor{user_c4!59!user_c3} 0.938 \\
8  & \cellcolor{user_c1} 0.500 & \cellcolor{user_c1} 0.500 & \cellcolor{user_c4!82!user_c3} 0.973 & \cellcolor{user_c1} 0.500 & \cellcolor{user_c4!82!user_c3} 0.973 & \cellcolor{user_c4} 1.000 & \cellcolor{user_c1} 0.500 & \cellcolor{user_c1} 0.500 & \cellcolor{user_c4!93!user_c3} 0.990 & \cellcolor{user_c4!92!user_c3} 0.988 & \cellcolor{user_c4!82!user_c3} 0.973 & \cellcolor{user_c1} 0.500 & \cellcolor{user_c4!25!user_c3} 0.888 & \cellcolor{user_c4!26!user_c3} 0.889 \\
9  & \cellcolor{user_c1} 0.500 & \cellcolor{user_c1} 0.500 & \cellcolor{user_c4!90!user_c3} 0.985 & \cellcolor{user_c1} 0.500 & \cellcolor{user_c4!90!user_c3} 0.985 & \cellcolor{user_c4} 1.000 & \cellcolor{user_c1} 0.500 & \cellcolor{user_c1} 0.500 & \cellcolor{user_c4!98!user_c3} 0.997 & \cellcolor{user_c4!99!user_c3} 0.999 & \cellcolor{user_c4!90!user_c3} 0.985 & \cellcolor{user_c1} 0.500 & \cellcolor{user_c4!61!user_c3} 0.941 & \cellcolor{user_c4!57!user_c3} 0.935 \\
10 & \cellcolor{user_c1} 0.500 & \cellcolor{user_c1} 0.500 & \cellcolor{user_c4!13!user_c3} 0.870 & \cellcolor{user_c1} 0.500 & \cellcolor{user_c4!13!user_c3} 0.870 & \cellcolor{user_c4} 1.000 & \cellcolor{user_c1} 0.500 & \cellcolor{user_c1} 0.500 & \cellcolor{user_c4!97!user_c3} 0.996 & \cellcolor{user_c4!98!user_c3} 0.997 & \cellcolor{user_c4!13!user_c3} 0.870 & \cellcolor{user_c1} 0.500 & \cellcolor{user_c4!16!user_c3} 0.874 & \cellcolor{user_c4!16!user_c3} 0.874 \\
11 & \cellcolor{user_c1} 0.500 & \cellcolor{user_c1} 0.500 & \cellcolor{user_c4!59!user_c3} 0.939 & \cellcolor{user_c1} 0.500 & \cellcolor{user_c4!59!user_c3} 0.939 & \cellcolor{user_c4} 1.000 & \cellcolor{user_c1} 0.500 & \cellcolor{user_c1} 0.500 & \cellcolor{user_c4!98!user_c3} 0.997 & \cellcolor{user_c4!98!user_c3} 0.997 & \cellcolor{user_c4!59!user_c3} 0.939 & \cellcolor{user_c1} 0.500 & \cellcolor{user_c4!25!user_c3} 0.888 & \cellcolor{user_c4!26!user_c3} 0.889 \\
12 & \cellcolor{user_c4!87!user_c3} 0.980 & \cellcolor{user_c4} 1.000 & \cellcolor{user_c4!87!user_c3} 0.980 & \cellcolor{user_c1} 0.500 & \cellcolor{user_c4!87!user_c3} 0.980 & \cellcolor{user_c4} 1.000 & \cellcolor{user_c4} 1.000 & \cellcolor{user_c4} 1.000 & \cellcolor{user_c4} 1.000 & \cellcolor{user_c4} 1.000 & \cellcolor{user_c4!87!user_c3} 0.980 & \cellcolor{user_c1} 0.500 & \cellcolor{user_c3!87!user_c2} 0.824 & \cellcolor{user_c3!87!user_c2} 0.824 \\
13 & \cellcolor{user_c1} 0.500 & \cellcolor{user_c1} 0.500 & \cellcolor{user_c4!90!user_c3} 0.985 & \cellcolor{user_c1} 0.500 & \cellcolor{user_c4!90!user_c3} 0.985 & \cellcolor{user_c4} 1.000 & \cellcolor{user_c1} 0.500 & \cellcolor{user_c1} 0.500 & \cellcolor{user_c4} 1.000 & \cellcolor{user_c4} 1.000 & \cellcolor{user_c4!90!user_c3} 0.985 & \cellcolor{user_c1} 0.500 & \cellcolor{user_c4!37!user_c3} 0.905 & \cellcolor{user_c4!35!user_c3} 0.903 \\
14 & \cellcolor{user_c1} 0.500 & \cellcolor{user_c1} 0.500 & \cellcolor{user_c4!87!user_c3} 0.980 & \cellcolor{user_c1} 0.500 & \cellcolor{user_c4!87!user_c3} 0.980 & \cellcolor{user_c4} 1.000 & \cellcolor{user_c1} 0.500 & \cellcolor{user_c1} 0.500 & \cellcolor{user_c4} 1.000 & \cellcolor{user_c4} 1.000 & \cellcolor{user_c4!87!user_c3} 0.980 & \cellcolor{user_c1} 0.500 & \cellcolor{user_c4!62!user_c3} 0.943 & \cellcolor{user_c4!61!user_c3} 0.942 \\
\midrule
\textbf{Mean} & \cellcolor{user_c2!23!user_c1} 0.534 & \cellcolor{user_c2!24!user_c1} 0.536 & \cellcolor{user_c4!81!user_c3} 0.971 & \cellcolor{user_c1} 0.500 & \cellcolor{user_c4!81!user_c3} 0.971 & \cellcolor{user_c4!99!user_c3} 0.999 & \cellcolor{user_c2!24!user_c1} 0.536 & \cellcolor{user_c2!24!user_c1} 0.536 & \cellcolor{user_c4!91!user_c3} 0.987 & \cellcolor{user_c4!93!user_c3} 0.989 & \cellcolor{user_c4!81!user_c3} 0.972 & \cellcolor{user_c1} 0.500 & \cellcolor{user_c4!33!user_c3} 0.899 & \cellcolor{user_c4!29!user_c3} 0.894\\
\bottomrule
\end{tabular}
\end{tiny}
\end{table}

%% file: tables/big_table_asp_pos_combined.tex
\begin{table}[htpb]
\setlength{\tabcolsep}{4pt}
\caption{Mean Average Precision (AP) of explainers across different whiteboxes evaluated on positive examples. Sub-columns \(r_1\) and \(r_2\) denote the readout functions utilized by the whitebox models (Equation \ref{eqn:readout_1} and Equation \ref{eqn:readout_2}, respectively). Standard errors are omitted for brevity. Cell colors represent a piecewise continuous gradient based on landmark colors: \textbf{Red} (\(\le 0.50\)), \textbf{Pale Red} (\(\approx 0.65\)), \textbf{Light Green} (\(\approx 0.85\)), and \textbf{Green} (\(1.00\)).}
\label{tab:model_eval_ap_continuous_custom_palette}
\centering
\begin{tiny}
% User-requested landmark colors
\definecolor{user_c1}{HTML}{E57373} % Red (Target: <= 0.50)
\definecolor{user_c2}{HTML}{FFCDD2} % Pale Red (Target: ~0.65)
\definecolor{user_c3}{HTML}{C5E1A5} % Light Green (Target: ~0.85)
\definecolor{user_c4}{HTML}{81C784} % Green (Target: 1.00)

\begin{tabular}{c c cccccccccccccc}
\toprule
\multirow{2}{*}{\textbf{No.}} & \multirow{2}{*}{\textbf{Baseline}} & \multicolumn{2}{c}{\textbf{GNN Expl.}} & \multicolumn{2}{c}{\textbf{Input $\times$ Grad.}} & \multicolumn{2}{c}{\textbf{IG}} & \multicolumn{2}{c}{\textbf{PG Expl.}} & \multicolumn{2}{c}{\textbf{SHAP Sampl.}} & \multicolumn{2}{c}{\textbf{Saliency}} & \multicolumn{2}{c}{\textbf{SubgraphX}} \\
\cmidrule(lr){3-4} \cmidrule(lr){5-6} \cmidrule(lr){7-8} \cmidrule(lr){9-10} \cmidrule(lr){11-12} \cmidrule(lr){13-14} \cmidrule(lr){15-16}
& & \(r_1\) & \(r_2\) & \(r_1\) & \(r_2\) & \(r_1\) & \(r_2\) & \(r_1\) & \(r_2\) & \(r_1\) & \(r_2\) & \(r_1\) & \(r_2\) & \(r_1\) & \(r_2\) \\
\midrule
1  & \cellcolor{user_c1} 0.406 & \cellcolor{user_c1} 0.406 & \cellcolor{user_c1} 0.406 & \cellcolor{user_c4!97!user_c3} 0.995 & \cellcolor{user_c1} 0.406 & \cellcolor{user_c4!97!user_c3} 0.995 & \cellcolor{user_c4} 1.000 & \cellcolor{user_c1} 0.406 & \cellcolor{user_c1} 0.406 & \cellcolor{user_c4!91!user_c3} 0.987 & \cellcolor{user_c4!94!user_c3} 0.991 & \cellcolor{user_c4!97!user_c3} 0.995 & \cellcolor{user_c1} 0.406 & \cellcolor{user_c4!24!user_c3} 0.886 & \cellcolor{user_c4!15!user_c3} 0.872 \\
2  & \cellcolor{user_c2!54!user_c1} 0.581 & \cellcolor{user_c2!54!user_c1} 0.581 & \cellcolor{user_c2!54!user_c1} 0.581 & \cellcolor{user_c4!99!user_c3} 0.998 & \cellcolor{user_c2!54!user_c1} 0.581 & \cellcolor{user_c4!99!user_c3} 0.998 & \cellcolor{user_c4} 1.000 & \cellcolor{user_c2!54!user_c1} 0.581 & \cellcolor{user_c2!54!user_c1} 0.581 & \cellcolor{user_c4!86!user_c3} 0.979 & \cellcolor{user_c4!89!user_c3} 0.983 & \cellcolor{user_c4!99!user_c3} 0.998 & \cellcolor{user_c2!54!user_c1} 0.581 & \cellcolor{user_c4!57!user_c3} 0.935 & \cellcolor{user_c4!55!user_c3} 0.933 \\
3  & \cellcolor{user_c3!14!user_c2} 0.678 & \cellcolor{user_c3!14!user_c2} 0.678 & \cellcolor{user_c3!14!user_c2} 0.678 & \cellcolor{user_c4} 1.000 & \cellcolor{user_c3!14!user_c2} 0.678 & \cellcolor{user_c4} 1.000 & \cellcolor{user_c4} 1.000 & \cellcolor{user_c3!14!user_c2} 0.678 & \cellcolor{user_c3!14!user_c2} 0.678 & \cellcolor{user_c4!87!user_c3} 0.980 & \cellcolor{user_c4!85!user_c3} 0.978 & \cellcolor{user_c4} 1.000 & \cellcolor{user_c3!14!user_c2} 0.678 & \cellcolor{user_c4!53!user_c3} 0.929 & \cellcolor{user_c4!41!user_c3} 0.911 \\
4  & \cellcolor{user_c1} 0.462 & \cellcolor{user_c1} 0.462 & \cellcolor{user_c1} 0.462 & \cellcolor{user_c4!87!user_c3} 0.980 & \cellcolor{user_c1} 0.462 & \cellcolor{user_c4!87!user_c3} 0.980 & \cellcolor{user_c4!94!user_c3} 0.991 & \cellcolor{user_c1} 0.462 & \cellcolor{user_c1} 0.462 & \cellcolor{user_c4!85!user_c3} 0.978 & \cellcolor{user_c4!78!user_c3} 0.967 & \cellcolor{user_c4!89!user_c3} 0.984 & \cellcolor{user_c1} 0.462 & \cellcolor{user_c3!19!user_c2} 0.687 & \cellcolor{user_c3!16!user_c2} 0.682 \\
5  & \cellcolor{user_c2!58!user_c1} 0.587 & \cellcolor{user_c2!58!user_c1} 0.587 & \cellcolor{user_c2!58!user_c1} 0.587 & \cellcolor{user_c4!97!user_c3} 0.995 & \cellcolor{user_c2!58!user_c1} 0.587 & \cellcolor{user_c4!97!user_c3} 0.995 & \cellcolor{user_c4} 1.000 & \cellcolor{user_c2!58!user_c1} 0.587 & \cellcolor{user_c2!58!user_c1} 0.587 & \cellcolor{user_c4!81!user_c3} 0.972 & \cellcolor{user_c4!89!user_c3} 0.984 & \cellcolor{user_c4!97!user_c3} 0.995 & \cellcolor{user_c2!58!user_c1} 0.587 & \cellcolor{user_c3!68!user_c2} 0.786 & \cellcolor{user_c3!67!user_c2} 0.783 \\
6  & \cellcolor{user_c1} 0.294 & \cellcolor{user_c1} 0.294 & \cellcolor{user_c1} 0.294 & \cellcolor{user_c4!52!user_c3} 0.928 & \cellcolor{user_c1} 0.294 & \cellcolor{user_c4!52!user_c3} 0.928 & \cellcolor{user_c4} 1.000 & \cellcolor{user_c1} 0.294 & \cellcolor{user_c1} 0.294 & \cellcolor{user_c4!94!user_c3} 0.991 & \cellcolor{user_c4!84!user_c3} 0.976 & \cellcolor{user_c4!52!user_c3} 0.928 & \cellcolor{user_c1} 0.294 & \cellcolor{user_c4!5!user_c3} 0.857 & \cellcolor{user_c3!98!user_c2} 0.846 \\
7  & \cellcolor{user_c1} 0.297 & \cellcolor{user_c1} 0.297 & \cellcolor{user_c1} 0.297 & \cellcolor{user_c4!83!user_c3} 0.975 & \cellcolor{user_c1} 0.297 & \cellcolor{user_c4!83!user_c3} 0.975 & \cellcolor{user_c4} 1.000 & \cellcolor{user_c1} 0.297 & \cellcolor{user_c1} 0.297 & \cellcolor{user_c4!95!user_c3} 0.993 & \cellcolor{user_c4!97!user_c3} 0.996 & \cellcolor{user_c4!83!user_c3} 0.975 & \cellcolor{user_c1} 0.297 & \cellcolor{user_c4!15!user_c3} 0.873 & \cellcolor{user_c4!13!user_c3} 0.870 \\
8  & \cellcolor{user_c1} 0.270 & \cellcolor{user_c1} 0.270 & \cellcolor{user_c1} 0.270 & \cellcolor{user_c4!73!user_c3} 0.959 & \cellcolor{user_c1} 0.270 & \cellcolor{user_c4!73!user_c3} 0.959 & \cellcolor{user_c4} 1.000 & \cellcolor{user_c1} 0.270 & \cellcolor{user_c1} 0.270 & \cellcolor{user_c4!89!user_c3} 0.983 & \cellcolor{user_c4!87!user_c3} 0.981 & \cellcolor{user_c4!73!user_c3} 0.959 & \cellcolor{user_c1} 0.270 & \cellcolor{user_c3!58!user_c2} 0.765 & \cellcolor{user_c3!57!user_c2} 0.764 \\
9  & \cellcolor{user_c1} 0.296 & \cellcolor{user_c1} 0.296 & \cellcolor{user_c1} 0.296 & \cellcolor{user_c4!85!user_c3} 0.977 & \cellcolor{user_c1} 0.296 & \cellcolor{user_c4!85!user_c3} 0.977 & \cellcolor{user_c4} 1.000 & \cellcolor{user_c1} 0.296 & \cellcolor{user_c1} 0.296 & \cellcolor{user_c4!98!user_c3} 0.997 & \cellcolor{user_c4!99!user_c3} 0.999 & \cellcolor{user_c4!85!user_c3} 0.977 & \cellcolor{user_c1} 0.296 & \cellcolor{user_c4!15!user_c3} 0.872 & \cellcolor{user_c4!13!user_c3} 0.869 \\
10 & \cellcolor{user_c1} 0.266 & \cellcolor{user_c1} 0.266 & \cellcolor{user_c1} 0.266 & \cellcolor{user_c3!94!user_c2} 0.837 & \cellcolor{user_c1} 0.266 & \cellcolor{user_c3!94!user_c2} 0.837 & \cellcolor{user_c4} 1.000 & \cellcolor{user_c1} 0.266 & \cellcolor{user_c1} 0.266 & \cellcolor{user_c4!96!user_c3} 0.994 & \cellcolor{user_c4!97!user_c3} 0.996 & \cellcolor{user_c3!94!user_c2} 0.837 & \cellcolor{user_c1} 0.266 & \cellcolor{user_c3!82!user_c2} 0.814 & \cellcolor{user_c3!82!user_c2} 0.814 \\
11 & \cellcolor{user_c1} 0.210 & \cellcolor{user_c1} 0.210 & \cellcolor{user_c1} 0.210 & \cellcolor{user_c4!43!user_c3} 0.915 & \cellcolor{user_c1} 0.210 & \cellcolor{user_c4!43!user_c3} 0.915 & \cellcolor{user_c4} 1.000 & \cellcolor{user_c1} 0.210 & \cellcolor{user_c1} 0.210 & \cellcolor{user_c4!97!user_c3} 0.996 & \cellcolor{user_c4!97!user_c3} 0.996 & \cellcolor{user_c4!43!user_c3} 0.915 & \cellcolor{user_c1} 0.210 & \cellcolor{user_c3!65!user_c2} 0.780 & \cellcolor{user_c3!67!user_c2} 0.784 \\
12 & \cellcolor{user_c1} 0.145 & \cellcolor{user_c4!78!user_c3} 0.967 & \cellcolor{user_c4} 1.000 & \cellcolor{user_c4!78!user_c3} 0.967 & \cellcolor{user_c1} 0.145 & \cellcolor{user_c4!78!user_c3} 0.967 & \cellcolor{user_c4} 1.000 & \cellcolor{user_c4} 1.000 & \cellcolor{user_c4} 1.000 & \cellcolor{user_c4} 1.000 & \cellcolor{user_c4} 1.000 & \cellcolor{user_c4!78!user_c3} 0.967 & \cellcolor{user_c1} 0.145 & \cellcolor{user_c3!4!user_c2} 0.657 & \cellcolor{user_c3!4!user_c2} 0.658 \\
13 & \cellcolor{user_c1} 0.136 & \cellcolor{user_c1} 0.136 & \cellcolor{user_c1} 0.136 & \cellcolor{user_c4!86!user_c3} 0.979 & \cellcolor{user_c1} 0.136 & \cellcolor{user_c4!86!user_c3} 0.979 & \cellcolor{user_c4} 1.000 & \cellcolor{user_c1} 0.136 & \cellcolor{user_c1} 0.136 & \cellcolor{user_c4!99!user_c3} 0.999 & \cellcolor{user_c4} 1.000 & \cellcolor{user_c4!86!user_c3} 0.979 & \cellcolor{user_c1} 0.136 & \cellcolor{user_c3!35!user_c2} 0.720 & \cellcolor{user_c3!34!user_c2} 0.717 \\
14 & \cellcolor{user_c1} 0.173 & \cellcolor{user_c1} 0.173 & \cellcolor{user_c1} 0.173 & \cellcolor{user_c4!79!user_c3} 0.969 & \cellcolor{user_c1} 0.173 & \cellcolor{user_c4!79!user_c3} 0.969 & \cellcolor{user_c4} 1.000 & \cellcolor{user_c1} 0.173 & \cellcolor{user_c1} 0.173 & \cellcolor{user_c4} 1.000 & \cellcolor{user_c4} 1.000 & \cellcolor{user_c4!79!user_c3} 0.969 & \cellcolor{user_c1} 0.173 & \cellcolor{user_c4!12!user_c3} 0.868 & \cellcolor{user_c4!11!user_c3} 0.867 \\
\midrule
\textbf{Mean} & \cellcolor{user_c1} 0.343 & \cellcolor{user_c1} 0.402 & \cellcolor{user_c1} 0.404 & \cellcolor{user_c4!75!user_c3} 0.962 & \cellcolor{user_c1} 0.343 & \cellcolor{user_c4!75!user_c3} 0.962 & \cellcolor{user_c4!99!user_c3} 0.999 & \cellcolor{user_c1} 0.404 & \cellcolor{user_c1} 0.404 & \cellcolor{user_c4!93!user_c3} 0.989 & \cellcolor{user_c4!93!user_c3} 0.989 & \cellcolor{user_c4!75!user_c3} 0.963 & \cellcolor{user_c1} 0.343 & \cellcolor{user_c3!83!user_c2} 0.816 & \cellcolor{user_c3!81!user_c2} 0.812 \\
\bottomrule
\end{tabular}
\end{tiny}
\end{table}

%% file: appendix_additional_info.tex
\section{Additional dataset information}

\subsection{Patterns}
\label{app:pattern_stats}

\paragraph{Pattern statistics.} The KRFP fingerprint of~\cite{klekota2008chemical} contains 4,860 patterns, drawn from substructures enriched in biologically active compounds. 1,655 of them (34.1\%) don't contain rings, and therefore are eligible for inclusion. Each whitebox requires manual weight construction and validation against more than five million negatives. This construction and validation effort limited our selection to 14 motifs. We did not sample the admissible set for diversity; we took patterns that qualified and varied their size deliberately. This coverage is not complete: no motif contains an alcohol, a primary amine, a nitrile or a urea. Ring-containing motifs are excluded by construction. %Commonness of the motifs varies: Pattern 3 matches 24 molecules in the filtered corpus and Pattern 5 matches 34, while Pattern 10 is present in more than half of it (Table \ref{fig:app_prevalence}).

%\begin{table}
%\centering
%\caption{Prevalence of the 14 WOMBAT motifs in the filtered PubChem corpus. For each pattern, we report the number and percentage of molecules matching the relaxed SMARTS formulation.} %\label{fig:app_prevalence}
%\begin{tabular}{ccc}
%\toprule
%Pattern No. & Number of positives & \% \\
%\toprule

%1 & 7,674 & 0.0073 \\
%2 & 248 & 0.0002 \\ 
%3 & 24 & \(<\)0.0001 \\ 
%4 & 1,150 & 0.0011 \\ 
%5 & 34 & \(<\)0.0001 \\ 
%6 & 13,520,614 & 12.7743 \\ 
%7 & 887,033 & 0.8381 \\ 
%8 & 3,380,223 & 3.1936 \\ 
%9 & 660,305 & 0.6239 \\ 
%10 & 59,713,182 & 56.4172 \\ 
%11 & 22,246,330 & 21.0184 \\ 
%12 & 1,972,142 & 1.8633 \\ 
%13 & 286,324 & 0.2705 \\ 
%14 & 894,777 & 0.8454 \\ 
% \bottomrule
%\end{tabular}
%\end{table}

\begin{table}
\centering
\caption{Fractions of molecules for which the original KRFP SMARTS and our relaxed SMARTS disagree either in motif detection or in the identified atom set.} \label{tab:app_disagreements}
\begin{tabular}{cccc}
\toprule
Pattern No. & Disagreements & Total & \% \\
\toprule

1 &	1 &	7,674 &	0.01 \\
2 &	184 &	248 & 74.19 \\
3 &	0 &	24 & 0.00 \\ 
4 &	2 &1,146 & 0.17 \\
5 &	10 &	34 & 29.41 \\
6 &	591 &	10,000 & 5.91 \\
7 &	0 &	10,000 & 0.00 \\
8 &	106 & 10,000 & 1.06 \\
9 &	5 &10,000 &	0.05 \\
10 &	0 &	10,000 & 0.00 \\
11 &	0 &	10,000 & 0.00 \\
12 &	0 &	10,000 & 0.00 \\
13 &	0 &	10,000 & 0.00 \\
14 &	0 &	10,000 & 0.00 \\
\midrule 
Total &	899 & 99,126 &	0.91 \\
 \bottomrule
\end{tabular}
\end{table}

\paragraph{Pattern SMARTS.} \label{par:smarts_comp} Here, we present KRFP SMARTS and their relaxed versions. Note that some cases do not require relaxation, as in simple cases node identity requirements can be fulfilled by neighbour counting. This relaxation allows our whiteboxes to be fully reliable, while recognizing complex patterns. See Table \ref{tab:app_disagreements} for analysis of disagreements between the original and relaxed SMARTS.

{
\NewDocumentCommand{\smarts}{v}{%
  \texttt{#1}%
}

\begin{tiny}
    \begin{enumerate}
    % Pattern 1
    \item \smarts{[!#1][CH]([NH]C(=O)[!#1])C(Cl)(Cl)Cl} \(\rightarrow\) \smarts{[CHD3]([NH][CD3](=O))C(Cl)(Cl)Cl} 
    % Pattern 2 
    \item \smarts{[!#1][CH]([!#1])C(=O)[CH2]C([CH3])([CH3])[CH2]C([!#1])([!#1])[!#1]} \(\rightarrow\) \smarts{[CHD3]C(=O)[CH2]C([CH3])([CH3])[CH2][CD4]} 

    % Pattern 3
    \item \smarts{[!#1][CH]([!#1])C(=O)[NH][NH]C(=O)[CH2][CH2][CH2][CH2][CH2][CH2][CH3]} \(\rightarrow\) \smarts{[CHD3]C(=O)[N;H1,H2][N;H1,H2]C(=O)[CH2][CH2][CH2][CH2][CH2][CH2][CH3]}

    % Pattern 4
    \item \smarts{[!#1][CH]([!#1])C(=O)O[CH2][CH2]N([CH2][CH3])[CH2][CH3]} \(\rightarrow\) \smarts{[CHD3]C(=O)O[CH2][CH2]N([CH2][CH3])[CH2][CH3]}  

    % Pattern 5 
    \item \smarts{[!#1][CH]([CH2]C(=N[NH]C(=O)[!#1])[!#1])C(=O)[OH]} \(\rightarrow\) \smarts{[CHD3]([CH2][CD3](=N[N;H1,H2][CD3](=O)))C(=O)[OH]}   

    % Pattern 6
    \item \smarts{[!#1][CH]([!#1])[CH]([!#1])[!#1]} \(\rightarrow\) \smarts{[CHD3][CHD3]}  

    % Pattern 7
    \item \smarts{[!#1][CH]([!#1])[CH2][CH2][CH3]} \(\rightarrow\) \smarts{[CHD3][CH2][CH2][CH3]} 

    % Pattern 8 
    \item \smarts{[!#1][CH]([!#1])[CH2]C(=O)[!#1]} \(\rightarrow\) \smarts{[CHD3][CH2][CD3](=O)} 

    % Pattern 9
    \item \smarts{[!#1][CH]([!#1])S(=O)(=O)[!#1]} \(\rightarrow\) \smarts{[CHD3][SD4](=O)(=O)}

    % Pattern 10
    \item \smarts{[!#1][CH]([!#1])[!#1]} \(\rightarrow\) \smarts{[!#1][CH]([!#1])[!#1]} 

    % Pattern 11
    \item \smarts{[!#1][CH]([!#1])[CH3]} \(\rightarrow\) \smarts{[!#1][CH]([!#1])[CH3]} 

    % Pattern 12
    \item \smarts{[!#1][CH]=O} \(\rightarrow\) \smarts{[!#1][CH]=O} 

    % Pattern 13
    \item \smarts{[!#1][N+]([CH3])([CH3])[CH3]} \(\rightarrow\) \smarts{[!#1][N+]([CH3])([CH3])[CH3]} 

    % Pattern 14 
    \item \smarts{[!#1][CH2]Cl} \(\rightarrow\) \smarts{[!#1][CH2]Cl} 
\end{enumerate}

\end{tiny}
}

\subsection{Validation molecules}

In this section, we present additional statistics and details regarding our validation datasets. Figure \ref{fig:tversky_big} illustrates the structural similarity distributions relative to ECFP4 motif fingerprints, while Table \ref{tab:ds_sizes} details the validation dataset sizes for each motif.

\input{figures/tanimoto_tversky_comp_big}
\input{tables/validation_ds_sizes}

%% file: figures/tanimoto_tversky_comp_big.tex
\begin{figure}
    \centering
    \begin{minipage}{0.4\linewidth}
    \includegraphics[width=\linewidth]{files/big_distributions/tversky_similarity_distributions.pdf}
    \end{minipage}%
    \hfill
    \begin{minipage}{0.4\linewidth}
    \includegraphics[width=\linewidth]{files/big_distributions/tanimoto_similarity_distributions.pdf}
    \end{minipage}%
    \caption{Structural similarity distributions relative to ECFP4 motif fingerprints. Similarity scores (Tversky and Tanimoto) for molecules used to all of our patterns.}
    \label{fig:tversky_big} 
\end{figure}

%% file: tables/validation_ds_sizes.tex
\begin{table}[t]
\scriptsize
\centering
\caption{In this table, we showcase the sizes of the validation sets for each of our patterns. Molecule counts under the ''All molecules'' field show the factual size of our validation dataset that was prepared by us by filtering PubChem. Unfortunately, a small subset of SMILES strings was corrupted during preprocessing; as such, the ''Valid molecules'' field shows on how many molecules each of the whiteboxes was actually tested. Note that there is very little difference between the two subsets; however, we've chosen to compare these numbers for transparency.}
\label{tab:ds_sizes}
\begin{tabular}{ccccccc}
\toprule
\multirow{2}{*}{\textbf{Pattern No.}} & \multicolumn{3}{c}{\textbf{All molecules}} & \multicolumn{3}{c}{\textbf{Valid molecules}}  \\
\cmidrule(lr){2-4} \cmidrule(lr){5-7}
& \textbf{Positives} & \textbf{Tversky} & \textbf{Negatives} & \textbf{Positives} & \textbf{Tversky} & \textbf{Negatives} \\
\toprule 
1 & 7,674 & 491,447 & 4,976,465 & 7,674 & 491,447 & 4,976,452 \\
2 & 248 & 118,672 & 4,994,354 & 248 & 118,672 & 4,994,352 \\
3 & 24 & 1,568,016 & 4,925,632 & 24 & 1,568,016 & 4,925,626 \\
4 & 1,146 & 569,455 & 4,972,881 & 1,146 & 569,455 & 4,972,877 \\
5 & 34 & 70,609 & 4,996,581 & 34 & 70,609 & 4,996,571 \\
6 & 5,000,000 & 5,000,000 & 4,729,159 & 4,999,997 & 4,999,997 & 4,729,155 \\
7 & 886,073 & 4,346,189 & 4,792,735 & 886,073 & 4,346,189 & 4,792,729 \\
8 & 3,376,760 & 2,866,888 & 4,859,902 & 3,376,759 & 2,866,887 & 4,859,894 \\
9 & 659,042 & 1,685,436 & 4,919,363 & 659,042 & 1,685,436 & 4,919,355 \\
10 & 5,000,000 & 13,491 & 4,998,512 & 4,999,998 & 13,491  & 4,998,501 \\
11 & 5,000,000 & 241,794 & 4,985,482 & 4,999,995 & 241,794 & 4,985,478 \\
12 & 1,970,537 & 717,424 & 4,965,399 & 1,970,527 & 717,424 & 4,965,389 \\
13 & 286,098 & 127,085 & 4,993,947 & 286,098 & 127,085 & 4,993,942 \\
14 & 894,517 & 3,950,564 & 4,812,155 & 894,517 & 3,950,564 & 4,812,149 \\
 \bottomrule   
\end{tabular}    

\end{table}